\documentclass{article}

\usepackage{microtype}
\usepackage{graphicx}
\usepackage{subcaption}
\usepackage{booktabs}
\usepackage{hyperref}
\usepackage{xcolor}
\usepackage{multirow}
\usepackage{array}
\usepackage{pifont}

\usepackage[accepted]{icml2026}
\usepackage{amsmath}
\usepackage{amssymb}
\usepackage{mathtools}
\usepackage{amsthm}
\usepackage{bm}

\usepackage{algorithm}
\usepackage{algorithmic}
\usepackage[capitalize,noabbrev]{cleveref}
\newcommand{\contribsep}{\unskip\nobreak\hspace{0.6em}%
  \ensuremath{\blacktriangleright}\hspace{0.6em}\ignorespaces}
\theoremstyle{plain}
\newtheorem{theorem}{Theorem}[section]
\newtheorem{proposition}[theorem]{Proposition}
\newtheorem{lemma}[theorem]{Lemma}
\newtheorem{corollary}[theorem]{Corollary}
\theoremstyle{definition}
\newtheorem{definition}[theorem]{Definition}

\theoremstyle{remark}

\ifodd 1

\newcommand{\com}[1]{\textbf{\color{red}(comment: #1)}} %comment of the text
\newcommand{\res}[1]{\textbf{\color{magenta}(RESPONSE: #1)}} %response to comments
\else

\newcommand{\com}[1]{}
\newcommand{\res}[1]{}
\fi
\newcommand{\R}{\mathbb{R}}
\newcommand{\E}{\mathbb{E}}

\newcommand{\calM}{\mathcal{M}}
\newcommand{\calV}{\mathcal{V}}
\newcommand{\calH}{\mathcal{H}}
\newcommand{\Id}{\mathbf{I}}
\newcommand{\tr}{\mathrm{tr}}
\newcommand{\vecop}{\mathrm{vec}}
\newcommand{\rank}{\mathrm{rank}}
\newcommand{\projA}{\Pi_A}
\newcommand{\projB}{\Pi_B}
\newcommand{\projperpA}{\Pi_A^{\perp}}
\newcommand{\projperpB}{\Pi_B^{\perp}}
\newcommand{\pinv}{^{\dagger}} % Moore--Penrose pseudoinverse
\newcommand{\gl}{\mathrm{GL}}
\newcommand{\Retr}{\mathrm{Retr}}
\newcommand{\cmark}{\ding{51}}
\newcommand{\xmark}{\ding{55}}

\usepackage{enumitem}
\ifodd 0

\newcommand{\com}[1]{\textbf{\color{red}(comment: #1)}} %comment of the text
\newcommand{\res}[1]{\textbf{\color{magenta}(RESPONSE: #1)}} %response to comments
\else
\fi

\icmltitlerunning{PRISM: Gauge-Invariant Tangent-Space Differentially Private LoRA}

\begin{document}

\twocolumn[
\icmltitle{PRISM: Gauge-Invariant Tangent-Space Differentially Private LoRA}

\begin{icmlauthorlist}
\icmlauthor{Shihao Wang}{osu}
\icmlauthor{Xueru Zhang}{osu}
\end{icmlauthorlist}

\icmlaffiliation{osu}{Department of Computer Science and Engineering,
The Ohio State University, Columbus, OH, USA}

\icmlcorrespondingauthor{Shihao Wang}{wang.17571@osu.edu}
\icmlcorrespondingauthor{Xueru Zhang}{zhang.12807@osu.edu}

\icmlkeywords{differential privacy, LoRA, parameter-efficient fine-tuning, invariance, matrix manifolds}

\vskip 0.3in
]

\printAffiliationsAndNotice{}  

%%%%%%%%%%%%%%%%%%%%%%%%%%%%%%%%%%%%%%%%%%%%%%%%%%%%%%%
% ABSTRACT
%%%%%%%%%%%%%%%%%%%%%%%%%%%%%%%%%%%%%%%%%%%%%%%%%%%%%%%

\begin{abstract}
Applying differential privacy (DP) via DP-SGD to Low-Rank Adaptation (LoRA) is a natural approach for privacy-preserving fine-tuning. However, LoRA's low-rank parameterization poses a fundamental challenge. In LoRA, each trainable update is represented as a low-rank matrix $Z=AB^\top$, but this factorization is inherently \emph{non-identifiable}: many factor pairs $(A,B)$ represent the same update $Z$. As a result, applying DP-SGD directly to the factors induces \emph{gauge-dependent} perturbations on $Z$, and we show that this naive DP-LoRA can lead to unbounded noise amplification. We propose \textbf{PRISM}, an intrinsic DP mechanism for LoRA that is gauge invariant by construction, avoids bilinear noise amplification, and admits an efficient low-dimensional noise sampler. Moreover, PRISM yields a closed-form characterization of the effective intrinsic noise induced on $Z$, enabling stable privacy--utility trade-offs through bounded, gauge-invariant perturbations. We establish standard  $(\varepsilon,\delta)$-DP guarantees for PRISM and introduce a DP-aware, gauge-invariant adaptive update rule that prevents adaptive optimization from amplifying injected privacy noise, improving numerical stability in practice.

\end{abstract}

%%%%%%%%%%%%%%%%%%%%%%%%%%%%%%%%%%%%%%%%%%%%%%%%%%%%%%%
% 1 INTRODUCTION
%%%%%%%%%%%%%%%%%%%%%%%%%%%%%%%%%%%%%%%%%%%%%%%%%%%%%%%

\section{Introduction}\label{sec:intro}

Foundation models are routinely adapted to domain-specific tasks using \emph{private} corpora. % (e.g., clinical notes, customer support logs). 
Because full-model fine-tuning is costly, practitioners often adopt parameter-efficient fine-tuning (PEFT) methods, which update only a small subset of parameters while keeping the backbone frozen. Common methods include adapters \cite{houlsby2019adapter,hu2023llm_adapters}, prefix/prompt tuning \cite{li2021prefix,lester2021prompt}, bias-only updates \cite{zaken2021bitfit}, and lightweight reparameterizations \cite{liu2022ia3}. Among these, Low-Rank Adaptation (LoRA) \cite{hu2022lora} has emerged as a dominant method due to its drop-in compatibility with existing linear layers, strong performance under low-precision training with a quantized backbone~\cite{dettmers2023qlora,frantar2023loftq}, and the non-identifiability induced by its low-rank parameterization. Formally, LoRA adapts a frozen pretrained weight matrix $W_0\in\R^{m\times n}$ by adding a low-rank update $Z$:
\begin{align}\label{eq:lora}
W = W_0 + Z, \qquad Z = AB^\top
\end{align}
where $A \in\R^{m\times r}$, $ B\in\R^{n\times r}$ with rank $r\ll \min\{m,n\}.$ Rather than updating $W_0$ directly, optimization proceeds over $(A,B)$, which implicitly define the intrinsic update $Z$ applied to the backbone. This low-rank factorization substantially reduces the number of trainable parameters.

Despite its efficiency, PEFT on sensitive data raises significant privacy concerns. Prior work has shown that trained models can leak information about training data through membership, inversion, or extraction attacks~\cite{fredrikson2015modelinversion,shokri2017membership,ganju2018property,carlini2019secret,carlini2021extracting}.
These risks are often exacerbated in PEFT settings, where fine-tuning datasets are small, domain-specific, and contain rare or uniquely identifying records.

%inference~\cite{shokri2017membership}, inversion or property inference~\cite{fredrikson2015modelinversion,ganju2018property}, and verbatim extraction of memorized content~\cite{carlini2019secret,carlini2021extracting}. These risks are particularly pronounced in PEFT settings, where fine-tuning datasets are often small, domain-specific, and contain rare or uniquely identifying records.%Prior work has shown that models can leak information about their training data: membership inference attacks test whether specific records were used during training \cite{shokri2017membership}, model inversion and property inference attacks reveal sensitive attributes of individuals or populations \cite{fredrikson2015modelinversion,ganju2018property}, and extraction attacks can recover memorized text verbatim \cite{carlini2019secret,carlini2021extracting}. These risks are often exacerbated in PEFT scenarios, where fine-tuning datasets tend to be small, domain-specific, and may contain rare or uniquely identifying samples.  

Various approaches have been proposed to mitigate privacy risks in large models~\cite{bourtoule2021machineunlearning}. We focus on differential privacy (DP)~\cite{dwork2006calibrating,dwork2014algorithmic}, which provides an attack-agnostic, \emph{ex ante} guarantee by bounding the influence of any individual record and remaining robust to arbitrary post-processing. In practice, DP is most commonly instantiated via DP-SGD~\cite{abadi2016deep}, which clips per-example gradients and injects calibrated Gaussian noise before each update. %with privacy loss tracked using modern accountants such as R\'enyi DP/moments \cite{mironov2017renyi,wang2019subsampled} or privacy loss random variables (PRV) \cite{gopi2021numerical}. This mechanism provides strong formal guarantees and underpins most practical private training pipelines \cite{papernot2017pate,kairouz2021advances}. 

A natural approach to obtaining DP in LoRA fine-tuning is to apply DP-SGD directly to the low-rank factors 
$(A,B)$ \cite{yu2022dpftlm,liu2023dplora,xu2025dpfedlora}. Some variants further freeze one of the two factors to improve numerical stability~\cite{sun2024improvinglora}. While straightforward, this strategy is fundamentally misaligned with the structure of LoRA and leads to ill-defined private updates.

The core issue is that DP-SGD is defined relative to a \emph{parameterization}, whereas in LoRA the factors $(A,B)$ are only an auxiliary representation of the intrinsic update applied to the frozen backbone. It is the effective update $Z=AB^\top$, rather than the factors themselves, that ultimately determines model behavior. As a consequence, naively applying DP-SGD in factor space (i) induces \emph{gauge-dependent} perturbations on the intrinsic update $Z$, meaning that different factor pairs corresponding to the same intrinsic update can induce significantly different clipping and noise effects; (ii) introduces spurious higher-order noise terms caused by independently noising the two factors, resulting in quadratic noise effects that do not arise in standard DP-SGD on linear parameters; and (iii) interacts poorly with optimization dynamics, where adaptive preconditioning can amplify stochasticity and lead to numerical instability. We discuss these issues in detail in Section~\ref{sec:problem}.

These issues motivate a key design principle: the randomized DP mechanism should operate on the space of \emph{intrinsic} model updates, rather than on a gauge-redundant factorization. Based on this, we propose the \textbf{P}rojected \textbf{R}iemannian \textbf{I}nvariant \textbf{S}ubspace \textbf{M}echanism (\textbf{PRISM}), which performs DP-SGD directly on the rank-$r$ manifold of LoRA updates. Specifically, PRISM projects per-example gradients to the tangent space $\Delta Z\in T_Z\mathcal{M}_r$, applies global Frobenius-norm clipping and isotropic tangent Gaussian noise in this intrinsic space, and retracts back to rank $r$. By aligning the DP mechanism with the intrinsic update $Z$, PRISM ensures that the \textit{effective intrinsic noise} on $Z$ is deterministic and independent of the particular factorization $(A,B)$. Moreover, operating in intrinsic coordinates keeps updates additive and avoids the spurious bilinear second-order noise terms induced by independently noising the factors. Importantly, PRISM achieves these guarantees with LoRA-scale computational cost. Table~\ref{tab:comparison} compares PRISM with existing DP-LoRA design choices.

\begin{table}[t]
  \caption{\textbf{Comparison with other DP-LoRA design choices.} %\emph{One-side} updates $B$ only (freezing $A$). $\mathcal{E}_Z$ is the effective intrinsic noise. The comparison evaluates three desiderata: 
  We evaluate each variant against three desiderata: (a) gauge-invariant randomized mechanism, (b) additive perturbations on 
$Z$ (i.e., no bilinear DP term), and (c) LoRA-scale efficiency. The naive factor-space variant violates (a) and (b), yielding unbounded effective intrinsic noise $\mathcal{E}_Z$. The \emph{one-sided} variant, which updates $B$ only  while freezing $A$, satisfies (b) and (c) but not (a). PRISM satisfies all three and admits a closed-form expression for $\mathcal{E}_Z$.
}
  \label{tab:comparison}
  \begin{center}
    \begin{small}
      \begin{sc}
        \setlength{\tabcolsep}{2.9pt}
        \renewcommand{\arraystretch}{1.12}
        \begin{tabular}{@{}lccccc@{}}
          \toprule
          Method & Params & $\mathcal{E}_Z$ & {\normalfont(a)} & {\normalfont(b)} & {\normalfont(c)} \\
          \midrule
          DP-LoRA  & $(m{+}n)r$ & unbounded & \xmark & \xmark & \cmark \\
          One-side  & $nr$ & $(\sigma C/b)\sqrt{n}\,\|A\|_F$ & \xmark & \cmark & \cmark \\
          \textbf{PRISM } & $(m{+}n)r$ & $(\sigma C/b)\sqrt{r(m{+}n{-}r)}$ & \cmark & \cmark & \cmark \\
          \bottomrule
        \end{tabular}
      \end{sc}
    \end{small}
  \end{center}
  \vskip -0.1in
\end{table}

%\paragraph{Contributions.}
Our contributions can be summarized below.
\begin{itemize}[leftmargin=*,itemsep=-0.05cm,topsep=-0.05cm]
    \item We identify issues with factor-space DP-LoRA and show that they can lead to (i) gauge-dependent clipping and noise injection, (ii) unbounded amplification of intrinsic noise, and (iii) spurious bilinear second-order noise terms arising from independently noising the two factors.
    
   % formalize \textit{effective intrinsic noise} $\mathcal{E}_Z$ (Definition~\ref{def:eff_noise}) and show that factor-space DP-LoRA can induce (i) gauge-dependent clipping/noising, (ii) unbounded intrinsic noise amplification under benign rescalings, and (iii) a bilinear second-order noise term from independently noising both factors.
    \item We propose PRISM, a gauge-invariant DP mechanism that projects per-example gradients on rank-$r$ tangent space, applies \emph{global} intrinsic clipping across all LoRA modules, injects isotropic tangent noise using an $O((m+n)r^2)$ sampler, and retracts updates back to rank $r$.
    % \item We develop a DP-aware gauge-invariant adaptive update rule that floors and condition-clamps rank-space preconditioners and clips post-preconditioned updates, mitigating optimizer-induced noise amplification while preserving LoRA-scale efficiency.
    \item We develop a DP-aware gauge-invariant adaptive update rule that floors rank-space preconditioners based on the DP noise level, mitigating optimizer-induced noise amplification while preserving LoRA-scale efficiency.
    \item We provide theoretical guarantees showing that PRISM produces gauge-invariant updates and noise distributions, satisfies $(\varepsilon,\delta)$-DP under subsampled Gaussian accounting, and injects tangent noise with gauge-invariant covariance and energy proportional to $r(m+n-r)$. The project
code is available at
\href{https://github.com/osu-srml/PRISM-DP-LoRA}
{\texttt{github.com/osu-srml/PRISM-DP-LoRA}}.

\end{itemize}
 %Appendix~\ref{app:all}.

% \paragraph{Conflict of Interest Disclosure.}
% The authors declare no financial conflicts of interest related to this work.
\iffalse
\contribsep\textbf{DP gauge issue and effective intrinsic noise.}
We formalize effective intrinsic noise $\mathcal{E}_Z$ (Definition~\ref{def:eff_noise}) and show it is gauge dependent under naive factor-space DP, with unbounded amplification under benign rescalings (Corollary~\ref{cor:unbounded}).
We also isolate the bilinear second-order noise term induced by noising both factors.
\contribsep\textbf{Method (PRISM).}
We design a gauge-invariant DP mechanism that clips/noises projected tangent updates and retracts to rank $r$, with LoRA-scale cost.
\contribsep\textbf{Theory.}
We prove: (i) PRISM's update and noise distribution are gauge invariant; (ii) naive factor-space DP can amplify intrinsic noise without bound; (iii) the projected Gaussian has gauge-invariant covariance with energy proportional to $r(m+n-r)$; and (iv) PRISM attains standard $(\varepsilon,\delta)$-DP via subsampled Gaussian accounting plus post-processing.
\fi 

\section{Problem Formulation}
\label{sec:problem}
%\label{sec:theory}

We study differentially private (DP) parameter-efficient fine-tuning using LoRA.
Given a frozen weight matrix $W_0\in\R^{m\times n}$, LoRA learns a rank-$r$ update $Z$ such that
$W=W_0+Z$ with $Z=AB^\top$ (Eq.~\eqref{eq:lora}), by minimizing the empirical risk over a private dataset
$\displaystyle \mathcal{D}=\{x_i\}_{i=1}^N$:
$$\textstyle \min F(A,B)\triangleq \frac{1}{N}\sum_{i=1}^N \ell_i(W_0+A B^\top).$$ 
Our goal is to design a randomized training procedure whose released adapter $Z$ satisfies
$(\varepsilon,\delta)$-DP \cite{dwork2014algorithmic} with respect to $\mathcal{D}$, while preserving the utility of LoRA.

Formally, an algorithm $\mathcal{M}$ satisfies
$(\varepsilon,\delta)$-DP if, for any adjacent datasets $\mathcal{D},\mathcal{D}'$ and any measurable event $\mathcal{S}$,
\begin{equation}
\Pr[\mathcal{M}(\mathcal{D})\in\mathcal{S}]
\;\le\;
e^\varepsilon \Pr[\mathcal{M}(\mathcal{D}')\in\mathcal{S}] + \delta,
\label{eq:dp}
\end{equation}
A standard approach for achieving DP in model training is DP-SGD \cite{abadi2016deep}. At each iteration, DP-SGD computes per-example gradients \(g_i=\nabla \ell_i \), clips them to a prescribed norm \(C\) and aggregates them with added Gaussian noise \(\xi \sim \mathcal{N}(0,I)\):
\begin{equation}
\tilde g_i = \frac{g_i}{\max\{1, \lVert g_i\rVert_2/C\}},\qquad
\widehat g = \frac{1}{b}\sum_{i=1}^b \tilde g_i + \frac{\sigma C}{b}\,\xi
\label{eq:dpsgd}
\end{equation}
where $b$ denotes the batch size and $\sigma$ is the noise multiplier. The noisy $\widehat g $ is then used to perform an optimizer update.

A dominant approach in DP-LoRA applies DP-SGD directly to the factor parameters $(A,B)$
\cite{yu2022dpftlm,liu2023dplora,xu2025dpfedlora}. Specifically, let \(g_{A,i}, g_{B,i} \) denote the per-example gradients with respect to $A$ and $B$, respectively. This approach applies \eqref{eq:dpsgd} to the concatenated gradient
$g_i=(g_{A,i},g_{B,i})$ with $\|g_i\|_2^2=\|g_{A,i}\|_F^2+\|g_{B,i}\|_F^2$. Some works further consider variants with one-sided training that updates only one LoRA factor while freezing the other \cite{sun2024improvinglora}. However, enforcing DP on the factors $(A,B)$ is fundamentally misaligned with the effective update $Z=AB^\top$ that governs model behavior, as we detail below.
%We isolate three pathologies showing why \emph{factor-space} DP is misaligned with the intrinsic parameter $Z$ and fragile under adaptive optimization.

\textbf{Issue I: Factor-space DP violates LoRA gauge symmetry.}
LoRA factorization is \textit{non-identifiable} \cite{hu2022lora}: for any invertible $R\in\mathrm{GL}(r)$, the factor pairs
$(A,B)$ and $(AR,BR^{-\top})$ induce the same intrinsic update $Z$. Under such a gauge transformation $(A,B)\mapsto(AR,BR^{-\top})$, the corresponding per-example gradients transform as
$$g'_{A,i}=g_{A,i}R^{-\top},~~~~~~g'_{B,i}=g_{B,i}R.$$
As a result, the clipping norm, and hence the clipping coefficient, used in DP-SGD depend on the particular factorization chosen to represent $Z$. For example, under the simple rescaling gauge  $(A,B)\mapsto(cA,c^{-1}B)$, 
\begin{equation}
\|g'_{A,i}\|_F^2+\|g'_{B,i}\|_F^2 \;=\; c^{-2}\|g_{A,i}\|_F^2 + c^{2}\|g_{B,i}\|_F^2.
\label{eq:gauge_norm}
\end{equation}
which can vary arbitrarily with $c$. Consequently, the distribution of the clipped-and-noised update produced in factor space is gauge dependent, and the induced intrinsic update is not determined by $Z$ alone (Appendix~\ref{app:general_gauge_amp}\footnote{Throughout the paper, we state the main claims in the main text and defer detailed analysis and proofs to the appendix, with explicit appendix references provided after each claim.}). 

This gauge dependence propagates from gradient clipping to the resulting update increments. Even when a per-example increment $(\Delta A_i,\Delta B_i)$ represents a fixed intrinsic direction $\Delta Z_i=\Delta A_i B^\top + A\Delta B_i^\top$, the same intrinsic direction admits gauge-related representatives $(\Delta A_i',\Delta B_i')=(\Delta A_i R,\Delta B_i R^{-\top})$ for any $R\in\gl(r)$. In general,
%A related issue arises at the level of factor-space update increments. Even when a per-example increment $(\Delta A_i,\Delta B_i)$ represents a fixed intrinsic direction $\Delta Z_i=\Delta A_i B^\top + A\Delta B_i^\top$, the same intrinsic direction admits gauge-related representatives $(\Delta A_i',\Delta B_i')=(\Delta A_i R,\Delta B_i R^{-\top})$ for any $R\in\gl(r)$. In general,
\begin{equation}
\label{eq:gauge_dep_clip}
\|\Delta A'_{i}\|_F^2+\|\Delta B'_{i}\|_F^2\neq \|\Delta A_{i}\|_F^2+\|\Delta B_{i}\|_F^2,
\end{equation}
so any mechanism that clips and perturbs based on the Euclidean norms in factor space is inherently gauge dependent.

Formally, let $\Delta Z_{\text{fac}}(A,B)$ denote the intrinsic update induced by a single factor-space DP-SGD step. A gauge-respecting mechanism would require 
\begin{equation}\label{eq:gauge_invariance_req}
\Delta Z_{\text{fac}}(A,B)\overset{d}{=}
  \Delta Z_{\text{fac}}(AR,BR^{-\top}), ~~\forall R\in\operatorname{GL}(r),
\end{equation}
a condition already violated by Eq.~\eqref{eq:gauge_norm}. Importantly, this issue is distinct from, and not resolved by, deterministic transformation-invariant optimizers for LoRA \cite{yen2025rite}, as DP requires invariance of the randomized clipping and noising procedure itself.

\textbf{Issue II: Noising both factors injects bilinear and gauge-amplified intrinsic noise.}
When DP noise is injected into both factors, the intrinsic update inevitably contains a second-order noise term. Consider a single update step $(A,B)\leftarrow(A,B)+(\Delta A,\Delta B)$, and let $\xi_A,\xi_B$ denote the
Gaussian perturbations added by DP-SGD, scaled by the step size $\eta$. The induced intrinsic update then satisfies
\begin{align}\label{eq:bilinear_noise_prob}
Z^+&=(A+\Delta A+\eta\xi_A)\,(B+\Delta B+\eta\xi_B)^{\top} \nonumber  \\
&  = Z + \Delta Z
     + \eta\bigl(\xi_A B^{\top} + A\xi_B^{\top}\bigr)
     + \eta^{2}\,\xi_A\xi_B^{\top}.
\end{align}
The final term $\eta^2\xi_A\xi_B^\top$ arises from multiplying independently noised factors and cannot be produced by any additive noise mechanism applied directly to $Z$ (Appendix~\ref{app:second_order_stats}).

Even if this second-order term is ignored, the first-order intrinsic noise remains problematic, as its magnitude depends on the norms of the factors (see Proposition~\ref{prop:amp}, Eq.~\eqref{eq:naive_energy}).
Under rescaling gauge $(A,B)\mapsto(cA,c^{-1}B)$, 
Eq.~\eqref{eq:naive_energy} becomes $\tau^2\big(m c^{-2}\|B\|_F^2+n c^{2}\|A\|_F^2\big)$, which can grow without bound as
$c\to 0$ or $c\to\infty$. One-sided variants that freeze one LoRA factor \cite{sun2024improvinglora} can suppress parts of
Eq.~\eqref{eq:bilinear_noise_prob}, but do not eliminate the dependence of intrinsic noise energy on a
representation-dependent scale, since the frozen factor still determines $\|A\|_F$ or $\|B\|_F$.
This motivates enforcing DP directly in the intrinsic space of $Z$ (or its tangent space),
rather than in factor coordinates.

To formalize the scale of randomized perturbations on $Z$, we introduce the following notion.

\begin{definition}[Effective intrinsic noise]
\label{def:eff_noise}
For a random intrinsic perturbation $\mathcal{N}_{Z}$, define $\mathcal{E}_{Z}\triangleq\sqrt{\E\|\mathcal{N}_{Z}\|_F^2}$.

\end{definition}

\begin{proposition}[Intrinsic noise energy under factor noising]
\label{prop:amp}
Let $\xi_{A}\in\R^{m\times r}$ and $\xi_{B}\in\R^{n\times r}$ be independent with i.i.d.\ entries $\mathcal{N}(0,\tau^2)$, and define
$\mathcal{N}_{Z} = \xi_{A} B^\top + A\xi_{B}^\top + \xi_{A}\xi_{B}^\top$.
Then the first-order term $\mathcal{N}_{Z}^{(1)}\triangleq \xi_{A} B^\top + A\xi_{B}^\top$ satisfies
\begin{equation}
\label{eq:naive_energy}
\E\|\mathcal{N}_{Z}^{(1)}\|_F^2
= \tau^2\Bigl(m\|B\|_F^2+n\|A\|_F^2\Bigr),
\end{equation}
while the bilinear term $\mathcal{N}_{Z}^{(2)}\triangleq \xi_{A}\xi_{B}^\top$ satisfies
\begin{equation}
\label{eq:naive_second}
\E\|\mathcal{N}_{Z}^{(2)}\|_F^2
= mnr\,\tau^4.
\end{equation}
\end{proposition}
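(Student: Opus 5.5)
The plan is a direct second-moment computation, expanding the squared Frobenius norm and using independence and zero mean of the Gaussian factors. For the first-order term, I write
\[
\|\mathcal{N}_Z^{(1)}\|_F^2=\|\xi_A B^\top\|_F^2+\|A\xi_B^\top\|_F^2+2\,\tr\!\bigl(B\xi_A^\top A\xi_B^\top\bigr).
\]
The cross term has zero expectation. Conditioning on $\xi_A$, it is linear in $\xi_B$, and $\E\xi_B=0$. Independence of $\xi_A$ and $\xi_B$ then makes the product expectation vanish.

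For the diagonal terms I use the identity $\E[\xi_A^\top\xi_A]=m\tau^2 I_r$. This holds because entry $(k,l)$ equals $\sum_{i=1}^m\E[\xi_{ik}\xi_{il}]=m\tau^2\delta_{kl}$. Hence
\[
\E\|\xi_AB^\top\|_F^2=\E\,\tr\!\bigl(B\xi_A^\top\xi_AB^\top\bigr)=m\tau^2\,\tr(BB^\top)=m\tau^2\|B\|_F^2.
\]
Symmetrically, $\E[\xi_B^\top\xi_B]=n\tau^2 I_r$ gives
\[
\E\|A\xi_B^\top\|_F^2=\E\,\tr\!\bigl(A\xi_B^\top\xi_BA^\top\bigr)=n\tau^2\|A\|_F^2.
\]
Summing these gives Eq.~\eqref{eq:naive_energy}.

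For the bilinear term, I condition on $\xi_A$ and apply the same identity to $\xi_B$:
\[
\E\bigl[\|\xi_A\xi_B^\top\|_F^2\mid\xi_A\bigr]=\tr\!\bigl(\xi_A\,\E[\xi_B^\top\xi_B]\,\xi_A^\top\bigr)=n\tau^2\|\xi_A\|_F^2.
\]
Then $\E\|\xi_A\|_F^2=mr\tau^2$, so the total is $mnr\,\tau^4$, which is Eq.~\eqref{eq:naive_second}.

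Alternatively, one can write entrywise $(\xi_A\xi_B^\top)_{ij}=\sum_k a_{ik}b_{jk}$. Its second moment is $\sum_{k,k'}\E[a_{ik}a_{ik'}]\E[b_{jk}b_{jk'}]=r\tau^4$, and summing over the $mn$ entries gives the same result.

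There is no real obstacle here. The only point needing care is justifying that the cross term vanishes; the proposition only concerns the two terms separately, so I will remark on it only briefly. For completeness, I can also note that the cross terms between $\mathcal{N}_Z^{(1)}$ and $\mathcal{N}_Z^{(2)}$ vanish by odd-moment symmetry, since flipping $\xi_B\mapsto-\xi_B$ preserves the distribution. This makes $\E\|\mathcal{N}_Z\|_F^2$ the sum of the two displayed quantities, but it is not required for the statement.
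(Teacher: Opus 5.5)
Your proof is correct and follows the paper's argument. Both use the same Frobenius expansion with the cross term killed by independence and centering, and the identity $\E[\xi_A^\top\xi_A]=m\tau^2 I_r$ for the diagonal terms; the paper handles the bilinear term by exactly your alternative entrywise count of $r\tau^4$ per entry. One slip in your optional aside: $\langle A\xi_B^\top,\xi_A\xi_B^\top\rangle$ is even in $\xi_B$, so flipping $\xi_B$ alone does not show its expectation vanishes. Use the joint flip $(\xi_A,\xi_B)\mapsto(-\xi_A,-\xi_B)$ instead, under which both cross terms between $\mathcal{N}_Z^{(1)}$ and $\mathcal{N}_Z^{(2)}$ are odd.
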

Proposition~\ref{prop:amp} reveals that factor-space noising induces intrinsic noise whose first-order component scales with $\|A\|_F,\|B\|_F$ and whose second-order bilinear component scales as
$mnr\,\tau^4$, leading to gauge-dependent amplification and an unavoidable $\eta^2$ noise term.
%suggests that Noising $A$ and $B$ induces intrinsic noise on $Z$ with a first-order term scaling with $\|A\|_F,\|B\|_F$ and a second-order bilinear term scaling with $mnr\,\tau^4$, which drives gauge-dependent amplification and the extra $\eta^2$ factor-space term.

\begin{corollary}[Unbounded gauge amplification]
\label{cor:unbounded}
Fix $Z=AB^\top\neq 0$ and $\tau^2>0$.
Along the scalar gauge family $(A_{c},B_{c})=(cA,c^{-1}B)$, the first-order effective intrinsic noise $\mathcal{E}_{Z}$ defined in Eq.~\eqref{eq:naive_energy} 
is unbounded over $c>0$ even though %satisfies $\sup_{c>0}\mathcal{E}_{Z,c}=\infty$ even though 
$A_{c}B_{c}^\top=Z$ is constant. 
\end{corollary}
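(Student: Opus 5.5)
The plan is to substitute the scalar gauge family directly into the closed-form energy of Proposition~\ref{prop:amp} and read off the growth in $c$. By Eq.~\eqref{eq:naive_energy}, applied to the pair $(A_c,B_c)=(cA,c^{-1}B)$, the first-order effective intrinsic noise is
\begin{equation*}
\mathcal{E}_{Z,c}^2 \;=\; \tau^2\bigl(m\|c^{-1}B\|_F^2+n\|cA\|_F^2\bigr)\;=\;\tau^2\bigl(m\,c^{-2}\|B\|_F^2+n\,c^{2}\|A\|_F^2\bigr).
\end{equation*}
We also check the invariance $A_cB_c^\top = cA\,c^{-1}B^\top = AB^\top = Z$ for every $c>0$. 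So the intrinsic update is fixed while the energy expression varies with $c$.

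Next we use the hypothesis $Z\neq 0$. If $A=0$ or $B=0$, then $Z=AB^\top$ would vanish. Hence $\|A\|_F>0$ and $\|B\|_F>0$, and both coefficients in the display are strictly positive.

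Unboundedness then follows from a lower bound on one term. We keep only $n c^2\|A\|_F^2$, which gives $\mathcal{E}_{Z,c}\ge \tau\sqrt{n}\,\|A\|_F\,c\to\infty$ as $c\to\infty$. Symmetrically, $\mathcal{E}_{Z,c}\ge \tau\sqrt{m}\,\|B\|_F\,c^{-1}\to\infty$ as $c\to 0^+$, using $\tau^2>0$. Therefore $\sup_{c>0}\mathcal{E}_{Z,c}=\infty$.

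We do not expect a real obstacle. The only step needing care is justifying that $\|A\|_F,\|B\|_F>0$ from $Z\neq0$, since without it one of the two terms could vanish identically. As a side remark, minimizing over $c$ by AM--GM gives $\inf_c \mathcal{E}_{Z,c}^2 = 2\tau^2\sqrt{mn}\,\|A\|_F\|B\|_F$. This shows the energy is bounded below but not above along the gauge orbit, which emphasizes the gauge dependence. It is not needed for the corollary.
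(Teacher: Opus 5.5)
Your proposal is correct and follows the paper's own argument (Proposition~\ref{prop:scalar_range} in Appendix~\ref{app:eff_noise_range}). The paper likewise substitutes $(cA,c^{-1}B)$ into Eq.~\eqref{eq:naive_energy} to get $\tau^2(m c^{-2}\|B\|_F^2 + n c^2\|A\|_F^2)$ and lets $c\to 0$ or $c\to\infty$; your explicit check that $Z\neq 0$ forces $\|A\|_F,\|B\|_F>0$ makes precise a step the paper leaves implicit, and your AM--GM remark recovers its minimum $2\tau^2\sqrt{mn}\,\|A\|_F\|B\|_F$.
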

See Appendix~\ref{app:naive_noise}-\ref{app:eff_noise_range} for derivations. Corollary~\ref{cor:unbounded} shows that factor-space DP induces gauge-dependent and potentially unbounded intrinsic noise, even when the effective update $Z$ is fixed. %implies that, even when the effective update $Z$ is fixed, rescalings of the LoRA factors can make the intrinsic DP noise arbitrarily large. So factor-space DP does not define a stable noise level on $Z$ unless one also fixes a gauge.

\textbf{Issue III: Adaptive preconditioning magnifies DP noise and stresses low-rank numerics.}
Private PEFT methods typically rely on adaptive optimizers such as Adam and AdamW to maintain utility under noisy gradients
\cite{kingma2015adam,loshchilov2019adamw}, and recent LoRA-specific
invariant optimizers extend the same principle \cite{yen2025rite}.
Conceptually, these methods apply a (possibly low-dimensional) preconditioner to the DP gradient: 
\begin{equation}
\theta^+ = \theta - \eta\,\mathsf{P}^{-1/2}\widehat g,
\qquad
\widehat g = g + \xi
\label{eq:precond_dp}
\end{equation}
where $\xi \sim \mathcal{N}(0,\Sigma_\xi)$ denotes the injected DP noise. The resulting update noise is therefore $\eta\,\mathsf{P}^{-1/2}\xi$ with covariance
$\eta^{2}\,\mathsf{P}^{-1/2}\Sigma_\xi \mathsf{P}^{-1/2}$. Because the preconditioner $\mathsf{P}$ is estimated from noisy gradients, adaptive optimizers inevitably adapt to the injected DP noise. When DP noise dominates the true gradient signal, the preconditioner is largely determined by noise statistics, yielding $\mathsf{P}\approx\mathbb{E}[\xi\xi^\top]=\Sigma_\xi$. In this regime, the update noise covariance becomes $\eta^{2}\,\mathsf{P}^{-1/2}\Sigma_\xi \mathsf{P}^{-1/2}\approx \eta^2 I$,
i.e., adaptive preconditioning can \emph{normalize/reshape} the injected DP noise so that the effective update noise is no longer scaled in a simple way by the base DP-SGD noise level. This noisy-moment-driven behavior is known to undermine the benefits of black-box combinations of DP with adaptive optimizers, and has motivated several DP-aware adaptive variants (e.g., leveraging side information, delayed/stale preconditioners, or bias-corrected moment estimation) \cite{pmlr-v162-li22x,li2022differentially,tang2023dpadambc}.

These issues are further exacerbated in LoRA setting. Adaptive and invariant LoRA optimizers often involve operations on small $r\times r$ Gram matrices such as
$M=A^\top A$ and $N=B^\top B$ (e.g., via inverses, pseudoinverses, or inverse square roots). DP noise and gauge drift can drive these matrices toward ill-conditioning, where $\|\;M^{\dagger/2}\;\|_2 = 1/\sqrt{\lambda_{\min}^{+}(M)}$ becomes large. This both amplifies noise in the update and destabilizes numerical routines such as eigendecompositions. Consequently, a practical DP-LoRA mechanism must control not only intrinsic sensitivity and noise injection, but also the interaction between privacy noise and adaptive preconditioning under low-rank numerical constraints.

\noindent\textbf{Design target.} We aim to develop a DP-LoRA procedure whose randomized update of the intrinsic parameter is
(i) gauge invariant in distribution, (ii) additive in an intrinsic (tangent) representation, thereby avoiding bilinear noise,
and (iii) stable under adaptive optimization and preconditioning, without magnifying DP noise or destabilizing low-rank numerical operations.

\section{Proposed Method: PRISM}
\label{sec:method}
\label{sec:theory_opt}
To address the issues in Section~\ref{sec:problem}, we propose the Projected Riemannian
Invariant Subspace Mechanism (PRISM), a DP-LoRA procedure that applies DP-SGD in the \emph{intrinsic} geometry of low-rank updates. For simplicity, we previously described LoRA for a single weight matrix $W=W_0+AB^\top$; in practice, LoRA is applied to multiple layers of a model. We therefore consider a LoRA model with $L$ \emph{LoRA modules} (i.e., LoRA-augmented layers), indexed by $\ell$ \cite{hu2022lora}. Each module $\ell$ has factor parameters $(A_\ell,B_\ell)$ and an intrinsic update $Z_\ell=A_\ell B_\ell^\top$.
For each training example $i$, we denote the intrinsic gradient $G_{i,\ell}\triangleq\nabla_{Z_\ell}\ell_i$. By the chain rule, the corresponding factor gradients satisfy $g_{A,i,\ell}=G_{i,\ell}B_\ell$ and $g_{B,i,\ell}=G_{i,\ell}^\top A_\ell$.

\begin{algorithm}[t]
  \caption{One PRISM update across all LoRA modules}
  \label{alg:prism}
  \begin{algorithmic}[1]
    \STATE \textbf{Input:} LoRA factors $\{(A_\ell,B_\ell)\}_{\ell=1}^L$, minibatch $\{x_i\}_{i=1}^b$, clip $C$, noise multiplier $\sigma$, learning rate $\eta$.
    \STATE \textbf{Per-example intrinsic norms:}
    \FOR{$i=1$ to $b$}
      \STATE For each module $\ell$, compute lifted tangent gradient $(\Delta A_{i,\ell},\Delta B_{i,\ell})$ via Eq.~\eqref{eq:canonical}.
      \STATE Compute $\|\Delta Z_{i,\ell}\|_F^2$ via Eq.~\eqref{eq:tangent_norm}, and $s_i$ via Eq.~\eqref{eq:global_clip}. %set $s_i=\big(\sum_\ell \|\Delta Z_{i,\ell}\|_F^2\big)^{1/2}$.
      \STATE Set clipping coefficient $\alpha_i=\min\{1,C/s_i\}$.
    \ENDFOR
    \STATE \textbf{Module-wise DP tangent update:}
    \FOR{each module $\ell$}
      \STATE $\bar\Delta A_\ell=\frac{1}{b}\sum_i \alpha_i\Delta A_{i,\ell}$, $\bar\Delta B_\ell=\frac{1}{b}\sum_i \alpha_i\Delta B_{i,\ell}$.
      \STATE Sample tangent noise $(\Xi_{A,\ell},\Xi_{B,\ell})$ via Eq.~\eqref{eq:noise_form}.
      \STATE $\Delta A_\ell^{\mathrm{dp}}=\bar\Delta A_\ell + \frac{\sigma C}{b}\Xi_{A,\ell}$, $\Delta B_\ell^{\mathrm{dp}}=\bar\Delta B_\ell + \frac{\sigma C}{b}\Xi_{B,\ell}$.
      \STATE Compute DP-aware invariant adaptive direction $(U_{A,\ell},U_{B,\ell})$ using Eqs.~\eqref{eq:adam_moments}--\eqref{eq:noise_floor}.
      \STATE Retract via Eq.~\eqref{eq:retraction}:
      
       $(A_\ell,B_\ell)\leftarrow \mathrm{Retr}_r\!\big(A_\ell,B_\ell; -\eta U_{A,\ell},-\eta U_{B,\ell}\big).$
    \ENDFOR
    \STATE \textbf{Output:} updated LoRA factors.
  \end{algorithmic}
\end{algorithm}

\textbf{Overview.}
Algorithm~\ref{alg:prism} summarizes one PRISM update applied across all LoRA modules. Rather than directly perturbing the non-identifiable factors, it performs DP-SGD on the intrinsic parameters $\{Z_\ell\}$ by operating on tangent directions of the fixed-rank manifold $\mathcal{M}_r$. For each sample $i$ and module $\ell$, PRISM computes a \emph{lifted tangent gradient} $(\Delta A_{i,\ell},\Delta B_{i,\ell})$ (line~4), which is a factor-space representation of an intrinsic tangent matrix
$\Delta Z_{i,\ell}\in T_{Z_\ell}\mathcal{M}_r$ satisfying $\Delta Z_{i,\ell}=\Delta A_{i,\ell}B_\ell^\top + A_\ell\Delta B_{i,\ell}^\top$. Using these intrinsic tangent matrices, PRISM computes a per-example intrinsic norm aggregated across all modules and applies a single \emph{global} clipping coefficient $\alpha_i$ (lines~3-7), yielding a unified sensitivity bound for the entire LoRA update.

% The algorithm then proceeds module-wise. For each module $\ell$, the clipped lifted tangents are averaged across the minibatch (line~10), and isotropic Gaussian noise is sampled directly in the tangent space and added to form the DP tangent update $(\Delta A_\ell^{\mathrm{dp}},\Delta B_\ell^{\mathrm{dp}})$ (lines~11-12). A DP-aware, gauge-invariant adaptive transformation is then applied to obtain the final update direction $(U_{A,\ell},U_{B,\ell})$ (line~13). Finally, PRISM retracts the updated tangent direction back onto the rank-$r$ manifold, producing updated LoRA factors $(A_\ell,B_\ell)$ (line~14). The remainder of this section details these components and explains how they address Issues~I–III.
PRISM then proceeds module-wise: it averages the clipped lifted tangents over the minibatch (line~10) and adds isotropic tangent Gaussian noise to form the DP tangent update $(\Delta A_\ell^{\mathrm{dp}},\Delta B_\ell^{\mathrm{dp}})$ (lines~11--12). It applies a DP-aware, gauge-invariant adaptive transform to obtain $(U_{A,\ell},U_{B,\ell})$ (line~13), and retracts back to rank $r$ to update $(A_\ell,B_\ell)$ (line~14). The remainder of this section details these components and explains how they address Issues~I--III.
\subsection{Tackle Issue I: Gauge-Invariant Tangent Projection}
To eliminate the gauge dependence of factor-space updates, we treat the intrinsic update \(Z_\ell = A_\ell B_\ell^{\top}\) as a point on the fixed-rank manifold \(\mathcal{M}_r\) and operate directly in its tangent space. 
For full-column-rank \(A_\ell,B_\ell\), the tangent space at $Z_\ell$ admits the characterization (Appendix~\ref{app:tangent})
\begin{equation}
T_{Z_\ell}\mathcal{M}_r \;=\; \{\Delta Z_\ell = \Delta A_\ell B_\ell^{\top} + A_\ell\Delta B_\ell^{\top}\},
\label{eq:tangent}
\end{equation}
where \(\Delta A_\ell \in \mathbb{R}^{m\times r}\) and \(\Delta B_\ell \in \mathbb{R}^{n\times r}\). 
We refer to any pair $(\Delta A_\ell,\Delta B_\ell)$ satisfying this relation as a (factor-space) \emph{lift} of the intrinsic tangent matrix $\Delta Z_\ell$.
While such lifts are not unique, the induced matrix $\Delta Z_\ell$ depends only on $Z_\ell$ (Appendix~\ref{app:lift_min}).

To define a gauge-invariant intrinsic gradient, we introduce the orthogonal projectors onto the column spaces of factors,
\begin{equation}
\begin{aligned}
\Pi_{A_\ell} \triangleq A_\ell (A_\ell^\top A_\ell)^\dagger A_\ell^\top,~~~~ 
\Pi_{B_\ell} \triangleq B_\ell (B_\ell^\top B_\ell)^\dagger B_\ell^\top.
\end{aligned}
\label{eq:projectors}
\end{equation}
Given a per-example intrinsic gradient $G_{i,\ell}\in\R^{m\times n}$, we project it onto the tangent space via
\begin{align}\label{eq:tangent_proj}
\mathcal{P}_{A_\ell,B_\ell}(G_{i,\ell})
&\triangleq G_{i,\ell} - (I-\Pi_{A_\ell})\,G_{i,\ell}\,(I-\Pi_{B_\ell}) \\\nonumber
&= \Pi_{A_\ell} G_{i,\ell} + G_{i,\ell}\Pi_{B_\ell} - \Pi_{A_\ell} G_{i,\ell}\Pi_{B_\ell}.
\end{align}
which depends only on $\Pi_{A_\ell},\Pi_{B_\ell}$, and hence is invariant to the gauge transformation. As a result, the projected tangent direction $\mathcal{P}_{A_\ell,B_\ell}(G_{i,\ell})$ represents an intrinsic update
of $Z_\ell$ that is independent of the chosen factorization (Appendix~\ref{app:proj}).

To obtain a concrete factor-space representation, we adopt a canonical horizontal lift that maps the intrinsic tangent direction back to factor space in a gauge-consistent manner. 
Let $g_{A,i,\ell}=G_{i,\ell}B_\ell$, $g_{B,i,\ell}=G_{i,\ell}^\top A_\ell$, and define $M_\ell=A_\ell^\top A_\ell$, $N_\ell=B_\ell^\top B_\ell$. We set
\begin{equation}
\label{eq:canonical}
\begin{aligned}
\Delta A_{i,\ell} &= g_{A,i,\ell} N_\ell^\dagger - \tfrac12 \Pi_{A_\ell}\bigl(g_{A,i,\ell} N_\ell^\dagger\bigr),\\[2pt]
\Delta B_{i,\ell} &= g_{B,i,\ell} M_\ell^\dagger - \tfrac12 \Pi_{B_\ell}\bigl(g_{B,i,\ell} M_\ell^\dagger\bigr).
\end{aligned}
\end{equation}
which satisfies $\Delta A_{i,\ell} B_\ell^\top + A_\ell \Delta B_{i,\ell}^\top = \mathcal{P}_{A_\ell,B_\ell}(G_{i,\ell})$ (Appendix~\ref{app:lifts}).

\subsection{Tackle Issue II: Global Intrinsic DP Mechanism}
We next design a DP mechanism that operates directly on intrinsic tangent updates, thereby avoiding amplified noise inherent to factor-space perturbations.
 Given lifted tangent directions $(\Delta A_{i,\ell},\Delta B_{i,\ell})$ obtained from Eq.~\eqref{eq:canonical}, we measure the magnitude of tangent directions using the Frobenius norm
$\|\Delta Z_{i,\ell}\|_F^2=\|\Delta A_{i,\ell} B_{\ell}^\top + A_{\ell}\Delta B_{i,\ell}^\top\|_F^2$, which can be computed efficiently (Appendix~\ref{app:norm}):
\begin{align}\label{eq:tangent_norm}
\|\Delta Z_{i,\ell}\|_F^{2}
  &= \operatorname{tr}\!\bigl(\Delta A_{i,\ell}^{\top} \Delta A_{i,\ell}\,N_\ell\bigr)
   + \operatorname{tr}\!\bigl(\Delta B_{i,\ell}^{\top} \Delta B_{i,\ell}\,M_\ell\bigr)  \nonumber\\
  & + 2\,\operatorname{tr}\!\bigl((A_{\ell}^{\top} \Delta A_{i,\ell})(B_{\ell}^{\top} \Delta B_{i,\ell})\bigr).
\end{align}
In the common per-example gradient setting, Eq.~\eqref{eq:tangent_norm} further simplifies (Appendix~\ref{app:norm_rank1}).

To control sensitivity across all LoRA modules, we aggregate intrinsic norms and define the \emph{global intrinsic norm}
\begin{equation}\label{eq:global_clip}
\textstyle s_i      \triangleq \Bigl(\sum_{\ell=1}^{L} \|\Delta Z_{i,\ell}\|_F^{2}\Bigr)^{1/2}    
\end{equation}
We then compute per-example clipping coefficients $\alpha_i \triangleq \min\{1,\,C/s_i\}$. Each module aggregates the clipped lifts as $\bar\Delta A_\ell=\frac{1}{b}\sum_i \alpha_i\Delta A_{i,\ell}$ and $\bar\Delta B_\ell=\frac{1}{b}\sum_i \alpha_i\Delta B_{i,\ell}$.
This mirrors DP-SGD sensitivity control (Eq.~\eqref{eq:dpsgd}), but crucially operates in the intrinsic geometry of LoRA.

\underline{Isotropic tangent noise.}
For each module $\ell$, PRISM adds Gaussian noise directly in the tangent space,
\begin{equation}
(\Delta A_\ell^{\mathrm{dp}},\Delta B_\ell^{\mathrm{dp}})
=
(\bar\Delta A_\ell,\bar\Delta B_\ell)
+
\frac{\sigma C}{b}\,(\Xi_{A,\ell},\Xi_{B,\ell}),
\label{eq:dp_tangent_noise}
\end{equation}
where the random pair $(\Xi_{A,\ell},\Xi_{B,\ell})$ is constructed so that 
$\Xi_{A,\ell} B_\ell^\top + A_\ell \Xi_{B,\ell}^\top \sim \mathcal{P}_{A_\ell,B_\ell}(\Xi_\ell)$ for $\Xi_\ell\sim\mathcal{N}(0,I_{m_\ell\times n_\ell})$ (Appendix~\ref{app:lowrank}). 
PRISM uses factor-space lifts for efficiency, but the \emph{released} update is intrinsic and invariant to the factor lift; hence it admits an equivalent lift-free form:
\begin{equation}
\label{eq:dpnoise}
\widehat{\Delta Z}_\ell
  \;=\;
  \frac{1}{b}\sum_{i=1}^{b}\alpha_i\,\Delta Z_{i,\ell}
  \;+
  \frac{\sigma C}{b}\,\mathcal{P}_{A_\ell,B_\ell}(\Xi_\ell),
\end{equation}
where $\Delta Z_{i,\ell}=\mathcal{P}_{A_\ell,B_\ell}(G_{i,\ell})\in T_{Z_\ell}\mathcal{M}_r$. %and $\Xi_\ell\sim\mathcal{N}(0,I_{m_\ell\times n_\ell})$.
This intrinsic form is convenient for stating gauge invariance and privacy guarantees; in implementation, to sample $\mathcal{P}_{A_\ell,B_\ell}(\Xi_\ell)$ efficiently, we avoid drawing a full $m_\ell \times n_\ell$ Gaussian matrix and instead use a low-dimensional factor sampler (Appendix~\ref{app:noise_sampler}-\ref{app:factor_noise_gauge}):
%we sample $\mathcal{P}_{A_\ell,B_\ell}(\Xi_\ell)$ via the low-dimensional factor sampler below. Rather than sampling a full $m_\ell \times n_\ell$ Gaussian matrix, PRISM employs a low-dimensional sampler (Appendix~\ref{app:noise_sampler}) given by
\begin{align}\label{eq:noise_form}
\Xi_{A,\ell} = (I-\Pi_{A_\ell})\Omega_{A,\ell}\,N_\ell^{-\frac{1}{2}},~
\Xi_{B,\ell} = \Omega_{B,\ell}\,M_\ell^{-\frac{1}{2}}.
\end{align}
with $\Omega_{A,\ell}\sim\mathcal{N}(0,I_{m_\ell\times r})$ and $\Omega_{B,\ell}\sim\mathcal{N}(0,I_{n_\ell\times r})$. %(Theorem~\ref{thm:noise}). See Appendix~\ref{app:conditioning}-\ref{app:factor_noise_gauge}. 

\begin{theorem}[Isotropic tangent noise and closed-form intrinsic energy]
\label{thm:noise_energy}\label{thm:noise}
Let $\Xi_\ell\in\R^{m_\ell\times n_\ell}$ have i.i.d.\ $\mathcal{N}(0,1)$ entries.
Then $\mathcal{P}_{A_\ell,B_\ell}(\Xi_\ell)$ is an isotropic Gaussian supported on $T_{Z_\ell}\mathcal{M}_r$ and
\begin{equation}
\label{eq:energy_gila}
\E\|\mathcal{P}_{A_\ell,B_\ell}(\Xi_\ell)\|_F^2 = r(m_\ell+n_\ell-r).
\end{equation}
Therefore, the effective intrinsic noise of PRISM perturbation $\mathcal{N}_{Z_\ell}^{\text{PRISM}}=\frac{\sigma C}{b}\mathcal{P}_{A_\ell,B_\ell}(\Xi_\ell)$ is
\begin{equation}
\label{eq:eff_prism}
\mathcal{E}_{Z_\ell}^{\text{PRISM}}=\frac{\sigma C}{b}\sqrt{r(m_\ell+n_\ell-r)}.
\end{equation}
\end{theorem}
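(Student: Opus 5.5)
The plan is to show that $\mathcal{P}_{A_\ell,B_\ell}$ is the orthogonal projector (in the Frobenius inner product) onto $T_{Z_\ell}\mathcal{M}_r$. Isotropy and the energy formula then follow from standard facts about projecting a standard Gaussian. I drop the index $\ell$ and write $\Pi_A,\Pi_B$ for the projectors in Eq.~\eqref{eq:projectors}, assuming $A,B$ have full column rank $r$.

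\textbf{Step 1: $\mathcal{P}$ is an orthogonal projector onto the tangent space.} View $\mathcal{P}$ as a linear map on $\R^{m\times n}$ with $\langle X,Y\rangle=\tr(X^\top Y)$. Write $\Pi_A^\perp=I-\Pi_A$ and $\Pi_B^\perp=I-\Pi_B$. Then
\[
\mathcal{P}(G)=G-\Pi_A^\perp G\,\Pi_B^\perp .
\]
The map $G\mapsto \Pi_A^\perp G\,\Pi_B^\perp$ is self-adjoint, because $\Pi_A^\perp$ and $\Pi_B^\perp$ are symmetric. It is also idempotent, because they are idempotent. Hence $\mathcal{P}$ is a self-adjoint idempotent, i.e.\ an orthogonal projector.

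Next I identify its range with $T_Z\mathcal{M}_r$ as characterized in Eq.~\eqref{eq:tangent}.
\begin{itemize}
\item Range is contained in the tangent space: $\Pi_A G+G\Pi_B-\Pi_A G\Pi_B = \Pi_A G\,\Pi_B^\perp + G\,\Pi_B$. Since $\Pi_A=AM^{-1}A^\top$ and $\Pi_B=BN^{-1}B^\top$, this equals $A(\cdot)^\top + (\cdot)B^\top$, which lies in $T_Z\mathcal{M}_r$.
\item Tangent space is contained in the range: for $\Delta Z=\Delta A B^\top + A\Delta B^\top$ we have $\Pi_A^\perp A=0$ and $B^\top\Pi_B^\perp=0$, so $\Pi_A^\perp \Delta Z\,\Pi_B^\perp=0$ and therefore $\mathcal{P}(\Delta Z)=\Delta Z$.
\end{itemize}

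\textbf{Step 2: isotropy.} Since $\vecop(\Xi)\sim\mathcal{N}(0,I_{mn})$ and $\mathcal{P}$ is represented by a symmetric idempotent matrix $P$, the vector $P\,\vecop(\Xi)$ is Gaussian with covariance $PP^\top=P$. This is exactly the standard (isotropic) Gaussian on the subspace $\mathrm{range}(P)=T_Z\mathcal{M}_r$. The covariance depends only on $\Pi_A,\Pi_B$, so it is gauge invariant.

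\textbf{Step 3: the energy.} We have
\[
\E\|\mathcal{P}(\Xi)\|_F^2=\tr(P)=\dim T_Z\mathcal{M}_r .
\]
I compute the trace directly, using that $G\mapsto XGY$ has trace $\tr(X)\tr(Y)$ and that $\tr(\Pi_A)=\tr(\Pi_B)=r$:
\[
\tr(P)=mn-\tr(\Pi_A^\perp)\tr(\Pi_B^\perp)=mn-(m-r)(n-r)=r(m+n-r).
\]
This gives Eq.~\eqref{eq:energy_gila}. Multiplying by $(\sigma C/b)^2$ and taking the square root, via Definition~\ref{def:eff_noise}, gives Eq.~\eqref{eq:eff_prism}.

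\textbf{Main obstacle.} The only delicate point is the range identification in Step~1, which relies on $A$ and $B$ having full column rank so that $\Pi_A$ and $\Pi_B$ have rank exactly $r$. If rank deficiency were allowed, the trace would come out with the actual ranks in place of $r$. I will therefore state the full-rank assumption explicitly, as the paper does for Eq.~\eqref{eq:tangent}.
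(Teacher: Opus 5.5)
Your proof is correct and follows essentially the same route as the paper. Both arguments show that $\mathcal{P}_{A_\ell,B_\ell}$ is the orthogonal projector onto $T_{Z_\ell}\mathcal{M}_r$, vectorize so that the covariance equals that projector, and read off the energy as its trace; the only difference is that you compute the trace explicitly as $mn-(m-r)(n-r)$ via the trace identity for $G\mapsto XGY$, whereas the paper identifies it with $\dim T_{Z_\ell}\mathcal{M}_r$ and cites the known dimension of the fixed-rank manifold, so your version is slightly more self-contained.
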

See Appendix~\ref{app:proof_noise_energy} for the proof, with supporting results in Appendix~\ref{app:isotropy}-\ref{app:noise_cov} and concentration bounds in Appendix~\ref{app:noise_concentration}. Theorem~\ref{thm:noise_energy} shows that projecting a dense Gaussian matrix onto the rank-$r$ tangent space yields an isotropic Gaussian supported on that subspace with expected squared norm $r(m+n-r)$. Consequently, PRISM induces intrinsic noise on $Z$ whose magnitude depends only on $(\sigma,C,b)$ and layer dimensions, and is independent of gauge-dependent quantities $\|A\|_F,\|B\|_F$.

\underline{Retraction without bilinear noise.}
Given the noisy tangent direction, PRISM updates the intrinsic parameter via a retraction onto the fixed-rank manifold $\calM_r$, 
\begin{equation}
Z_{\ell}^{+} \;=\; \mathrm{Retr}_r\!\big(
Z_\ell - \eta(\Delta A_\ell^{\mathrm{dp}}B_\ell^\top + A_\ell(\Delta B_\ell^{\mathrm{dp}})^\top)
\big),
\label{eq:retraction}
\end{equation}
where $\mathrm{Retr}_r(\cdot)$ denotes the best rank-$r$ approximation in Frobenius norm (Appendix~\ref{app:retraction}). Because \eqref{eq:retraction}  is additive in the intrinsic tangent perturbation, this step avoids the bilinear second-order noise term
$\eta^2\xi_{A,\ell}\xi_{B,\ell}^\top$ that arises when independently noising both factors (Eq.~\eqref{eq:bilinear_noise_prob}; Appendix~\ref{app:no_second_order}). 
Consequently, the effective intrinsic noise induced by PRISM admits a closed-form characterization (Eq.~\eqref{eq:eff_prism}), and retraction introduces only second-order distortion through the lifted factor-product residual.
\begin{proposition}[Retraction distortion is second order]
\label{prop:retract_error}
Let $Z=AB^\top\in\mathcal{M}_r$, with $A,B$ full column
rank, and let $\Delta Z=\Delta A B^\top + A\Delta B^\top
\in T_Z\mathcal{M}_r$ be a lifted tangent perturbation.
For the truncated-SVD retraction $\Retr_r$ and any $\eta\ge 0$,
\begin{equation}
\label{eq:retract_error1}
\|\Retr_r(Z-\eta\Delta Z)-(Z-\eta\Delta Z)\|_F
\le \eta^2\|\Delta A\Delta B^\top\|_F .
\end{equation}
Since $\|\Delta A\Delta B^\top\|_F
\le \|\Delta A\|_F\|\Delta B\|_F$, the distortion is
$O(\eta^2)$ for fixed $\Delta A,\Delta B$; equivalently,
$\Retr_r(Z-\eta\Delta Z)=Z-\eta\Delta Z+O(\eta^2)$ as
$\eta\to 0$.
\end{proposition}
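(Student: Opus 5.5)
The idea is to compare the retraction with a known rank-$r$ competitor. Since $\Retr_r$ is the truncated SVD, Eckart--Young--Mirsky says $\Retr_r(X)$ minimizes $\|Y-X\|_F$ over all matrices $Y$ of rank at most $r$. So for $X=Z-\eta\Delta Z$ we have
\[
\|\Retr_r(X)-X\|_F\le\|Y-X\|_F
\]
for any rank-$\le r$ matrix $Y$. It then suffices to exhibit one such $Y$ whose distance to $X$ is exactly $\eta^2\|\Delta A\Delta B^\top\|_F$.

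The natural choice is the factor-space step itself:
\[
Y=(A-\eta\Delta A)(B-\eta\Delta B)^\top .
\]
This is a product of an $m\times r$ matrix and an $r\times n$ matrix, so $\rank(Y)\le r$ automatically, with no full-rank assumption needed. Expanding gives
\[
Y=AB^\top-\eta(\Delta A B^\top+A\Delta B^\top)+\eta^2\Delta A\Delta B^\top=Z-\eta\Delta Z+\eta^2\Delta A\Delta B^\top,
\]
using the lift identity $\Delta Z=\Delta A B^\top+A\Delta B^\top$. Hence
\[
\|Y-X\|_F=\eta^2\|\Delta A\Delta B^\top\|_F,
\]
which is exactly \eqref{eq:retract_error1}. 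The full-column-rank hypothesis on $A,B$ is needed only so that $\Delta Z$ lies in $T_Z\mathcal{M}_r$ as described by \eqref{eq:tangent}; the inequality itself holds for any lift.

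One subtlety is that the truncated SVD may be non-unique when the $r$-th and $(r{+}1)$-th singular values coincide, and it may return a matrix of rank below $r$. I will state that Eckart--Young applies to any choice of truncated SVD, since all choices achieve the same minimal error $\big(\sum_{k>r}\sigma_k(X)^2\big)^{1/2}$. The bound therefore holds regardless of which truncation is selected.

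For the final claims, submultiplicativity of the Frobenius norm gives $\|\Delta A\Delta B^\top\|_F\le\|\Delta A\|_F\|\Delta B\|_F$. This makes the distortion $O(\eta^2)$ for fixed $\Delta A,\Delta B$, which yields $\Retr_r(Z-\eta\Delta Z)=Z-\eta\Delta Z+O(\eta^2)$. I expect no step to be a real obstacle: the only idea is choosing the comparison matrix $Y$, and the rest is Eckart--Young plus an algebraic expansion.
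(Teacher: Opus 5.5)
Your proposal is correct and follows essentially the same route as the paper: it compares $\Retr_r(Z-\eta\Delta Z)$ against the rank-$\le r$ candidate $(A-\eta\Delta A)(B-\eta\Delta B)^\top$, expands it to get a residual of $\eta^2\Delta A\Delta B^\top$, and applies Eckart--Young--Mirsky. Your extra remarks, that the bound holds for any choice of truncated SVD and that full column rank is not needed for the inequality itself, are valid refinements the paper leaves implicit.
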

Proposition~\ref{prop:retract_error} shows that retraction is
first-order exact for the lifted tangent step used by PRISM. Thus,
the DP perturbation remains additive to first order; the only
discrepancy is the second-order residual in
Eq.~\eqref{eq:retract_error1}, rather than an explicit
factor-space bilinear noise term.
\begin{theorem}[Gauge invariance of PRISM]
\label{thm:gauge_invariant}
Fix $(\sigma,C,b)$ and consider one PRISM step at intrinsic state $Z_\ell=A_\ell B_\ell^\top$.
For any $R\in\operatorname{GL}(r)$ and gauge-equivalent factors $(A'_\ell,B'_\ell)=(A_\ell R,B_\ell R^{-\top})$, the distribution of the intrinsic DP increment in Eq.~\eqref{eq:dpnoise} is invariant:
\[
\widehat{\Delta Z}_\ell(A_\ell,B_\ell)\ \overset{d}{=}\ \widehat{\Delta Z}_\ell(A'_\ell,B'_\ell).
\]
Since retraction in Eq.~\eqref{eq:retraction} is deterministic post-processing, $Z_\ell^+$ is also gauge invariant in distribution.
\end{theorem}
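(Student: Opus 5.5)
The plan is to show that every ingredient of the intrinsic increment in Eq.~\eqref{eq:dpnoise} depends on the factors only through the projectors $\Pi_{A_\ell},\Pi_{B_\ell}$, and that these projectors are unchanged by the gauge action. The first step is to verify that $\Pi_{A_\ell R}=\Pi_{A_\ell}$ and $\Pi_{B_\ell R^{-\top}}=\Pi_{B_\ell}$ for every $R\in\gl(r)$. Directly, $A_\ell R\,(R^\top A_\ell^\top A_\ell R)^{\dagger}R^\top A_\ell^\top$ reduces to $A_\ell M_\ell^{\dagger}A_\ell^\top$ when $A_\ell$ has full column rank, since then $(R^\top M_\ell R)^{-1}=R^{-1}M_\ell^{-1}R^{-\top}$. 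More conceptually, $\Pi_{A_\ell}$ is the orthogonal projector onto $\mathrm{col}(A_\ell)$, and $\mathrm{col}(A_\ell R)=\mathrm{col}(A_\ell)$ because $R$ is invertible. The same argument applies to $B_\ell$. It follows that the tangent projection $\mathcal{P}_{A_\ell,B_\ell}$ in Eq.~\eqref{eq:tangent_proj} coincides with $\mathcal{P}_{A'_\ell,B'_\ell}$ as a linear map.

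The second step handles the signal term. The intrinsic gradient $G_{i,\ell}=\nabla_{Z_\ell}\ell_i$ is evaluated at $W_0+Z_\ell$, and $Z_\ell$ is gauge invariant, so $G_{i,\ell}$ is unchanged. Consequently $\Delta Z_{i,\ell}=\mathcal{P}_{A_\ell,B_\ell}(G_{i,\ell})$ is unchanged for every $i$ and $\ell$. The global norm $s_i$ of Eq.~\eqref{eq:global_clip} is defined through $\|\Delta Z_{i,\ell}\|_F$; the lift formula Eq.~\eqref{eq:tangent_norm} only computes this intrinsic quantity. Hence $s_i$, the coefficient $\alpha_i$, and the clipped average $\frac1b\sum_i\alpha_i\Delta Z_{i,\ell}$ are all deterministic and identical under both gauges, given the same minibatch. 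I would also note that the minibatch sampling does not depend on the factors. If one wants the lifted implementation to agree as well, Eq.~\eqref{eq:canonical} produces lifts satisfying $\Delta A B^\top+A\Delta B^\top=\mathcal{P}(G)$, and that is all that enters Eq.~\eqref{eq:dpnoise}.

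The third step handles the noise term, and this is the only place where equality in distribution, rather than pointwise equality, is required. For the lift-free form, $\mathcal{P}_{A_\ell,B_\ell}(\Xi_\ell)$ and $\mathcal{P}_{A'_\ell,B'_\ell}(\Xi_\ell)$ are the same linear image of the same Gaussian, so they are equal pointwise. By Theorem~\ref{thm:noise_energy}, this common law is the isotropic Gaussian on $T_{Z_\ell}\mathcal{M}_r$. The tangent space itself is gauge invariant, since by Eq.~\eqref{eq:tangent} it equals $\{X:(I-\Pi_{A_\ell})X(I-\Pi_{B_\ell})=0\}$.

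The main obstacle is ensuring that the actual sampler Eq.~\eqref{eq:noise_form} realizes this law in every gauge. Here I would invoke the construction property stated before Eq.~\eqref{eq:dpnoise}: $\Xi_{A}B^\top+A\Xi_B^\top\sim\mathcal{P}_{A,B}(\Xi)$ for any full-rank factors, including the primed ones. Alternatively, I would check directly that the sampler's output is a mean-zero Gaussian with covariance equal to the projector onto $T_{Z_\ell}\mathcal{M}_r$. Its two summands, $(I-\Pi_A)\Omega_A N^{-1/2}B^\top$ and $A M^{-1/2}\Omega_B^\top$, have independent, orthogonal supports. Each summand has a covariance that is a projector: for instance, $B N^{-1/2}$ has orthonormal columns, so the covariance of the first summand is $(I-\Pi_A)\otimes\Pi_B$. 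Since the covariance depends only on $\Pi_A,\Pi_B$, it is gauge invariant.

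Finally, the noise is independent of the minibatch. The increment is therefore a fixed deterministic matrix plus a Gaussian whose law is identical in both gauges, which gives the distributional equality. Because $\mathrm{Retr}_r$ in Eq.~\eqref{eq:retraction} is a deterministic function of $Z_\ell-\eta\widehat{\Delta Z}_\ell$, the claim for $Z_\ell^+$ follows by pushing the equal laws forward through the same measurable map.
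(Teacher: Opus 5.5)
Your proof is correct and follows the same route as the paper. Both arguments show that gauge invariance of $\Pi_{A_\ell},\Pi_{B_\ell}$ makes $\mathcal{P}_{A_\ell,B_\ell}$ gauge invariant, so the projected Gaussian and hence the increment in Eq.~\eqref{eq:dpnoise} keep the same law, and retraction is deterministic post-processing. You are more explicit than the paper about the clipped signal term (invariance of $G_{i,\ell}$, $s_i$, $\alpha_i$) and about the factorized sampler, which the paper handles separately via Lemma~\ref{lem:lowrank_equiv} and Proposition~\ref{prop:sampler_gauge}; one small nit is that under the paper's column-major $\vecop$ convention the first summand's covariance is $\Pi_{B_\ell}\otimes(\Id-\Pi_{A_\ell})$, not $(\Id-\Pi_{A_\ell})\otimes\Pi_{B_\ell}$.
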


Theorem~\ref{thm:gauge_invariant} implies that PRISM is a well-defined randomized mechanism on the rank-$r$ manifold: the law of the clipped-and-noised increment is determined by $Z_\ell$ alone, not by the particular factor gauge $(A_\ell,B_\ell)$. See Appendix~\ref{app:gauge_invariant} for the proof.

\noindent\textbf{Privacy guarantee.}
Eq.~\eqref{eq:dpnoise} is a (subsampled) Gaussian mechanism on a linear space, and all subsequent operations---adaptive post-processing, factorization, alignment, and retraction---are DP-preserving by post-processing (Appendix~\ref{app:subspace_gaussian} and Appendix~\ref{app:procrustes}).

\begin{theorem}[DP guarantee of PRISM]
\label{thm:dp}
Assume Poisson subsampling with rate $q=b/N$ and per-example intrinsic clipping at threshold $C$ (Eq.~\eqref{eq:global_clip}).
Each PRISM iteration is a subsampled Gaussian mechanism with noise multiplier $\sigma$.
Consequently, for any target $\delta\in(0,1)$, after $T$ iterations PRISM satisfies $(\varepsilon,\delta)$-DP, where $\varepsilon$ is determined by composing $T$ subsampled Gaussian mechanisms and can be computed numerically using the privacy loss random variable (PRV) accountant~\cite{gopi2021numerical,yousefpour2021opacus,opacus_privacy_engine}.

%the privacy-loss composition of $T$ subsampled Gaussian mechanisms and can be computed numerically via the PRV accountant \cite{gopi2021numerical,yousefpour2021opacus,opacus_privacy_engine}.
\end{theorem}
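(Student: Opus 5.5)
Our plan is to reduce one PRISM iteration to a standard (subsampled) Gaussian mechanism on a Euclidean space, invoke the known privacy amplification and composition results for such mechanisms, and then use post-processing for everything else. Fix the current state $\{(A_\ell,B_\ell)\}_{\ell=1}^L$, which by induction depends only on previous DP outputs. Consider the linear space $\mathcal{T}\triangleq\bigoplus_{\ell=1}^L T_{Z_\ell}\mathcal{M}_r$ with the Frobenius inner product summed over modules. By Eq.~\eqref{eq:dpnoise}, the released increment is
\begin{equation*}
\bigl(\widehat{\Delta Z}_\ell\bigr)_{\ell}=\frac{1}{b}\Bigl(\sum_{i}\alpha_i(\Delta Z_{i,\ell})_\ell+\sigma C\,\bigl(\mathcal{P}_{A_\ell,B_\ell}(\Xi_\ell)\bigr)_\ell\Bigr).
\end{equation*}
In this identity, each per-example vector $v_i=(\alpha_i\Delta Z_{i,\ell})_\ell\in\mathcal{T}$ satisfies $\|v_i\|\le C$ by the global clipping of Eq.~\eqref{eq:global_clip}. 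Hence the sum query has $\ell_2$-sensitivity at most $C$ under add/remove adjacency.

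The next step is to check that the noise is exactly spherical Gaussian on $\mathcal{T}$ with per-coordinate standard deviation $\sigma C$. By Theorem~\ref{thm:noise_energy}, each $\mathcal{P}_{A_\ell,B_\ell}(\Xi_\ell)$ is isotropic on $T_{Z_\ell}\mathcal{M}_r$, and in fact it is $\mathcal{N}(0,I)$ on that subspace because $\mathcal{P}$ is an orthogonal projector. The $\Xi_\ell$ are independent across modules, so the concatenated noise is $\mathcal{N}(0,I_{\mathcal{T}})$. Choosing an orthonormal basis of $\mathcal{T}$ then identifies the iteration with the Gaussian mechanism $f(\mathcal{D})+\sigma C\,\mathcal{N}(0,I_d)$, where $d=\sum_\ell r(m_\ell+n_\ell-r)$. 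The sensitivity-to-noise ratio is $1/\sigma$. Two points need care here. First, the subspace $\mathcal{T}$ depends on the state, but the state is fixed given past outputs, so this is exactly the adaptive-composition setting. Second, the factor sampler of Eq.~\eqref{eq:noise_form} must induce the same law on $\Xi_{A}B^\top+A\Xi_B^\top$ as $\mathcal{P}(\Xi)$. We take this from the construction stated just before Eq.~\eqref{eq:dpnoise}, which rests on Appendix~\ref{app:lowrank}.

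With Poisson subsampling at rate $q$, each iteration is therefore the subsampled Gaussian mechanism with noise multiplier $\sigma$. Its privacy loss random variable is the same as that of the scalar subsampled Gaussian, by rotation invariance along the difference direction. Adaptive composition over $T$ steps then gives $(\varepsilon,\delta)$-DP, with $\varepsilon$ computed by the PRV accountant. The adaptive preconditioning, the lifts, and the retraction of Eq.~\eqref{eq:retraction} are deterministic functions of the released increments and the previous state. They are therefore post-processing, and so are the final factors. I expect the main obstacle to be the subspace step: verifying rigorously that the factor sampler yields exactly $\mathcal{N}(0,I)$ on $T_Z\mathcal{M}_r$, with no anisotropy from $N^{-1/2}$ or $M^{-1/2}$. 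A second concern is that the $1/b$ normalization must be handled under Poisson sampling with a fixed denominator $b$, not the realized batch size, so that the sensitivity is data-independent.
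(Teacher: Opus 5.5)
You are making the same reduction as the paper's proof (Appendix~\ref{app:dp_proof}). The clipped sum on $\bigoplus_\ell T_{Z_\ell}\mathcal{M}_r$ has sensitivity $C$, the projected Gaussian is $\mathcal{N}(0,I)$ on that subspace, Poisson subsampling plus PRV composition gives the accounting, and everything else is post-processing. The side points you raise are fine: the state-dependent subspace, the fixed denominator $b$, and equality in law of the sampler via Lemma~\ref{lem:lowrank_equiv}.

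The step that does not go through is the last one, and the paper only asserts it. You claim the adaptive rule is a deterministic function of $\widehat{\Delta Z}_\ell$ and the previous state, but it is not. Eq.~\eqref{eq:adam_moments} consumes the factor pair $(\Delta A_\ell^{\mathrm{dp}},\Delta B_\ell^{\mathrm{dp}})$, and this pair carries a noiseless component:
\begin{itemize}
\item Since $A_\ell^\top(I-\Pi_{A_\ell})=0$, the sampler \eqref{eq:noise_form} gives $A_\ell^\top\Xi_{A,\ell}=0$.
\item Eq.~\eqref{eq:canonical} gives $A_\ell^\top\bar\Delta A_\ell=\tfrac12 A_\ell^\top\bar G_\ell B_\ell N_\ell^{-1}$, where $\bar G_\ell=\frac1b\sum_i\alpha_iG_{i,\ell}$.
\item Hence $A_\ell^\top\Delta A_\ell^{\mathrm{dp}}$ is a noiseless statistic of the minibatch.
\end{itemize}
Equivalently, signal and noise are lifted by two different linear maps. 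The resulting pair equals the noise-lift of $\widehat{\Delta Z}_\ell$ plus the vertical direction $(\tfrac12A_\ell\Omega,-\tfrac12B_\ell\Omega^\top)$, with $\Omega=M_\ell^{-1}A_\ell^\top\bar G_\ell B_\ell N_\ell^{-1}$ noiseless. What you flagged as the main obstacle, equality in law of the intrinsic images, is true but beside the point, since it says nothing about this extra component.

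The vertical component survives preconditioning because $P_A=(V_{A,\ell}+\lambda_{A,\ell}I)^{-1/2}$ and $P_B=(V_{B,\ell}+\lambda_{B,\ell}I)^{-1/2}$ differ. Take $r=1$ at the first step with zero-initialized moments, so $p_A,p_B$ are scalars, and write $\mathcal N=\Xi_{A,\ell}B_\ell^\top+A_\ell\Xi_{B,\ell}^\top$. The $\Pi_{A_\ell}(\cdot)\Pi_{B_\ell}$ block of $U_{A,\ell}B_\ell^\top+A_\ell U_{B,\ell}^\top$ is then
$(1-\beta_1)\bigl[\tfrac12(p_A+p_B)\,\Pi_{A_\ell}\bar G_\ell\Pi_{B_\ell}+p_B\tfrac{\sigma C}{b}\,\Pi_{A_\ell}\mathcal N\Pi_{B_\ell}\bigr]$.
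The signal is weighted by $\tfrac12(p_A+p_B)$ but the noise only by $p_B$. So this block is effectively released with multiplier about $2\sigma p_B/(p_A+p_B)$, not $\sigma$. When the floors \eqref{eq:noise_floor} dominate, $p_A/p_B\approx\|B_\ell\|_F/\|A_\ell\|_F$, which is unbounded over gauges.

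Consequently, for Algorithm~\ref{alg:prism} as written, two of your claims fail. The adaptive step is not post-processing. Your induction hypothesis also fails: the state, which includes $m_{A,\ell}$, $V_{A,\ell}$ and the retracted factors, does not depend only on earlier DP outputs.

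There are two ways to repair this.
\begin{itemize}
\item Feed the optimizer a fixed, state-dependent linear lift of the released $\widehat{\Delta Z}_\ell$, for example
$L_\ell(X)=\bigl((I-\tfrac12\Pi_{A_\ell})XB_\ell N_\ell^{-1},\,(I-\tfrac12\Pi_{B_\ell})X^\top A_\ell M_\ell^{-1}\bigr)$.
This reproduces the signal lift exactly, because $GB_\ell=\mathcal{P}_{A_\ell,B_\ell}(G)B_\ell$ and $G^\top A_\ell=\mathcal{P}_{A_\ell,B_\ell}(G)^\top A_\ell$. In other words, lift the noise with the same map as the signal.
\item Restrict the theorem to the non-adaptive update \eqref{eq:retraction}. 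It uses only $\Delta A_\ell^{\mathrm{dp}}B_\ell^\top+A_\ell(\Delta B_\ell^{\mathrm{dp}})^\top$, which depends on the data only through the intrinsic release of \eqref{eq:dpnoise}.
\end{itemize}
With either change, your argument goes through verbatim.
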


Theorem~\ref{thm:dp} shows that each PRISM iteration is a subsampled Gaussian mechanism on intrinsic tangent updates; the remaining operations are DP-preserving post-processing. Hence, standard DP-SGD accounting applies; see Appendix~\ref{app:dp_proof} for the composition analysis.

\begin{table*}[t]
  \caption{
  \textbf{Utility on GLUE8 and Math-10K (higher is better).}
  ``Non-DP'' uses the same setup without DP clipping/noise; $\varepsilon\in\{6,3\}$ uses DP-SGD with $\delta=10^{-5}$ .
  \textbf{Avg} is the unweighted mean over the 12 tasks; bold is best per column.
  \emph{Takeaway:} Under DP, \textsc{PRISM} attains the best Avg and wins most tasks, especially on multi-step reasoning (GSM8K/MAWPS/SVAMP).}
  \label{tab:glue8_math10k_combined}
  \begin{center}
    \begin{small}
      \begin{sc}
        \setlength{\tabcolsep}{2.0pt}
        \renewcommand{\arraystretch}{0.95}
        \begin{tabular}{@{}ll*{13}{c}@{}}
          \toprule
          \textbf{Setting} & \textbf{Method} &
          \multicolumn{8}{c}{\textbf{GLUE8}} &
          \multicolumn{4}{c}{\textbf{Math-10K}} &
          \textbf{Avg} \\
          \cmidrule(lr){3-10}\cmidrule(lr){11-14}
          & &
          \textbf{CoLA} & \textbf{SST-2} & \textbf{MRPC} & \textbf{STS-B} &
          \textbf{QQP} & \textbf{MNLI} & \textbf{QNLI} & \textbf{RTE} &
          \textbf{GSM8K} & \textbf{AQuA} & \textbf{MAWPS} & \textbf{SVAMP} & \\
          \midrule

          \multirow{6}{*}{Non-DP} & FFA   & 0.456 & 0.935 & 0.759 & 0.821 & 0.713 & 0.736 & 0.809 & 0.809 & 0.513 & 0.476 & 0.836 & 0.678 & 0.712 \\
          & Rite  & 0.515 & 0.947 & \textbf{0.883} & \textbf{0.873} & \textbf{0.821} & \textbf{0.846} & \textbf{0.889} & \textbf{0.895} & \textbf{0.595} & \textbf{0.488} & \textbf{0.899} & \textbf{0.736} & \textbf{0.782} \\
          & AdamW  & 0.504 & \textbf{0.954} & 0.831 & 0.863 & 0.766 & 0.813 & 0.846 & 0.848 & 0.561 & 0.476 & 0.870 & 0.698 & 0.752 \\
          & LoRA+ & \textbf{0.578} & 0.950 & 0.840 & 0.862 & 0.807 & 0.845 & 0.851 & 0.838 & 0.592 & 0.465 & 0.891 & 0.712 & 0.769 \\
          & Lamb  & 0.468 & 0.939 & 0.860 & 0.872 & 0.776 & 0.842 & 0.868 & 0.856 & 0.559 & 0.449 & 0.878 & 0.708 & 0.756 \\
          & PRISM & 0.392 & 0.921 & 0.857 & 0.822 & 0.797 & 0.814 & 0.834 & 0.798 & 0.552 & 0.472 & 0.895 & 0.693 & 0.737 \\
          \midrule

          \multirow{6}{*}{$\epsilon=6$} & FFA   & 0.355 & 0.907 & 0.738 & 0.465 & 0.479 & 0.579 & 0.684 & 0.755 & 0.375 & 0.390 & 0.735 & 0.611 & 0.589 \\
          & Rite  & 0.235 & 0.787 & 0.635 & 0.268 & 0.500 & 0.482 & 0.562 & 0.657 & 0.282 & 0.366 & 0.597 & 0.503 & 0.490 \\
          & AdamW  & 0.407 & 0.915 & 0.770 & 0.659 & 0.493 & 0.651 & 0.716 & 0.798 & 0.441 & \textbf{0.465} & 0.761 & 0.615 & 0.641 \\
          & LoRA+ & 0.436 & 0.897 & 0.787 & 0.691 & 0.739 & \textbf{0.721} & 0.747 & \textbf{0.823} & 0.446 & 0.409 & 0.786 & 0.611 & 0.674 \\
          & Lamb  & 0.414 & \textbf{0.920} & 0.756 & 0.544 & 0.521 & 0.602 & 0.709 & 0.776 & 0.425 & 0.437 & 0.761 & 0.592 & 0.621 \\
          & PRISM & \textbf{0.444} & 0.919 & \textbf{0.798} & \textbf{0.718} & \textbf{0.770} & 0.707 & \textbf{0.776} & 0.791 & \textbf{0.469} & 0.445 & \textbf{0.819} & \textbf{0.626} & \textbf{0.690} \\
          \midrule

          \multirow{6}{*}{$\epsilon=3$} & FFA   & 0.337 & 0.890 & 0.730 & 0.406 & 0.466 & 0.561 & 0.662 & 0.740 & 0.350 & 0.374 & 0.718 & 0.598 & 0.569 \\
          & Rite  & 0.221 & 0.713 & 0.636 & 0.260 & 0.485 & 0.463 & 0.548 & 0.606 & 0.255 & 0.362 & 0.525 & 0.474 & 0.462 \\
          & AdamW  & 0.410 & 0.903 & 0.778 & 0.622 & 0.555 & 0.633 & 0.718 & \textbf{0.812} & 0.446 & 0.413 & 0.731 & 0.591 & 0.634 \\
          & LoRA+ & \textbf{0.434} & 0.906 & \textbf{0.798} & 0.668 & 0.730 & 0.708 & 0.740 & \textbf{0.812} & 0.419 & 0.386 & 0.765 & 0.609 & 0.665 \\
          & Lamb  & 0.396 & \textbf{0.909} & 0.759 & 0.517 & 0.486 & 0.586 & 0.708 & 0.783 & 0.393 & \textbf{0.425} & 0.744 & 0.608 & 0.609 \\
          & PRISM & 0.406 & 0.884 & 0.784 & \textbf{0.729} & \textbf{0.770} & \textbf{0.732} & \textbf{0.791} & 0.780 & \textbf{0.456} & 0.406 & \textbf{0.807} & \textbf{0.614} & \textbf{0.680} \\

          \bottomrule
        \end{tabular}
      \end{sc}
    \end{small}
  \end{center}
\end{table*}

\subsection{Tackle Issue III: DP-Aware Gauge-Invariant Adaptivity and Numerical Stability}
The mechanism in Eq.~\eqref{eq:dp_tangent_noise} produces a DP-sanitized tangent direction; by the post-processing property of DP, any subsequent transformation preserves the DP guarantee. We leverage this property to design a gauge-invariant adaptive update that is robust to privacy noise. For clarity, we describe the computation for a single LoRA module $\ell$.

\underline{Right-invariant preconditioning in rank space.}
For each module $\ell$, we track first moments $m_{A,\ell},m_{B,\ell}$ and rank-space 
second moments $V_{A,\ell},V_{B,\ell}\in\R^{r\times r}$ defined as,
\begin{equation}
\label{eq:adam_moments}
\begin{aligned}
m_{A,\ell} &\leftarrow \beta_1 m_{A,\ell} + (1-\beta_1)\,\Delta A_\ell^{\mathrm{dp}},\\[2pt]
V_{A,\ell} &\leftarrow \beta_2 V_{A,\ell}
      + (1-\beta_2)\,\frac{(\Delta A_\ell^{\mathrm{dp}})^{\top}\Delta A_\ell^{\mathrm{dp}}}{m_\ell}.
\end{aligned}
\end{equation}
with analogous updates for $m_{B,\ell},V_{B,\ell}$ (replacing $m_\ell$ by $n_\ell$). We precondition on the \emph{right} by inverse square roots and set the adaptive direction
\begin{align}\label{eq:precond_dir}
U_{A,\ell} &= m_{A,\ell}\,(V_{A,\ell} + \lambda_{A,\ell} I)^{-1/2},\nonumber\\
U_{B,\ell} &= m_{B,\ell}\,(V_{B,\ell} + \lambda_{B,\ell} I)^{-1/2}.
\end{align}
Under a gauge action $(A_\ell,B_\ell)\mapsto(A_\ell R,B_\ell R^{-\top})$, $V_{A,\ell}$ and $V_{B,\ell}$ transform by congruence and \eqref{eq:precond_dir} yields the same intrinsic update $U_{A,\ell} B_\ell^\top + A_\ell U_{B,\ell}^\top$.

\underline{DP-aware floors and conditioning control.}
Adaptive preconditioners can amplify DP noise when $V_{A,\ell}$ or $V_{B,\ell}$ has small or ill-conditioned eigenvalues. To mitigate this, 
PRISM introduces DP-aware floors $\lambda_{A,\ell},\lambda_{B,\ell}$, scaled according to the known DP noise level $\big(\frac{\sigma C}{b}\big)^2$ and the geometry of the current LoRA module. 
For isotropic tangent noise, the rank-space noise covariances satisfy $\E[\Xi_{A,\ell}^\top\Xi_{A,\ell}/m_\ell]=\frac{m_\ell-r}{m_\ell}N_\ell^{-1}$ and $\E[\Xi_{B,\ell}^\top\Xi_{B,\ell}/n_\ell]=M_\ell^{-1}$ (Appendix~\ref{app:noise_moments}). Small eigenvalues of $M_\ell$ or $N_\ell$ therefore simultaneously increase DP noise seen by the preconditioner and degrade numerical stability. Motivated by this,
we set
\begin{align}\label{eq:noise_floor}
\lambda_{A,\ell} \asymp \bigl(\tfrac{\sigma C}{b}\bigr)^{\!2}\,
  \frac{\operatorname{tr}(N_\ell^{-1})}{r},~~
\lambda_{B,\ell} \asymp
\bigl(\tfrac{\sigma C}{b}\bigr)^{\!2}\,
  \frac{\operatorname{tr}(M_\ell^{-1})}{r}.
\end{align}
%\end{equation}
These operations yield a uniform bound on DP noise amplification under the adaptive post-processing.

% \begin{theorem}[Bounding DP noise amplification under adaptive preconditioning]
% \label{thm:precond_bound}
% Let $V\succeq 0$ and $\lambda>0$, and define $\mathsf{P}=(V+\lambda I)^{-1/2}$.
% Then $\|\mathsf{P}\|_2\le \lambda^{-1/2}$ and for any matrix $X$,
% \begin{equation}
% \label{eq:precond_bound}
% \|X\mathsf{P}\|_F^2 \le \lambda^{-1}\|X\|_F^2.
% \end{equation}
% Apply $X\in\{\Delta A_\ell^{\mathrm{dp}},\Delta B_\ell^{\mathrm{dp}}\}$ and the floors in Eq.~\eqref{eq:noise_floor}, Eq.~\eqref{eq:precond_bound} implies that the amplification of DP perturbation energy induced by PRISM’s adaptive post-processing is bounded by a controlled factor; update clipping further enforces a deterministic bound on the intrinsic step. %implies that PRISM's adaptive post-processing cannot amplify the DP perturbation energy by more than a controlled factor; update clipping further enforces a deterministic bound on the intrinsic step.
% \end{theorem}
\begin{theorem}[Bounding DP noise amplification under adaptive preconditioning]
\label{thm:precond_bound}
Let $V\succeq 0$ and $\lambda>0$, and define $\mathsf{P}=V+\lambda I$.
Then $\|\mathsf{P}^{-1/2}\|_2\le \lambda^{-1/2}$ and for any matrix $X$,
\begin{equation}
\label{eq:precond_bound}
\|X\mathsf{P}^{-1/2}\|_F^2 \le \lambda^{-1}\|X\|_F^2.
\end{equation}
% Apply $X\in\{\Delta A_\ell^{\mathrm{dp}},\Delta B_\ell^{\mathrm{dp}}\}$ and the floors in Eq.~\eqref{eq:noise_floor}, Eq.~\eqref{eq:precond_bound} implies that the amplification of DP perturbation energy induced by PRISM’s adaptive post-processing is bounded by a controlled factor; update clipping further enforces a deterministic bound on the intrinsic step. %implies that PRISM's adaptive post-processing cannot amplify the DP perturbation energy by more than a controlled factor; update clipping further enforces a deterministic bound on the intrinsic step.
% Apply $X\in\{\Delta A_\ell^{\mathrm{dp}},\Delta B_\ell^{\mathrm{dp}}\}$ and the floors in Eq.~\eqref{eq:noise_floor}. Then Eq.~\eqref{eq:precond_bound} implies that the amplification of DP perturbation energy induced by PRISM’s adaptive post-processing is bounded by a controlled factor.
\end{theorem}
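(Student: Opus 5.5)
The plan is to work in the spectral decomposition of the symmetric positive semidefinite matrix $V$. First, since $V\succeq 0$ is symmetric, write $V=Q\Lambda Q^\top$ with $Q$ orthogonal and $\Lambda=\mathrm{diag}(\mu_1,\dots,\mu_r)$, $\mu_j\ge 0$. Then $\mathsf{P}=V+\lambda I=Q(\Lambda+\lambda I)Q^\top$ is symmetric positive definite, with every eigenvalue $\mu_j+\lambda\ge\lambda>0$. In particular, the inverse square root is well defined through the functional calculus: $\mathsf{P}^{-1/2}=Q(\Lambda+\lambda I)^{-1/2}Q^\top$.

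For the operator-norm claim, note that $\mathsf{P}^{-1/2}$ is symmetric positive definite. Its spectral norm therefore equals its largest eigenvalue:
\[
\|\mathsf{P}^{-1/2}\|_2=\max_j(\mu_j+\lambda)^{-1/2}\le\lambda^{-1/2},
\]
because $t\mapsto t^{-1/2}$ is decreasing on $(0,\infty)$ and $\mu_j+\lambda\ge\lambda$ for every $j$.

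For the Frobenius bound Eq.~\eqref{eq:precond_bound}, use the standard inequality $\|XY\|_F\le\|X\|_F\|Y\|_2$. I would verify it quickly row by row: for each row $x_k^\top$ of $X$, we have $\|x_k^\top Y\|_2\le\|Y\|_2\|x_k\|_2$, and summing the squares over the rows gives the claim. Taking $Y=\mathsf{P}^{-1/2}$ yields
\[
\|X\mathsf{P}^{-1/2}\|_F^2\le\|\mathsf{P}^{-1/2}\|_2^2\,\|X\|_F^2\le\lambda^{-1}\|X\|_F^2.
\]
A sharper form follows from the trace identity
\[
\|X\mathsf{P}^{-1/2}\|_F^2=\operatorname{tr}\!\bigl(X\mathsf{P}^{-1}X^\top\bigr)\le\lambda^{-1}\operatorname{tr}(XX^\top),
\]
which holds because $\mathsf{P}^{-1}\preceq\lambda^{-1}I$ and the congruence $X(\cdot)X^\top$ preserves the Loewner order.

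I do not expect a genuine obstacle. The only points needing care are that $\mathsf{P}^{-1/2}$ is well defined and symmetric, which requires $\lambda>0$ and is why the floor matters even when $V$ is singular, and that the preconditioner multiplies on the right, which matches the submultiplicativity used above.

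Finally, to connect with the surrounding discussion, set $X=\Delta A_\ell^{\mathrm{dp}}$ or $X=\Delta B_\ell^{\mathrm{dp}}$ and take $\lambda=\lambda_{A,\ell}$ or $\lambda=\lambda_{B,\ell}$ from Eq.~\eqref{eq:noise_floor}. The resulting bound on the energy amplification factor is $\lambda^{-1}$, which is explicit in $(\sigma C/b)^2$ and in $\operatorname{tr}(N_\ell^{-1})$ or $\operatorname{tr}(M_\ell^{-1})$.
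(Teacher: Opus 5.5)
Your proof is correct and follows the paper's argument essentially verbatim: you diagonalize $V$ to get $\|\mathsf{P}^{-1/2}\|_2=\max_j(\mu_j+\lambda)^{-1/2}\le\lambda^{-1/2}$, then apply $\|XY\|_F\le\|X\|_F\|Y\|_2$ with the right-multiplied preconditioner and square. Your trace-identity variant $\|X\mathsf{P}^{-1/2}\|_F^2=\operatorname{tr}(X\mathsf{P}^{-1}X^\top)\le\lambda^{-1}\operatorname{tr}(XX^\top)$ is also valid, but it gives the same final bound rather than a sharper one.
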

Theorem~\ref{thm:precond_bound} formalizes Issue III (Eq.~\eqref{eq:precond_dp}): DP-noise amplification under inverse-square-root right-preconditioning is governed by the preconditioner's spectral gain $g(\mathsf{P})\triangleq\|\mathsf{P}^{-1/2}\|_2=1/\sqrt{\lambda_{\min}^{+}(\mathsf{P})}$. A floor $\mathsf{P}=V+\lambda I$ forces $g(\mathsf{P})\le 1/\sqrt{\lambda}$, hence the Frobenius energy of any perturbation can increase by at most $1/\lambda$.

In PRISM, Eq.~\eqref{eq:dp_tangent_noise} injects DP noise into $(\Delta A_\ell^{\mathrm{dp}},\Delta B_\ell^{\mathrm{dp}})$, and Eq.~\eqref{eq:precond_dir} post-processes it by $(V_{A,\ell}+\lambda_{A,\ell}I)^{-1/2}$ and $(V_{B,\ell}+\lambda_{B,\ell}I)^{-1/2}$. The DP-aware floors in Eq.~\eqref{eq:noise_floor} keep $\lambda_{A,\ell},\lambda_{B,\ell}$ from collapsing 
 when $V_{A,\ell},V_{B,\ell}$ (or $M_\ell,N_\ell$) are ill-conditioned, so Eq.~\eqref{eq:precond_bound} yields controlled DP-noise amplification (bounded by $1/\lambda_{A,\ell}$ and $1/\lambda_{B,\ell}$), and intrinsic clipping caps the final step size (Appendix~\ref{app:precond}).

\begin{table*}[t]
\caption{Math-10K results on \texttt{Gemma-3-4B-pt}, \texttt{Gemma-2-9B}, and \texttt{Gemma-3-12B-pt} under DP ($r=16$, $\varepsilon=6$, $\delta=10^{-5}$). The “Type” column indicates the backbone family; all experiments use text-only inputs. Bold indicates the best result within each backbone.}
\label{tab:larger_backbones}
\centering
\small
\setlength{\tabcolsep}{4pt}
\begin{tabular}{lllccccc}
\toprule
Backbone & Type & Method & GSM8K & AQuA & MAWPS & SVAMP & Avg \\
\midrule
\texttt{Gemma-3-4B-pt} & Multimodal & AdamW & 0.441 & \textbf{0.465} & 0.761 & 0.615 & 0.571 \\
               &            & LoRA+ & 0.446 & 0.409 & 0.786 & 0.611 & 0.563 \\
               &            & PRISM & \textbf{0.469} & 0.445 & \textbf{0.819} & \textbf{0.626} & \textbf{0.590} \\
\midrule
\texttt{Gemma-2-9B}     & Text-only  & AdamW & 0.6473 & 0.4979 & 0.8093 & 0.7570 & 0.6779 \\
               &            & LoRA+ & 0.6293 & 0.4409 & 0.8067 & 0.6970 & 0.6435 \\
               &            & PRISM & \textbf{0.6603} & \textbf{0.5197} & \textbf{0.8487} & \textbf{0.7790} & \textbf{0.7019} \\
\midrule
\texttt{Gemma-3-12B-pt} & Multimodal & AdamW & 0.5807 & 0.4764 & 0.7311 & 0.6870 & 0.6188 \\
               &            & LoRA+ & 0.6346 & 0.5039 & \textbf{0.8193} & 0.7460 & 0.6760 \\
               &            & PRISM & \textbf{0.6535} & \textbf{0.5315} & \textbf{0.8193} & \textbf{0.7820} & \textbf{0.6966} \\
\bottomrule
\end{tabular}
\end{table*}

\section{Experiments}
\label{sec:experiments}

We benchmark \textsc{PRISM} for private LoRA fine-tuning on two multi-task instruction suites: \textbf{GLUE8} (NLU) and \textbf{Math-10K} (multi-step numerical reasoning), spanning diverse linguistic phenomena and compositional reasoning tasks to assess robustness across settings.

\textbf{Setup.}
We fine-tune the \texttt{Gemma-3-4B-pt} backbone \citep{gemma3} with LoRA \citep{hu2022lora}.
Our implementation follows LLM-Adapters \citep{hu2023llm_adapters,llm_adapters_code}:
Math-10K is used in its original form, while GLUE8 is constructed from GLUE \citep{wang2018glue} in the same instruction-format interface.
We report both \textbf{non-private} results and \textbf{$(\varepsilon,\delta)$-DP} results using DP-SGD \citep{abadi2016deep} with
$\varepsilon \in \{3,6\}$ and $\delta=10^{-5}$.
We use Opacus \citep{yousefpour2021opacus,opacus_privacy_engine} with the default PRV accountant \citep{gopi2021numerical}.
Full details are provided in Appendix~\ref{app:exp_details}.

\textbf{Datasets and metrics.}
\textbf{GLUE8} consists of eight GLUE tasks (excluding WNLI) \citep{wang2018glue}.
We evaluate on the official validation splits using standard GLUE metrics.
\textbf{Math-10K} combines GSM8K \citep{cobbe2021gsm8k}, AQuA \citep{ling2017program}, MAWPS \citep{koncel-kedziorski-etal-2016-mawps}, and SVAMP \citep{patel2021svamp} via LLM-Adapters \citep{hu2023llm_adapters}.
Performance is measured by exact answer accuracy using the LLM-Adapters protocol.

\textbf{Baselines.}
We compare \textsc{PRISM} against
\textbf{FFA} \citep{sun2024improvinglora},
\textbf{LoRA-RITE} \citep{yen2025rite},
\textbf{AdamW} \citep{kingma2015adam,loshchilov2019adamw},
\textbf{LoRA+} \citep{hayou2024loraplus},
and \textbf{LAMB} \citep{you2020lamb}.

\textbf{Main Results and Interpretation.}
Table~2 reports utility under non-private training and DP
training with $\varepsilon\in\{6,3\}$. Without DP,
LoRA-RITE achieves the best average performance, which is
expected since PRISM is designed to address
\emph{DP-specific} issues rather than to improve non-private
optimization. Under DP, PRISM achieves the best average
performance at both privacy budgets (0.690 at
$\varepsilon=6$; 0.680 at $\varepsilon=3$) and wins the
majority of tasks (8/12 and 7/12, respectively). PRISM is not best on every task (e.g., SST-2, RTE, and AQuA).
This is expected because
GLUE8 and Math-10K are trained as task suites
with shared hyperparameters, while individual tasks have
different convergence rates and sensitivities to DP noise.

The mechanism-level distinction is where the DP perturbation
is applied. Factor-space DP perturbs the factors $(A,B)$;
after multiplication, the effective perturbation on
$Z=AB^\top$ is scaled by the current factor norms and
therefore depends on the chosen gauge. Thus, under the same
nominal privacy budget, factor-space DP can induce uneven
intrinsic noise across layers and tasks, causing some
components to be over-noised. PRISM instead clips and
noises the gauge-invariant tangent update of $Z$, so the
induced intrinsic noise is bounded and independent of the
factorization. This makes the private updates more stable
across heterogeneous task suites and improves the average
DP utility, even when another optimizer is best on a few
individual tasks.

\textbf{Additional Scaling and Overhead Results.}
To assess robustness beyond the main setting, we further evaluate PRISM on larger backbones, across varying LoRA ranks, and with explicit runtime and memory profiling. For the backbone type, \texttt{Gemma-2-9B} is a text-only language model \citep{gemma2}, while \texttt{Gemma-3-4B-pt} and \texttt{Gemma-3-12B-pt} belong to the multimodal Gemma 3 family \citep{gemma3}; all benchmark inputs in our experiments are text-only.

Table~\ref{tab:larger_backbones} shows that PRISM’s advantage persists on both \texttt{Gemma-2-9B} and
\texttt{Gemma-3-12B-pt}. Table~\ref{tab:rank_sweep} shows that PRISM remains consistently the best method across ranks $r\in\{8,16,32\}$ on
\texttt{Gemma-3-4B-pt}. Finally, Table~\ref{tab:runtime_memory}
shows that PRISM’s overhead is primarily in runtime rather than memory usage, consistent with its additional geometry-aware computations. 

%To assess robustness beyond the main setting, we further evaluate PRISM on larger backbones, under varied ranks, and with explicit runtime/memory profiling. For the backbone type, Gemma-2-9B is a text-only language model \citep{gemma2}, while Gemma-3-4B-pt and Gemma-3-12B-pt are from the multimodal Gemma 3 family \citep{gemma3}; all benchmark inputs in our experiments are text-only. 

\begin{table}[h]
\caption{Rank sensitivity on \texttt{Gemma-3-4B-pt} under DP
($\varepsilon=6$, $\delta=10^{-5}$; text-only inputs).
Avg is the unweighted mean over all 12 tasks. PRISM is best
across all tested ranks.}
\label{tab:rank_sweep}
\centering
\small
\setlength{\tabcolsep}{4.5pt}
\begin{tabular}{llccc}
\toprule
Rank & Method & GLUE8 Avg & Math Avg & Avg \\
\midrule
$r=8$  & AdamW  & 0.659 & 0.522 & 0.614 \\
       & LoRA+  & 0.704 & 0.543 & 0.650 \\
       & PRISM  & \textbf{0.744} & \textbf{0.565} & \textbf{0.684} \\
\midrule
$r=16$ & AdamW  & 0.676 & 0.571 & 0.641 \\
       & LoRA+  & 0.730 & 0.563 & 0.674 \\
       & PRISM  & \textbf{0.740} & \textbf{0.590} & \textbf{0.690} \\
\midrule
$r=32$ & AdamW  & 0.682 & 0.542 & 0.636 \\
       & LoRA+  & 0.721 & 0.516 & 0.653 \\
       & PRISM  & \textbf{0.740} & \textbf{0.566} & \textbf{0.682} \\
\bottomrule
\end{tabular}
\end{table}

\begin{table}[h]
\caption{Runtime and memory profiling on \texttt{Gemma-3-4B-pt}
for Math-10K
($r=16$, $\varepsilon=6$). Measurements use a single
A100-40GB GPU with 10 warmup updates and 30 measured
updates. PRISM roughly doubles step time in the current
implementation, while peak memory is essentially unchanged.}
\label{tab:runtime_memory}
\centering
\small
\setlength{\tabcolsep}{6pt}
\begin{tabular}{lcc}
\toprule
Method & Step time (s) & Peak memory (MB) \\
\midrule
LoRA+ & 9.32  & 20961.1 \\
AdamW & 9.37  & 20961.1 \\
PRISM & 18.64 & 20964.3 \\
\bottomrule
\end{tabular}
\end{table}

\textbf{Mechanism Diagnostics (Three Issues).}
We next analyze how DP clipping and noise injection propagate to the intrinsic update $Z$ during Math-10K training (300 steps), and relate these effects to the three issues identified earlier.
\Cref{fig:issue2_amp_vs_step} illustrates systematic amplification of intrinsic noise in factor-space DP compared to \textsc{PRISM}.

\underline{\textit{Issue I: gauge sensitivity.}}
We perform DP training from gauge-rescaled initializations $(A,B)\mapsto(cA,c^{-1}B)$ with $c\in\{0.25,0.5,1,2,4\}$.
\Cref{fig:pathology1_dz_range} shows that factor-space DP remains sensitive to this benign reparameterization (Eq.~\eqref{eq:gauge_norm}), whereas \textsc{PRISM} quickly reduces this variability to near zero after warm-up (Appendix~\ref{app:pathology1_extra}).

\begin{figure}[h]
  \centering
  \includegraphics[width=\linewidth]{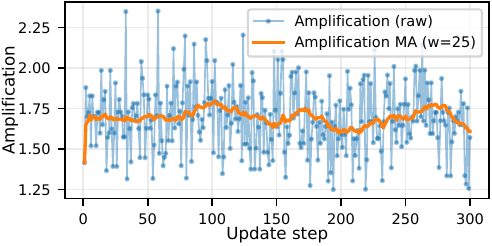}
  \caption{\textbf{Intrinsic DP-noise amplification during training (Math-10K).}
  We plot the per-step ratio $\|N_Z\|_F^{\text{fac}}/\|N_Z\|_F^{\text{\textsc{PRISM}}}$, where $\|N_Z\|_F$ is the Frobenius norm of the effective DP noise on the merged LoRA update $Z$. The blue curve reports the raw per-step ratio, and the orange
curve reports its moving average (MA) with window size $w=25$
updates.
  Values $>1$ indicate that applying DP-SGD in factor space $(A,B)$ injects a larger intrinsic perturbation into $Z$ than \textsc{PRISM} under the same privacy budget.}
  \label{fig:issue2_amp_vs_step}
\end{figure}
\begin{figure}[h]
  \centering
  \includegraphics[width=\linewidth]{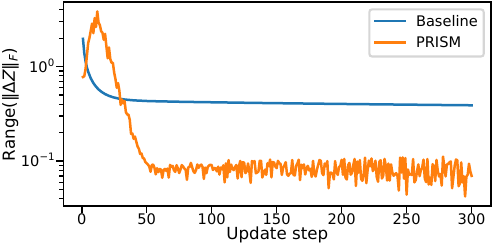}
  \caption{\textbf{Gauge sensitivity under DP (Math-10K).}
  At step $t$ we compute $\rho_t=\max_c\|\Delta Z_t\|_F-\min_c\|\Delta Z_t\|_F$ across gauge-rescaled runs; smaller is better and $\rho_t\approx 0$ indicates practical gauge invariance.
  \textsc{PRISM} drives $\rho_t$ near zero, whereas factor-space DP exhibits persistently large $\rho_t$.}
  \label{fig:pathology1_dz_range}
\end{figure}

\underline{\textit{Issue II: gauge-dependent intrinsic noise.}}
Proposition~\ref{prop:amp} predicts $\mathbb{E}\|N_Z\|_F^2=\tau^2 S_t$ for factor-space DP, where $S_t=\sum_\ell(m_\ell\|B_\ell\|_F^2+n_\ell\|A_\ell\|_F^2)$ varies under gauge rescaling.
\Cref{fig:issue2_noise_sq_vs_St} confirms this linear scaling behavior for the baseline, while \textsc{PRISM} remains substantially lower and nearly invariant to $S_t$ (Eq.~\eqref{eq:eff_prism}).
Appendix~\ref{app:issue2_gauge_sweep} further fixes $Z$ and varies $c$ to reproduce Corollary~\ref{cor:unbounded}.

\begin{figure}[t]
  \centering
  \includegraphics[width=\linewidth]{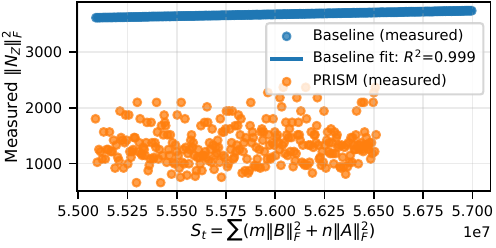}
  \caption{\textbf{Gauge-dependent intrinsic DP noise (Math-10K).}
  We plot the measured intrinsic noise energy $\|N_Z\|_F^2$ against the gauge-dependent statistic $S_t$.
  A strong linear trend indicates that the amount of DP noise injected into $Z$ depends on the factorization; \textsc{PRISM} largely removes this dependence and keeps $\|N_Z\|_F^2$ low.}
  \label{fig:issue2_noise_sq_vs_St}
\end{figure}

\underline{\textit{Issue III: DP under adaptive preconditioning.}}
\Cref{fig:path3_sigma_precond} sweeps the DP noise multiplier $\sigma$ and measures the resulting preconditioned intrinsic noise magnitude (Eq.~\eqref{eq:precond_dp}).
Factor-space DP-AdamW exhibits a ``noise-normalization'' effect (Proposition~\ref{prop:noise_normalization}), whereas \textsc{PRISM} with DP-aware floors (Eq.~\eqref{eq:noise_floor}) consistently reduces the perturbation across all $\sigma$, which aligns with Theorem~\ref{thm:precond_bound} (Appendix~\ref{app:path3}).

\begin{figure}[H]
  \centering
  \includegraphics[width=\linewidth]{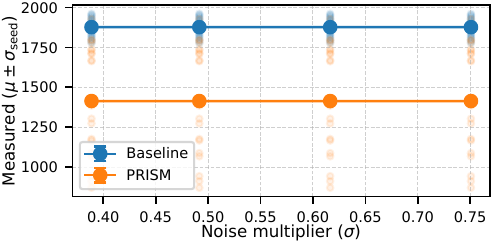}
  \caption{\textbf{Preconditioned intrinsic DP noise vs.\ $\sigma$ (Math-10K).}
  We report $\mathbb{E}\|\mathsf{P}_t^{-1/2}\boldsymbol{\xi}_{\mathrm{intr}}\|_F$; lower means the optimizer applies less stochastic perturbation after preconditioning.
  Factor-space DP becomes nearly $\sigma$-invariant, while \textsc{PRISM} keeps the preconditioned noise smaller via DP-aware floors.}
  \label{fig:path3_sigma_precond}
\end{figure}

\paragraph{Limitations.}
PRISM is tailored to LoRA-style fixed-rank updates; extending it to other PEFT methods requires deriving the corresponding intrinsic geometry. Its main practical cost is runtime overhead from geometry-aware operations, while peak memory remains nearly unchanged.
% \section{Conclusion}
% \label{sec:conclusion}

% We studied differential privacy for LoRA under the correct intrinsic parameterization.
% Naively applying DP-SGD to LoRA factors introduces gauge-dependent clipping/noising and an additional bilinear noise term, yielding an \emph{effective intrinsic noise} that can be amplified without bound by benign reparameterizations.
% PRISM resolves these issues by defining the DP mechanism on the rank-$r$ manifold: project per-example gradients to the tangent space, clip and add an isotropic tangent Gaussian, and retract to rank $r$.
% This construction is gauge invariant as a randomized mechanism and yields a closed-form, deterministic intrinsic noise level.
% Empirically, PRISM improves DP utility on both GLUE8 and Math-10K under the same privacy budget.

\section*{Acknowledgements} This work was funded in part by the National Science Foundation under award number  IIS2202699, IIS-2416895, IIS-2301599, CMMI2301601, and DMS-2529302.

\section*{Impact Statement}
This work develops a differentially private method for LoRA fine-tuning.
We expect it to reduce privacy risks when adapting models on sensitive data; we do not foresee additional negative societal impacts beyond those typical of deploying ML systems.

\bibliographystyle{icml2026}
\bibliography{prism_icml2026}

@inproceedings{dwork2006calibrating,
  title={Calibrating noise to sensitivity in private data analysis},
  author={Dwork, Cynthia and McSherry, Frank and Nissim, Kobbi and Smith, Adam},
  booktitle={Theory of cryptography conference},
  pages={265--284},
  year={2006},
  organization={Springer}
}

@article{dwork2014algorithmic,
  title={The Algorithmic Foundations of Differential Privacy},
  author={Dwork, Cynthia and Roth, Aaron},
  journal={Foundations and Trends in Theoretical Computer Science},
  volume={9},
  number={3--4},
  pages={211--407},
  year={2014},
  publisher={Now Publishers Inc.},
  doi={10.1561/0400000042},
  url={https://doi.org/10.1561/0400000042}
}

@article{gemma2,
  title={Gemma 2: Improving Open Language Models at a Practical Size},
  author={{Gemma Team} and Riviere, Morgane and Pathak, Shreya and Sessa, Pier Giuseppe and Hardin, Cassidy and Bhupatiraju, Surya and Hussenot, L{\'e}onard and Mesnard, Thomas and Shahriari, Bobak and Ram{\'e}, Alexandre and others},
  journal={arXiv preprint arXiv:2408.00118},
  year={2024},
  doi={10.48550/arXiv.2408.00118},
  url={https://arxiv.org/abs/2408.00118}
}

@article{gemma3,
  title={Gemma 3 Technical Report},
  author={{Gemma Team} and Kamath, Aishwarya and Ferret, Johan and Pathak, Shreya and Vieillard, Nino and Merhej, Ramona and Perrin, Sarah and Matejovicova, Tatiana and Ram{\'e}, Alexandre and Rivi{\`e}re, Morgane and others},
  journal={arXiv preprint arXiv:2503.19786},
  year={2025},
  doi={10.48550/arXiv.2503.19786},
  url={https://arxiv.org/abs/2503.19786}
}

@inproceedings{abadi2016deep,
  title={Deep learning with differential privacy},
  author={Abadi, Martin and Chu, Andy and Goodfellow, Ian and McMahan, H Brendan and Mironov, Ilya and Talwar, Kunal and Zhang, Li},
  booktitle={Proceedings of the 2016 ACM SIGSAC conference on computer and communications security},
  pages={308--318},
  year={2016}
}

@inproceedings{
hu2022lora,
title={Lo{RA}: Low-Rank Adaptation of Large Language Models},
author={Edward J Hu and Yelong Shen and Phillip Wallis and Zeyuan Allen-Zhu and Yuanzhi Li and Shean Wang and Lu Wang and Weizhu Chen},
booktitle={International Conference on Learning Representations},
year={2022},
url={https://openreview.net/forum?id=nZeVKeeFYf9}
}

@inproceedings{
yu2022dpftlm,
title={Differentially Private Fine-tuning of Language Models},
author={Da Yu and Saurabh Naik and Arturs Backurs and Sivakanth Gopi and Huseyin A Inan and Gautam Kamath and Janardhan Kulkarni and Yin Tat Lee and Andre Manoel and Lukas Wutschitz and Sergey Yekhanin and Huishuai Zhang},
booktitle={International Conference on Learning Representations},
year={2022},
url={https://openreview.net/forum?id=Q42f0dfjECO}
}

@article{liu2023dplora,
  title={Differentially Private Low-Rank Adaptation of Large Language Model Using Federated Learning},
  author={Liu, Xiao-Yang and Zhu, Rongyi and Zha, Daochen and Gao, Jiechao and Zhong, Shan and White, Matt and Qiu, Meikang},
  journal={ACM Transactions on Management Information Systems},
  volume={16},
  number={2},
  year={2025},
  doi={10.1145/3682068},
  url={https://doi.org/10.1145/3682068}
}

@inproceedings{xu2025dpfedlora,
  title={{DP-FedLoRA}: Privacy-Enhanced Federated Fine-Tuning for On-Device Large Language Models},
  author={Xu, Honghui and Shrestha, Shiva and Chen, Wei and Li, Zhiyuan and Cai, Zhipeng},
  booktitle={2025 IEEE International Conference on Data Mining (ICDM)},
  pages={813--822},
  year={2025},
  publisher={IEEE},
  doi={10.1109/ICDM65498.2025.00089},
  url={https://doi.org/10.1109/ICDM65498.2025.00089}
}

@inproceedings{
sun2024improvinglora,
title={Improving {LoRA} in Privacy-preserving Federated Learning},
author={Youbang Sun and Zitao Li and Yaliang Li and Bolin Ding},
booktitle={The Twelfth International Conference on Learning Representations},
year={2024},
url={https://openreview.net/forum?id=NLPzL6HWNl}
}

@inproceedings{kingma2015adam,
  title     = {Adam: A Method for Stochastic Optimization},
    author    = {Kingma, Diederik P. and Ba, Jimmy},

  booktitle = {International Conference on Learning Representations (ICLR)},
  year      = {2015},
  url       = {https://arxiv.org/abs/1412.6980}
}

@inproceedings{pmlr-v162-li22x,
  title={Private Adaptive Optimization with Side information},
  author={Li, Tian and Zaheer, Manzil and Reddi, Sashank and Smith, Virginia},
  booktitle={Proceedings of the 39th International Conference on Machine Learning},
  pages={13086--13105},
  year={2022},
  volume={162},
  series={Proceedings of Machine Learning Research},
  publisher={PMLR},
  url={https://proceedings.mlr.press/v162/li22x.html}
}

@inproceedings{
li2022differentially,
title={Differentially Private Adaptive Optimization with Delayed Preconditioners},
author={Tian Li and Manzil Zaheer and Ken Liu and Sashank J. Reddi and Hugh Brendan McMahan and Virginia Smith},
booktitle={The Eleventh International Conference on Learning Representations },
year={2023},
url={https://openreview.net/forum?id=j1zQGmQQOX1}
}

@article{tang2023dpadambc,
  title={{DP-AdamBC}: Your {DP-Adam} Is Actually {DP-SGD} (Unless You Apply Bias Correction)},
  author = {Tang, Qiaoyue and Shpilevskiy, Frederick and L\'{e}cuyer, Mathias},
  journal={Proceedings of the AAAI Conference on Artificial Intelligence},
  volume={38},
  number={14},
  pages={15276--15283},
  year={2024},
  doi={10.1609/aaai.v38i14.29451},
  url={https://ojs.aaai.org/index.php/AAAI/article/view/29451}
}

@inproceedings{
loshchilov2019adamw,
title={Decoupled Weight Decay Regularization},
author={Ilya Loshchilov and Frank Hutter},
booktitle={International Conference on Learning Representations},
year={2019},
url={https://openreview.net/forum?id=Bkg6RiCqY7},
}

@inproceedings{
yen2025rite,
title={{LoRA} {Done} {RITE}: Robust Invariant Transformation Equilibration for {LoRA} Optimization},
author={Jui-Nan Yen and Si Si and Zhao Meng and Felix Yu and Sai Surya Duvvuri and Inderjit S Dhillon and Cho-Jui Hsieh and Sanjiv Kumar},
booktitle={The Thirteenth International Conference on Learning Representations},
year={2025},
url={https://openreview.net/forum?id=VpWki1v2P8}
}

@inproceedings{fredrikson2015modelinversion,
  title={Model Inversion Attacks that Exploit Confidence Information and Basic Countermeasures},
  author={Fredrikson, Matt and Jha, Somesh and Ristenpart, Thomas},
  booktitle={Proceedings of the 22nd ACM SIGSAC Conference on Computer and Communications Security},
  series={CCS '15},
  pages={1322--1333},
  year={2015},
  publisher={Association for Computing Machinery},
  doi={10.1145/2810103.2813677},
  url={https://doi.org/10.1145/2810103.2813677}
}

@inproceedings{ganju2018property,
  title={Property Inference Attacks on Fully Connected Neural Networks using Permutation Invariant Representations},
  author={Ganju, Karan and Wang, Qi and Yang, Wei and Gunter, Carl A. and Borisov, Nikita},
  booktitle={Proceedings of the 2018 ACM SIGSAC Conference on Computer and Communications Security},
  series={CCS '18},
  pages={619--633},
  year={2018},
  publisher={Association for Computing Machinery},
  doi={10.1145/3243734.3243834},
  url={https://doi.org/10.1145/3243734.3243834}
}

@misc{yousefpour2021opacus,
      title={Opacus: User-Friendly Differential Privacy Library in {PyTorch}}, 
      author={Ashkan Yousefpour and Igor Shilov and Alexandre Sablayrolles and Davide Testuggine and Karthik Prasad and Mani Malek and John Nguyen and Sayan Ghosh and Akash Bharadwaj and Jessica Zhao and Graham Cormode and Ilya Mironov},
      year={2022},
      eprint={2109.12298},
      archivePrefix={arXiv},
      primaryClass={cs.LG},
      url={https://arxiv.org/abs/2109.12298}, 
}

@inproceedings{houlsby2019adapter,
  title={Parameter-Efficient Transfer Learning for {NLP}},
  author={Houlsby, Neil and Giurgiu, Andrei and Jastrzebski, Stanislaw and Morrone, Bruna and De Laroussilhe, Quentin and Gesmundo, Andrea and Attariyan, Mona and Gelly, Sylvain},
  booktitle={Proceedings of the 36th International Conference on Machine Learning},
  pages={2790--2799},
  year={2019},
  volume={97},
  series={Proceedings of Machine Learning Research},
  publisher={PMLR},
  url={https://proceedings.mlr.press/v97/houlsby19a.html}
}

@inproceedings{li2021prefix,
  title={{Prefix-Tuning}: Optimizing Continuous Prompts for Generation},
  author={Li, Xiang Lisa and Liang, Percy},
  booktitle={Proceedings of the 59th Annual Meeting of the Association for Computational Linguistics and the 11th International Joint Conference on Natural Language Processing (Volume 1: Long Papers)},
  pages={4582--4597},
  year={2021},
  publisher={Association for Computational Linguistics},
  doi={10.18653/v1/2021.acl-long.353},
  url={https://aclanthology.org/2021.acl-long.353/}
}

@inproceedings{lester2021prompt,
  title={The Power of Scale for Parameter-Efficient Prompt Tuning},
  author={Lester, Brian and Al-Rfou, Rami and Constant, Noah},
  booktitle={Proceedings of the 2021 Conference on Empirical Methods in Natural Language Processing},
  pages={3045--3059},
  year={2021},
  publisher={Association for Computational Linguistics},
  doi={10.18653/v1/2021.emnlp-main.243},
  url={https://aclanthology.org/2021.emnlp-main.243/}
}

@inproceedings{zaken2021bitfit,
  title={{B}it{F}it: Simple Parameter-efficient Fine-tuning for Transformer-based Masked Language-models},
  author={Ben Zaken, Elad and Goldberg, Yoav and Ravfogel, Shauli},
  booktitle={Proceedings of the 60th Annual Meeting of the Association for Computational Linguistics (Volume 2: Short Papers)},
  pages={1--9},
  year={2022},
  publisher={Association for Computational Linguistics},
  doi={10.18653/v1/2022.acl-short.1},
  url={https://aclanthology.org/2022.acl-short.1/}
}

@inproceedings{
dettmers2023qlora,
title={{QL}o{RA}: Efficient Finetuning of Quantized {LLM}s},
author={Tim Dettmers and Artidoro Pagnoni and Ari Holtzman and Luke Zettlemoyer},
booktitle={Thirty-seventh Conference on Neural Information Processing Systems},
year={2023},
url={https://openreview.net/forum?id=OUIFPHEgJU}
}

@inproceedings{
frantar2023loftq,
  title={{LoftQ}: {LoRA}-Fine-Tuning-aware Quantization for Large Language Models},

author={Yixiao Li and Yifan Yu and Chen Liang and Nikos Karampatziakis and Pengcheng He and Weizhu Chen and Tuo Zhao},
booktitle={The Twelfth International Conference on Learning Representations},
year={2024},
url={https://openreview.net/forum?id=LzPWWPAdY4}
}

@inproceedings{liu2022ia3,
 author = {Liu, Haokun and Tam, Derek and Muqeeth, Mohammed and Mohta, Jay and Huang, Tenghao and Bansal, Mohit and Raffel, Colin},
 booktitle = {Advances in Neural Information Processing Systems},
 editor = {S. Koyejo and S. Mohamed and A. Agarwal and D. Belgrave and K. Cho and A. Oh},
 pages = {1950--1965},
 publisher = {Curran Associates, Inc.},
 title = {Few-Shot Parameter-Efficient Fine-Tuning is Better and Cheaper than In-Context Learning},

 volume = {35},
 year = {2022}
}

@inproceedings{
you2020lamb,
title={Large Batch Optimization for Deep Learning: Training {BERT} in 76 minutes},
author={Yang You and Jing Li and Sashank Reddi and Jonathan Hseu and Sanjiv Kumar and Srinadh Bhojanapalli and Xiaodan Song and James Demmel and Kurt Keutzer and Cho-Jui Hsieh},
booktitle={International Conference on Learning Representations},
year={2020},
url={https://openreview.net/forum?id=Syx4wnEtvH}
}

@article{edelman1998geometry,
  title={The Geometry of Algorithms with Orthogonality Constraints},
  author={Edelman, Alan and Arias, Tom{\'a}s A. and Smith, Steven T.},
  journal={SIAM Journal on Matrix Analysis and Applications},
  volume={20},
  number={2},
  pages={303--353},
  year={1998},
  doi={10.1137/S0895479895290954},
  url={https://doi.org/10.1137/S0895479895290954}
}

@article{mishra2014fixedrank,
  title={Fixed-Rank Matrix Factorizations and {Riemannian} Low-Rank Optimization},
  author={Mishra, Bamdev and Meyer, Gilles and Bonnabel, Silv{\`e}re and Sepulchre, Rodolphe},
  journal={Computational Statistics},
  volume={29},
  pages={591--621},
  year={2014},
  doi={10.1007/s00180-013-0464-z},
  url={https://doi.org/10.1007/s00180-013-0464-z}
}

@article{eckart1936approximation,
  title={The Approximation of One Matrix by Another of Lower Rank},
  author={Eckart, Carl and Young, Gale},
  journal={Psychometrika},
  volume={1},
  number={3},
  pages={211--218},
  year={1936},
  doi={10.1007/BF02288367},
  url={https://doi.org/10.1007/BF02288367}
}

@article{mirsky1960symmetric,
  title={Symmetric Gauge Functions and Unitarily Invariant Norms},
  author={Mirsky, L.},
  journal={The Quarterly Journal of Mathematics},
  volume={11},
  number={1},
  pages={50--59},
  year={1960},
  doi={10.1093/qmath/11.1.50},
  url={https://doi.org/10.1093/qmath/11.1.50}
}

@misc{cobbe2021gsm8k,
      title={Training Verifiers to Solve Math Word Problems}, 
      author={Karl Cobbe and Vineet Kosaraju and Mohammad Bavarian and Mark Chen and Heewoo Jun and Lukasz Kaiser and Matthias Plappert and Jerry Tworek and Jacob Hilton and Reiichiro Nakano and Christopher Hesse and John Schulman},
      year={2021},
      eprint={2110.14168},
      archivePrefix={arXiv},
      primaryClass={cs.LG},
      url={https://arxiv.org/abs/2110.14168}, 
}

@inproceedings{ling2017program,
  title={Program Induction by Rationale Generation: Learning to Solve and Explain Algebraic Word Problems},
  author={Ling, Wang and Yogatama, Dani and Dyer, Chris and Blunsom, Phil},
  booktitle={Proceedings of the 55th Annual Meeting of the Association for Computational Linguistics (Volume 1: Long Papers)},
  pages={158--167},
  year={2017},
  publisher={Association for Computational Linguistics},
  doi={10.18653/v1/P17-1015},
  url={https://aclanthology.org/P17-1015/}
}

@inproceedings{patel2021svamp,
  title={Are {NLP} Models really able to Solve Simple Math Word Problems?},
  author={Patel, Arkil and Bhattamishra, Satwik and Goyal, Navin},
  booktitle={Proceedings of the 2021 Conference of the North American Chapter of the Association for Computational Linguistics: Human Language Technologies},
  pages={2080--2094},
  year={2021},
  address={Online},
  publisher={Association for Computational Linguistics},
  doi={10.18653/v1/2021.naacl-main.168},
  url={https://aclanthology.org/2021.naacl-main.168/}
}

@inproceedings{wang2018glue,
  title={{GLUE}: A Multi-Task Benchmark and Analysis Platform for Natural Language Understanding},
  author={Wang, Alex and Singh, Amanpreet and Michael, Julian and Hill, Felix and Levy, Omer and Bowman, Samuel R.},
  booktitle={Proceedings of the 2018 {EMNLP} Workshop {B}lackbox{NLP}: Analyzing and Interpreting Neural Networks for {NLP}},
  pages={353--355},
  year={2018},
  address={Brussels, Belgium},
  publisher={Association for Computational Linguistics},
  doi={10.18653/v1/W18-5446},
  url={https://aclanthology.org/W18-5446/}
}

@inproceedings{hu2023llm_adapters,
  title={{LLM-Adapters}: An Adapter Family for Parameter-Efficient Fine-Tuning of Large Language Models},
  author={Hu, Zhiqiang and Wang, Lei and Lan, Yihuai and Xu, Wanyu and Lim, Ee-Peng and Bing, Lidong and Xu, Xing and Poria, Soujanya and Lee, Roy},
  booktitle={Proceedings of the 2023 Conference on Empirical Methods in Natural Language Processing},
  pages={5254--5276},
  year={2023},
  address={Singapore},
  publisher={Association for Computational Linguistics},
  doi={10.18653/v1/2023.emnlp-main.319},
  url={https://aclanthology.org/2023.emnlp-main.319/}
}

@misc{llm_adapters_code,
  title={{LLM-Adapters}: Official Implementation},
  author={{AGI-Edgerunners}},
  howpublished={\url{https://github.com/AGI-Edgerunners/LLM-Adapters}},
  year={2023},
}

@inproceedings{shokri2017membership,
  title={Membership Inference Attacks Against Machine Learning Models},
  author={Shokri, Reza and Stronati, Marco and Song, Congzheng and Shmatikov, Vitaly},
  booktitle={2017 IEEE Symposium on Security and Privacy (SP)},
  pages={3--18},
  year={2017},
  publisher={IEEE Computer Society},
  doi={10.1109/SP.2017.41},
  url={https://doi.org/10.1109/SP.2017.41}
}

@inproceedings{carlini2019secret,
  title={The Secret Sharer: Evaluating and Testing Unintended Memorization in Neural Networks},
  author={Carlini, Nicholas and Liu, Chang and Erlingsson, {\'U}lfar and Kos, Jernej and Song, Dawn},
  booktitle={28th USENIX Security Symposium (USENIX Security 19)},
  pages={267--284},
  year={2019},
  publisher={USENIX Association},
  url={https://www.usenix.org/conference/usenixsecurity19/presentation/carlini}
}

@inproceedings{carlini2021extracting,
  title={Extracting Training Data from Large Language Models},
  author={Carlini, Nicholas and Tram{\`e}r, Florian and Wallace, Eric and Jagielski, Matthew and Herbert-Voss, Ariel and Lee, Katherine and Roberts, Adam and Brown, Tom and Song, Dawn and Erlingsson, {\'U}lfar and Oprea, Alina and Raffel, Colin},
  booktitle={30th USENIX Security Symposium (USENIX Security 21)},
  pages={2633--2650},
  year={2021},
  publisher={USENIX Association},
  url={https://www.usenix.org/conference/usenixsecurity21/presentation/carlini-extracting}
}

@inproceedings{bourtoule2021machineunlearning,
  author={Bourtoule, Lucas and Chandrasekaran, Varun and Choquette-Choo, Christopher A. and Jia, Hengrui and Travers, Adelin and Zhang, Baiwu and Lie, David and Papernot, Nicolas},
  title={Machine Unlearning},
  booktitle={2021 IEEE Symposium on Security and Privacy (SP)},
  pages={141--159},
  year={2021},
  doi={10.1109/SP40001.2021.00019}
}

@inproceedings{gopi2021numerical,
  title={Numerical Composition of Differential Privacy},
  author={Gopi, Sivakanth and Lee, Yin Tat and Wutschitz, Lukas},
  booktitle={Advances in Neural Information Processing Systems},
  volume={34},
  pages={11631--11642},
  year={2021},
  publisher={Curran Associates, Inc.},
  url={https://proceedings.neurips.cc/paper_files/paper/2021/file/6097d8f3714205740f30debe1166744e-Paper.pdf}
}

@misc{opacus_privacy_engine,
  title={{Opacus} {PrivacyEngine} {API} Reference},
  author       = {{Opacus Contributors}},
  howpublished = {\url{https://opacus.ai/api/privacy_engine.html}},
  year         = {2026},
}

@inproceedings{hayou2024loraplus,
  title={{L}o{RA}+: Efficient Low Rank Adaptation of Large Models},
  author={Hayou, Soufiane and Ghosh, Nikhil and Yu, Bin},
  booktitle={Proceedings of the 41st International Conference on Machine Learning},
  pages={17783--17806},
  year={2024},
  volume={235},
  series={Proceedings of Machine Learning Research},
  publisher={PMLR},
  url={https://proceedings.mlr.press/v235/hayou24a.html}
}

@inproceedings{koncel-kedziorski-etal-2016-mawps,
    title = "{MAWPS}: A Math Word Problem Repository",
    author = "Koncel-Kedziorski, Rik  and
      Roy, Subhro  and
      Amini, Aida  and
      Kushman, Nate  and
      Hajishirzi, Hannaneh",
    editor = "Knight, Kevin  and
      Nenkova, Ani  and
      Rambow, Owen",
    booktitle = "Proceedings of the 2016 Conference of the North {A}merican Chapter of the Association for Computational Linguistics: Human Language Technologies",
    month = jun,
    year = "2016",
    address = "San Diego, California",
    publisher = "Association for Computational Linguistics",
    url = "https://aclanthology.org/N16-1136/",
    doi = "10.18653/v1/N16-1136",
    pages = "1152--1157"
}

%%%%%%%%%%%%%%%%%%%%%%%%%%%%%%%%%%%%%%%%%%%%%%%%%%%%%%%
% APPENDIX
%%%%%%%%%%%%%%%%%%%%%%%%%%%%%%%%%%%%%%%%%%%%%%%%%%%%%%%

\clearpage
\onecolumn
\appendix

\renewcommand{\projA}{\Pi_{A_{\ell}}}
\renewcommand{\projB}{\Pi_{B_{\ell}}}
\renewcommand{\projperpA}{\Pi_{A_{\ell}}^{\perp}}
\renewcommand{\projperpB}{\Pi_{B_{\ell}}^{\perp}}

\section*{Appendix Roadmap}
\label{app:roadmap}

\begingroup
\small
\setlength{\parindent}{0pt}
\setlength{\parskip}{0.15em}

\newcommand{\AppRoadmapHeader}[2]{%
  \vspace{0.35em}%
  \noindent\hyperref[#2]{\textbf{#1}}\par%
  \vspace{-0.15em}%
}
\newcommand{\AppRoadmapEntry}[3]{%
  \noindent\makebox[3.0em][l]{\hyperref[#3]{\textbf{#1}}}%
  #2\dotfill%
  \hyperref[#3]{\textit{p.~\pageref*{#3}}}\par%
}

\AppRoadmapHeader{Appendix A: Theory and proofs.}{app:theory}
\AppRoadmapEntry{A1}{Quotient geometry / gauge quotient.}{app:quotient_geometry}
\AppRoadmapEntry{A2}{General gauge amplification.}{app:general_gauge_amp}
\AppRoadmapEntry{A3}{Second-order term concentration.}{app:second_order_stats}
\AppRoadmapEntry{A4}{Naive factor-space DP amplification.}{app:naive_noise}
\AppRoadmapEntry{A5}{Effective noise range under rescaling.}{app:eff_noise_range}
\AppRoadmapEntry{A6}{Tangent space projector properties.}{app:tangent}
\AppRoadmapEntry{A7}{Gauge freedom in tangent lifts.}{app:lift_min}
\AppRoadmapEntry{A8}{Projector gauge invariance.}{app:proj}
\AppRoadmapEntry{A9}{Canonical factor lift construction.}{app:lifts}
\AppRoadmapEntry{A10}{Tangent update Frobenius norm.}{app:norm}
\AppRoadmapEntry{A11}{Rank-1 specialization.}{app:norm_rank1}
\AppRoadmapEntry{A12}{Factorized tangent noise equivalence.}{app:lowrank}
\AppRoadmapEntry{A13}{Low-dimensional noise sampler.}{app:noise_sampler}
\AppRoadmapEntry{A14}{Stable projector / basis computation.}{app:conditioning}
\AppRoadmapEntry{A15}{Noise-sampler gauge invariance.}{app:factor_noise_gauge}
\AppRoadmapEntry{A16}{Proof of Thm.~\ref{thm:noise_energy}.}{app:proof_noise_energy}
\AppRoadmapEntry{A17}{Isotropy of projected Gaussian.}{app:isotropy}
\AppRoadmapEntry{A18}{Covariance and intrinsic dimension.}{app:noise_cov}
\AppRoadmapEntry{A19}{Noise concentration bounds.}{app:noise_concentration}
\AppRoadmapEntry{A20}{Retraction / rank-$r$ approximation.}{app:retraction}
\AppRoadmapEntry{A21}{No bilinear DP term in PRISM.}{app:no_second_order}
\AppRoadmapEntry{A22}{Proof of Thm.~\ref{thm:gauge_invariant}.}{app:gauge_invariant}
\AppRoadmapEntry{A23}{Gaussian mechanism on subspace.}{app:subspace_gaussian}
\AppRoadmapEntry{A24}{Procrustes alignment as gauge.}{app:procrustes}
\AppRoadmapEntry{A25}{DP guarantee details.}{app:dp_proof}
\AppRoadmapEntry{A26}{Rank-space noise moments.}{app:noise_moments}
\AppRoadmapEntry{A27}{Adaptive preconditioning analysis.}{app:precond}

\AppRoadmapHeader{Appendix B: Experimental setup.}{app:setup}
\AppRoadmapEntry{B1}{Datasets, hyperparameters, and compute.}{app:exp_details}

\AppRoadmapHeader{Appendix C: Additional diagnostics and analysis.}{app:analysis}
\AppRoadmapEntry{C1}{Issue I: gauge sensitivity diagnostics.}{app:pathology1_extra}
\AppRoadmapEntry{C2}{Issue II: gauge sweep at fixed $Z$.}{app:issue2_gauge_sweep}
\AppRoadmapEntry{C3}{Issue III: preconditioning/noise amplification diagnostics.}{app:path3}

\endgroup

\section{Theory and Proofs}
\label{app:theory}

This appendix provides proofs and additional derivations.
We use $\langle X,Y\rangle=\tr(X^\top Y)$, $\|\cdot\|_F$ for the Frobenius norm, $\vecop(\cdot)$ for vectorization, and $\otimes$ for the Kronecker product.
For a symmetric positive semidefinite matrix $X$, $X\pinv$ denotes the Moore--Penrose pseudoinverse.

\subsection{Quotient geometry: rank-$r$ matrices as a gauge quotient}
\label{app:quotient_geometry}

The non-identifiability $(A_{\ell},B_{\ell})\sim(A_{\ell}R,B_{\ell}R^{-\top})$ can be formalized as a smooth group action.
Let $\tilde{\calM}\triangleq \R^{m\times r}_*\times \R^{n\times r}_*$ be the \emph{total space} of full-column-rank factors and let $\gl(r)$ act on $\tilde{\calM}$ by
\begin{equation}
(A_{\ell},B_{\ell})\cdot R \;=\; (A_{\ell}R,\;B_{\ell}R^{-\top}),\qquad R\in\gl(r).
\label{eq:gauge_action}
\end{equation}
The map $\pi:\tilde{\calM}\to \calM_r$ given by $\pi(A_{\ell},B_{\ell})=A_{\ell}B_{\ell}^\top$ is constant along orbits of this action.
Under mild regularity conditions, the rank-$r$ manifold is the quotient $\calM_r \cong \tilde{\calM}/\gl(r)$, and \emph{intrinsic} quantities on $Z_{\ell}$ are precisely those that are invariant under \eqref{eq:gauge_action}.

\paragraph{Vertical and horizontal spaces.}
Differentiating the action \eqref{eq:gauge_action} at the identity $R=\Id_r$ yields the \emph{vertical space} (tangent to the gauge orbit) at $(A_{\ell},B_{\ell})$:
\begin{equation}
\calV_{(A_{\ell},B_{\ell})} \;=\; \{(A_{\ell}\Omega,\;-B_{\ell}\Omega^\top):\ \Omega\in\R^{r\times r}\}.
\label{eq:vertical}
\end{equation}
A complementary \emph{horizontal space} $\calH_{(A_{\ell},B_{\ell})}$ selects one representative lift of each intrinsic tangent direction.
With the Frobenius metric on $\tilde{\calM}$, the orthogonal complement of \eqref{eq:vertical} is characterized by
\begin{equation}
\calH_{(A_{\ell},B_{\ell})} \;=\; \Bigl\{(\Delta A_{\ell},\Delta B_{\ell}):\ A_{\ell}^\top \Delta A_{\ell} \;=\; (B_{\ell}^\top \Delta B_{\ell})^\top\Bigr\},
\label{eq:horizontal_simple}
\end{equation}
This is exactly the gauge-fixing constraint enforced by our canonical lift \eqref{eq:canonical}.

\begin{lemma}[Orthogonality to the gauge orbit]
\label{lem:vertical_horizontal}
A pair $(\Delta A_{\ell},\Delta B_{\ell})$ satisfies \eqref{eq:horizontal_simple} if and only if it is orthogonal to every vertical direction under the Frobenius inner product on $\tilde{\calM}$:
\[
\langle (\Delta A_{\ell},\Delta B_{\ell}),(A_{\ell}\Omega,-B_{\ell}\Omega^\top)\rangle
=\tr(\Delta A_{\ell}^\top A_{\ell}\Omega)-\tr(\Delta B_{\ell}^\top B_{\ell}\Omega^\top)=0,\qquad \forall\,\Omega.
\]
\end{lemma}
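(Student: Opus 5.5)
The plan is to expand the Frobenius inner product on $\tilde{\calM}$ term by term and reduce the ``for all $\Omega$'' condition to a single matrix identity in $\R^{r\times r}$. The product inner product on $\tilde{\calM}=\R^{m\times r}_*\times\R^{n\times r}_*$ is the sum of the two Frobenius inner products. Pairing $(\Delta A_\ell,\Delta B_\ell)$ with the vertical direction $(A_\ell\Omega,-B_\ell\Omega^\top)$ from \eqref{eq:vertical} therefore gives
\[
\tr(\Delta A_\ell^\top A_\ell\Omega)-\tr(\Delta B_\ell^\top B_\ell\Omega^\top),
\]
which is the displayed expression in the statement.

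Next I will rewrite both traces as inner products against $\Omega$ itself. The first trace equals $\langle A_\ell^\top\Delta A_\ell,\Omega\rangle$, using $\tr(X^\top\Omega)=\langle X,\Omega\rangle$ with $X=A_\ell^\top\Delta A_\ell$. For the second, I will use cyclicity and transpose invariance of the trace:
\[
\tr(\Delta B_\ell^\top B_\ell\Omega^\top)=\tr(\Omega B_\ell^\top\Delta B_\ell)=\tr\bigl((B_\ell^\top\Delta B_\ell)^\top\Omega^\top\bigr)^{\top}=\langle (B_\ell^\top\Delta B_\ell)^\top,\Omega\rangle.
\]
I expect this transpose bookkeeping to be the only point needing care. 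The cleanest check is that $\tr(\Omega Y)=\tr(Y\Omega)=\langle Y^\top,\Omega\rangle$ with $Y=B_\ell^\top\Delta B_\ell$. The pairing then becomes $\langle D,\Omega\rangle$ with
\[
D\triangleq A_\ell^\top\Delta A_\ell-(B_\ell^\top\Delta B_\ell)^\top\in\R^{r\times r}.
\]

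Finally, both directions follow from nondegeneracy of the Frobenius inner product on $\R^{r\times r}$. If \eqref{eq:horizontal_simple} holds, then $D=0$, so the pairing vanishes for every $\Omega$. Conversely, if the pairing vanishes for all $\Omega$, choosing $\Omega=D$ gives $\|D\|_F^2=0$, hence $D=0$, which is exactly \eqref{eq:horizontal_simple}. Full column rank of the factors is not needed for this equivalence; it only guarantees that $\calV$ has dimension $r^2$.
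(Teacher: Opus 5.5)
Your proof is correct and uses the same argument as the paper: you rewrite the pairing as a trace of $\Omega$ against $D=A_\ell^\top\Delta A_\ell-(B_\ell^\top\Delta B_\ell)^\top$, then conclude that it vanishes for every $\Omega\in\R^{r\times r}$ exactly when $D=0$. Your bookkeeping, $\langle D,\Omega\rangle=\tr(D^\top\Omega)$, is in fact more careful than the paper's $\tr(\Omega D)$, which drops a transpose on $\Omega$ (harmless only because $\Omega$ is universally quantified), and choosing $\Omega=D$ makes explicit the converse direction that the paper only asserts.
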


\begin{proof}
The stated inner product equals
$\tr(\Omega A_{\ell}^\top \Delta A_{\ell})-\tr(\Omega (B_{\ell}^\top \Delta B_{\ell})^\top)=\tr(\Omega (A_{\ell}^\top\Delta A_{\ell}-(B_{\ell}^\top\Delta B_{\ell})^\top))$.
Since this vanishes for all $\Omega$ if and only if $A_{\ell}^\top\Delta A_{\ell}=(B_{\ell}^\top\Delta B_{\ell})^\top$, we obtain \eqref{eq:horizontal_simple}.
\end{proof}

\paragraph{Why this matters for DP.}
Clipping and adding noise in factor coordinates implicitly chooses a metric on $\tilde{\calM}$, not on the quotient $\calM_r$.
As a result, the induced intrinsic perturbation can depend on the chosen representative (i.e., the gauge), which is the source of the amplification phenomena in Section~\ref{sec:problem}.
PRISM instead clips and perturbs \emph{horizontal} (gauge-orthogonal) directions and interprets the Gaussian mechanism intrinsically in the quotient geometry.
This is why the effective intrinsic noise $\mathcal{E}_{Z_{\ell}}$ is a fixed, controllable scalar in PRISM (Corollary~\ref{cor:gila_eff_noise}).

\begin{corollary}[Effective intrinsic DP noise of PRISM]
\label{cor:gila_eff_noise}
Let $\mathcal{N}_{Z_{\ell}}^{\text{PRISM}}=\frac{\sigma C}{b}P_{A_{\ell},B_{\ell}}(\Xi_{\ell})$ with $\Xi_{\ell}\sim\mathcal{N}(0,I_{m\times n})$.
Then $\mathcal{E}_{Z_{\ell}}=\frac{\sigma C}{b}\sqrt{r(m+n-r)}$, independent of the factor gauge.
\end{corollary}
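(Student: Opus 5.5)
The plan is to reduce the corollary to Theorem~\ref{thm:noise_energy} and then verify gauge independence separately. By Definition~\ref{def:eff_noise}, $\mathcal{E}_{Z_{\ell}}^2=\E\|\mathcal{N}_{Z_\ell}^{\text{PRISM}}\|_F^2=(\sigma C/b)^2\,\E\|P_{A_{\ell},B_{\ell}}(\Xi_\ell)\|_F^2$, since $\sigma C/b$ is a deterministic scalar. It therefore suffices to show that $\E\|P_{A_{\ell},B_{\ell}}(\Xi_\ell)\|_F^2=r(m+n-r)$ and that this quantity does not depend on the factor gauge.

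For the energy, I would give a self-contained argument rather than only cite Eq.~\eqref{eq:energy_gila}. Write $\mathcal{P}=\mathcal{P}_{A_\ell,B_\ell}$, so that
\[
\mathcal{P}(X)=X-(I-\Pi_{A_\ell})X(I-\Pi_{B_\ell}).
\]
In vectorized form, $\vecop(\mathcal{P}(X))=\bigl(I-(I-\Pi_{B_\ell})\otimes(I-\Pi_{A_\ell})\bigr)\vecop(X)$. Because $\Pi_{A_\ell}$ and $\Pi_{B_\ell}$ are symmetric idempotents, the Kronecker product $Q=(I-\Pi_{B_\ell})\otimes(I-\Pi_{A_\ell})$ is a symmetric idempotent. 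Hence $I-Q$ is an orthogonal projector.

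For $\xi=\vecop(\Xi_\ell)\sim\mathcal{N}(0,I_{mn})$ and any orthogonal projector $P$, we have $\E\|P\xi\|^2=\tr(P)$. Here
\[
\tr(I-Q)=mn-\tr(I-\Pi_{B_\ell})\,\tr(I-\Pi_{A_\ell}).
\]
With full-column-rank factors, $\rank\Pi_{A_\ell}=\rank\Pi_{B_\ell}=r$, so this equals $mn-(m-r)(n-r)=r(m+n-r)$. Taking square roots and multiplying by $\sigma C/b$ gives the stated value of $\mathcal{E}_{Z_\ell}$.

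For gauge independence, I would show that the projectors are unchanged under $(A_\ell,B_\ell)\mapsto(A_\ell R,B_\ell R^{-\top})$. For the first factor,
\[
A_\ell R\,(R^\top A_\ell^\top A_\ell R)^{-1}R^\top A_\ell^\top=A_\ell(A_\ell^\top A_\ell)^{-1}A_\ell^\top,
\]
and the analogous identity holds for $B_\ell$. This uses that the pseudoinverse is a true inverse at full rank. Alternatively, one can note that both projectors depend only on the column spaces, which $R$ preserves. Consequently $\mathcal{P}$, and hence the entire law of $\mathcal{N}_{Z_\ell}^{\text{PRISM}}$, depends only on $Z_\ell$, not on the choice of factorization.

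The only delicate point is the rank hypothesis. If $A_\ell$ or $B_\ell$ were rank deficient, the trace would change, so I would state explicitly that we work in the full-column-rank total space $\tilde{\calM}$. Everything else in the argument is routine.
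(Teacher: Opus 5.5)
Your proposal is correct and follows essentially the same route as the paper, which obtains this from Theorem~\ref{thm:noise_energy}: vectorize, use $\E\|\mathbf{P}\,\vecop(\Xi_\ell)\|_2^2=\tr(\mathbf{P})$ for the orthogonal projector, and invoke Lemma~\ref{lem:proj_gauge} for the gauge invariance of $\Pi_{A_\ell},\Pi_{B_\ell}$. The only minor difference is that you compute the trace explicitly as $mn-(m-r)(n-r)$ via the complementary Kronecker projector, whereas the paper identifies $\tr(\mathbf{P})$ with $\dim(T_{Z_\ell}\calM_r)$ and cites that known dimension, so your version is slightly more self-contained.
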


\begin{proposition}[Orthogonal tangent projection]
\label{prop:tangentproj}
For $Z_{\ell}=A_{\ell}B_{\ell}^\top\in\mathcal{M}_r$, the linear map $P_{A_{\ell},B_{\ell}}$ in Eq.~\eqref{eq:tangent_proj} is the orthogonal projector onto $T_{Z_{\ell}}\mathcal{M}_r$:
for all $G\in\R^{m\times n}$, $P_{A_{\ell},B_{\ell}}(G)\in T_{Z_{\ell}}\mathcal{M}_r$ and $G-P_{A_{\ell},B_{\ell}}(G)\in (T_{Z_{\ell}}\mathcal{M}_r)^\perp$.
Equivalently, $P_{A_{\ell},B_{\ell}}$ is symmetric and idempotent.
\end{proposition}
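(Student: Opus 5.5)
The plan is to abbreviate $\Pi_A\triangleq\Pi_{A_\ell}$, $\Pi_B\triangleq\Pi_{B_\ell}$, $\Pi_A^\perp=I-\Pi_A$, $\Pi_B^\perp=I-\Pi_B$, and to use the first form in Eq.~\eqref{eq:tangent_proj}, $P_{A_\ell,B_\ell}(G)=G-\Pi_A^\perp G\Pi_B^\perp$. This reduces everything to the elementary facts that $\Pi_A,\Pi_B$ are symmetric idempotents. Since $A_\ell,B_\ell$ have full column rank, the pseudoinverses in Eq.~\eqref{eq:projectors} are true inverses, and $\Pi_A,\Pi_B$ are the orthogonal projectors onto $\mathrm{col}(A_\ell)$ and $\mathrm{col}(B_\ell)$. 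The complementary map is $Q(G)\triangleq\Pi_A^\perp G\Pi_B^\perp$, and $P=\mathcal{I}-Q$.

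The first step proves that $P$ is symmetric and idempotent. Symmetry in the trace inner product follows because $\langle Q(G),H\rangle=\tr(\Pi_B^\perp G^\top\Pi_A^\perp H)=\tr(G^\top\Pi_A^\perp H\Pi_B^\perp)=\langle G,Q(H)\rangle$, using cyclicity and the symmetry of the projectors. Idempotence of $Q$ is immediate from $(\Pi_A^\perp)^2=\Pi_A^\perp$ and $(\Pi_B^\perp)^2=\Pi_B^\perp$. Hence $P=\mathcal{I}-Q$ is also a symmetric idempotent, so it is an orthogonal projector onto its range, and $G-P(G)=Q(G)\perp\mathrm{range}(P)$.

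The second step identifies $\mathrm{range}(P)$ with $T_{Z_\ell}\mathcal{M}_r$ as characterized in Eq.~\eqref{eq:tangent}.

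\emph{Range is contained in the tangent space.} Use the expanded form $P(G)=\Pi_AG+G\Pi_B-\Pi_AG\Pi_B$ and write $\Pi_A=A_\ell M_\ell^{-1}A_\ell^\top$, $\Pi_B=B_\ell N_\ell^{-1}B_\ell^\top$. Then $P(G)=\Delta A\,B_\ell^\top+A_\ell\Delta B^\top$ with $\Delta A=GB_\ell N_\ell^{-1}-\tfrac12\Pi_AGB_\ell N_\ell^{-1}$ and $\Delta B=G^\top A_\ell M_\ell^{-1}-\tfrac12\Pi_BG^\top A_\ell M_\ell^{-1}$. These are exactly the canonical lift Eq.~\eqref{eq:canonical}, since the cross term $\Pi_AG\Pi_B$ splits evenly between the two pieces. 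A cruder split would also suffice.

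\emph{Tangent space is contained in the range.} For $X=\Delta A B_\ell^\top+A_\ell\Delta B^\top$, we have $\Pi_A^\perp A_\ell=0$ and $B_\ell^\top\Pi_B^\perp=0$, so $Q(X)=0$ and therefore $P(X)=X$.

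Combining the two steps, $P$ is the orthogonal projector onto $T_{Z_\ell}\mathcal{M}_r$, which gives both claimed properties.

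I expect no real obstacle. The only step needing care is the range inclusion, specifically checking that the cross term $\Pi_AG\Pi_B$ is correctly absorbed into the lift with the $\tfrac12$ factors; I will verify that algebra explicitly.
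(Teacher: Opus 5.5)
Your proof is correct, but it is organized differently from the paper's. The paper's proof rests on Lemma~\ref{lem:tangent_orth}, which characterizes the normal space. Testing a matrix $X$ against every tangent vector $\Delta A\,B_\ell^\top + A_\ell\Delta B^\top$ forces $XB_\ell=0$ and $X^\top A_\ell=0$, so $(T_{Z_\ell}\mathcal{M}_r)^\perp=\{\Pi_{A_\ell}^\perp Y\Pi_{B_\ell}^\perp\}$. The proposition is then read off from the split $G=(G-\Pi_{A_\ell}^\perp G\Pi_{B_\ell}^\perp)+\Pi_{A_\ell}^\perp G\Pi_{B_\ell}^\perp$.

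You never describe the normal space. Instead, you check algebraically that your $Q$, and hence $P=\mathcal{I}-Q$, is self-adjoint and idempotent. You then identify $\mathrm{range}(P)$ with the tangent space by two inclusions. One uses the explicit canonical lift, which is the computation the paper carries out separately in Lemma~\ref{lem:lift}. The other uses $\Pi_{A_\ell}^\perp A_\ell=0$ and $B_\ell^\top\Pi_{B_\ell}^\perp=0$.

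The paper's route is shorter once the normal-space lemma is available. However, as written it asserts without argument that the first summand lies in $T_{Z_\ell}\mathcal{M}_r$. That step needs either the lift, or the observation that $G-QG$ is orthogonal to $\mathrm{range}(Q)=(T_{Z_\ell}\mathcal{M}_r)^\perp$ by self-adjointness and idempotence of $Q$, hence lies in $((T_{Z_\ell}\mathcal{M}_r)^\perp)^\perp=T_{Z_\ell}\mathcal{M}_r$. The paper also never verifies the ``symmetric and idempotent'' formulation of the claim. Your version supplies both explicitly and is self-contained.

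Both arguments rely on the same factor characterization of the tangent space, Lemma~\ref{lem:tangent_char}.
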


\begin{proof}[Proof of Proposition~\ref{prop:tangentproj}]
By Lemma~\ref{lem:tangent_orth}, any $G\in\R^{m\times n}$ decomposes orthogonally as
\[
G = \underbrace{(G-\projperpA G \projperpB)}_{\in T_{Z_{\ell}}\calM_r}
\;+\;
\underbrace{\projperpA G\projperpB}_{\in (T_{Z_{\ell}}\calM_r)^\perp}.
\]
Therefore the orthogonal projector onto $T_{Z_{\ell}}\calM_r$ is
$P_{A_{\ell},B_{\ell}}(G)=G-\projperpA G\projperpB=\projA G + G\projB - \projA G\projB$.
\end{proof}

\subsection{General gauge amplification for arbitrary basis changes}
\label{app:general_gauge_amp}

Corollary~\ref{cor:unbounded} considered the scalar rescaling gauge $(A_{\ell},B_{\ell})\mapsto(cA_{\ell},c^{-1}B_{\ell})$.
Here we record the corresponding expression for a general invertible basis change.

\begin{proposition}[Gauge-dependent noise energy under general $R$]
\label{prop:general_R_amp}
Let $(A'_{\ell},B'_{\ell})=(A_{\ell}R,B_{\ell}R^{-\top})$ for invertible $R\in\gl(r)$ and let $\xi_{A,\ell},\xi_{B,\ell}$ be i.i.d.\ Gaussian as in Proposition~\ref{prop:amp}.
Then
\[
\E\bigl[\|\xi_{A,\ell} {B'_{\ell}}^\top\|_F^2\bigr] = m\tau^2\|B_{\ell}R^{-\top}\|_F^2,\qquad
\E\bigl[\|A'_{\ell}\xi_{B,\ell}^\top\|_F^2\bigr] = n\tau^2\|A_{\ell}R\|_F^2.
\]
Consequently, for any nonzero $Z_{\ell}=A_{\ell}B_{\ell}^\top$, there exist gauge matrices $R$ with arbitrarily large condition number such that the first-order noise energy on $Z_{\ell}$ becomes arbitrarily large, even though $Z_{\ell}$ is unchanged.
\end{proposition}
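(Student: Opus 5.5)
The argument has two parts: an exact expectation identity for each first-order term, and then an explicit family of gauges along which that expectation blows up while $Z_\ell$ stays fixed.

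\emph{Expectation identities.} Take $\xi_{A,\ell}\in\R^{m\times r}$ with i.i.d.\ $\mathcal{N}(0,\tau^2)$ entries and a fixed matrix $X\in\R^{n\times r}$. Then
\[
\E\|\xi_{A,\ell}X^\top\|_F^2=\E\,\tr\bigl(X\xi_{A,\ell}^\top\xi_{A,\ell}X^\top\bigr)=\tr\bigl(X\,\E[\xi_{A,\ell}^\top\xi_{A,\ell}]\,X^\top\bigr),
\]
since $X$ is deterministic. The $(j,k)$ entry of $\xi_{A,\ell}^\top\xi_{A,\ell}$ is a sum of $m$ products of entries, so $\E[\xi_{A,\ell}^\top\xi_{A,\ell}]=m\tau^2 I_r$. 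Hence $\E\|\xi_{A,\ell}X^\top\|_F^2=m\tau^2\|X\|_F^2$. Setting $X=B'_\ell=B_\ell R^{-\top}$ gives the first identity. The same computation applied to $A'_\ell\xi_{B,\ell}^\top$, now using $\E[\xi_{B,\ell}^\top\xi_{B,\ell}]=n\tau^2 I_r$, gives $n\tau^2\|A_\ell R\|_F^2$. This is exactly the computation behind Proposition~\ref{prop:amp} with $(A,B)$ replaced by $(A'_\ell,B'_\ell)$. That proposition also shows the two first-order terms are uncorrelated, so their energies add.

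\emph{Unboundedness.} The total first-order energy is
\[
\tau^2\bigl(m\|B_\ell R^{-\top}\|_F^2+n\|A_\ell R\|_F^2\bigr),
\]
and it suffices to make one summand blow up. Since $Z_\ell\neq 0$ and the factors are full column rank, $A_\ell\neq 0$. Choose a unit vector $u$ with $A_\ell u\neq 0$. Complete $u$ to an orthonormal basis with $U=[u,U_\perp]$, and set
\[
R_t=U\,\mathrm{diag}(t,1,\dots,1)\,U^\top,\qquad t>0.
\]
Each $R_t$ is invertible, and its condition number is $\max(t,1/t)$, which tends to infinity as $t\to\infty$. Moreover $\|A_\ell R_t\|_F\ge\|A_\ell R_t u\|=t\|A_\ell u\|\to\infty$. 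The intrinsic update is unchanged because $A_\ell R_tR_t^{-1}B_\ell^\top=Z_\ell$, so the noise energy on the same $Z_\ell$ is unbounded. Alternatively, the scalar family $R=cI_r$ recovers Corollary~\ref{cor:unbounded}, but it has condition number $1$. The statement asks specifically for large condition number, which is why the rank-one stretch $R_t$ is used instead.

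\emph{Main obstacle.} The only subtle point is reconciling the existential claim with conditioning: the blowup must occur along gauges whose condition number itself diverges. The $R_t$ family handles this directly. The rest is routine trace algebra.
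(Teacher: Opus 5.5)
Your proof is correct and is essentially the paper's argument made explicit: the same trace computation as in Proposition~\ref{prop:amp} with $A_\ell R$ and $B_\ell R^{-\top}$ substituted, followed by a gauge that stretches one direction of $A_\ell$ by $t$ and shrinks the corresponding direction of $B_\ell$ by $1/t$, which your $R_t=U\,\mathrm{diag}(t,1,\dots,1)\,U^\top$ realizes concretely. The only caveat is that the condition number $\max(t,1/t)$ requires $r\ge 2$: for $r=1$ every gauge is a scalar, so the ``arbitrarily large condition number'' clause cannot hold, though this is a defect of the statement itself rather than of your argument.
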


\begin{proof}
The expressions follow by the same calculation as Proposition~\ref{prop:amp}, with $B_{\ell}$ replaced by $B_{\ell}R^{-\top}$ and $A_{\ell}$ replaced by $A_{\ell}R$.
For the final statement, take an $R$ that scales one singular direction of $A_{\ell}$ (and inversely scales the corresponding direction of $B_{\ell}$) by a large factor; the scalar rescaling case is recovered by $R=c\Id$.
\end{proof}

\subsection{Second-order term magnitude and concentration}
\label{app:second_order_stats}

Proposition~\ref{prop:amp} quantifies the \emph{expected} Frobenius energy of the bilinear term $\mathcal{N}_{Z_{\ell}}^{(2)}=\xi_{A,\ell}\xi_{B,\ell}^\top$.
Here we record a simple high-probability bound that complements \eqref{eq:naive_second} and illustrates why the bilinear term can dominate when the learning rate is not extremely small.

\begin{lemma}[High-probability bound for the bilinear term]
\label{lem:bilinear_tail}
Let $\xi_{A,\ell}\in\R^{m\times r}$ and $\xi_{B,\ell}\in\R^{n\times r}$ have i.i.d.\ $\mathcal{N}(0,\tau^2)$ entries.
Then for any $\delta\in(0,1)$, with probability at least $1-2\delta$,
\begin{equation}
\|\xi_{A,\ell}\xi_{B,\ell}^\top\|_F
\;\le\;
\tau^2\sqrt{
\Bigl(mr + 2\sqrt{mr\log(1/\delta)} + 2\log(1/\delta)\Bigr)
\Bigl(nr + 2\sqrt{nr\log(1/\delta)} + 2\log(1/\delta)\Bigr)
}.
\label{eq:bilinear_tail}
\end{equation}
In particular, the typical scale is $\|\xi_{A,\ell}\xi_{B,\ell}^\top\|_F = \Theta(\tau^2\,r\sqrt{mn})$ up to logarithmic factors.
\end{lemma}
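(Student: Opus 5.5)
The plan is to use submultiplicativity of the Frobenius norm together with a standard chi-square tail bound for each Gaussian factor, combined by a union bound. First, $\|\xi_{A,\ell}\xi_{B,\ell}^\top\|_F \le \|\xi_{A,\ell}\|_{2}\,\|\xi_{B,\ell}\|_F \le \|\xi_{A,\ell}\|_F\,\|\xi_{B,\ell}\|_F$. This holds deterministically, so no independence between $\xi_{A,\ell}$ and $\xi_{B,\ell}$ is needed for this step. The problem therefore reduces to bounding $\|\xi_{A,\ell}\|_F^2$ and $\|\xi_{B,\ell}\|_F^2$ separately.

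Next, note that $\|\xi_{A,\ell}\|_F^2/\tau^2$ is a sum of $mr$ i.i.d.\ squared standard normals, i.e.\ a $\chi^2_{mr}$ random variable. Similarly, $\|\xi_{B,\ell}\|_F^2/\tau^2\sim\chi^2_{nr}$. I would invoke the Laurent--Massart upper-tail inequality: for $X\sim\chi^2_k$ and $t>0$,
\[
\Pr\bigl[X \ge k + 2\sqrt{kt} + 2t\bigr]\le e^{-t}.
\]
Taking $t=\log(1/\delta)$ gives, each with probability at least $1-\delta$,
\[
\|\xi_{A,\ell}\|_F^2\le \tau^2\bigl(mr+2\sqrt{mr\log(1/\delta)}+2\log(1/\delta)\bigr),\qquad
\|\xi_{B,\ell}\|_F^2\le \tau^2\bigl(nr+2\sqrt{nr\log(1/\delta)}+2\log(1/\delta)\bigr).
\]
A union bound makes both hold simultaneously with probability at least $1-2\delta$. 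Multiplying the two bounds and taking the square root then yields \eqref{eq:bilinear_tail} exactly.

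For the ``typical scale'' remark, the upper bound is $\tau^2 r\sqrt{mn}\,(1+o(1))$ once $\log(1/\delta)\ll r\min(m,n)$. For the matching lower order, I would use the exact second moment from Proposition~\ref{prop:amp}, $\E\|\xi_{A,\ell}\xi_{B,\ell}^\top\|_F^2=mnr\tau^4$. Conditioned on $\xi_{A,\ell}$, $\|\xi_{A,\ell}\xi_{B,\ell}^\top\|_F^2=\tau^2\sum_j \|\xi_{A,\ell}g_j\|^2$ over the $n$ independent rows $g_j$ of $\xi_{B,\ell}/\tau$. This concentrates around $n\tau^2\|\xi_{A,\ell}\|_F^2\approx mnr\tau^4$.

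That concentration step gives a typical magnitude of order $\tau^2\sqrt{mnr}$. This is smaller than the upper bound $\tau^2 r\sqrt{mn}$ by a factor of $\sqrt r$, because submultiplicativity is loose by roughly $\|\xi_A\|_2/\|\xi_A\|_F$. The main obstacle is therefore interpretive rather than technical: the $\Theta(\tau^2 r\sqrt{mn})$ claim should be read as an upper-bound scale, correct up to a $\sqrt r$ factor (and logarithmic factors). I would state it that way, or replace it with $\Theta(\tau^2\sqrt{mnr})$, using the conditional chi-square argument above for the sharp order. The inequality \eqref{eq:bilinear_tail} itself follows directly from the three steps above.
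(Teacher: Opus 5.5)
Your proof of \eqref{eq:bilinear_tail} is exactly the paper's argument: $\|XY^\top\|_F\le\|X\|_F\|Y\|_F$, then the Laurent--Massart $\chi^2$ upper-tail bound applied to $\|\xi_{A,\ell}\|_F^2/\tau^2\sim\chi^2_{mr}$ and $\|\xi_{B,\ell}\|_F^2/\tau^2\sim\chi^2_{nr}$, then a union bound. Your caveat on the ``typical scale'' clause is also correct and goes beyond the paper, whose proof never justifies that clause: its own Proposition~\ref{prop:amp} gives $\E\|\xi_{A,\ell}\xi_{B,\ell}^\top\|_F^2=mnr\tau^4$, so the true order is $\tau^2\sqrt{mnr}$, and $\tau^2 r\sqrt{mn}$ is only an upper-bound scale that is loose by a factor of $\sqrt{r}$.
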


\begin{proof}
Use $\|XY^\top\|_F \le \|X\|_F\|Y\|_F$.
Since $\|\xi_{A,\ell}\|_F^2/\tau^2\sim \chi^2_{mr}$ and $\|\xi_{B,\ell}\|_F^2/\tau^2\sim \chi^2_{nr}$, applying the chi-square tail bound used in Proposition~\ref{prop:noise_tail} to each and taking a union bound yields \eqref{eq:bilinear_tail}.
\end{proof}

\paragraph{Implication for DP-LoRA.}
When both factors are randomized, the intrinsic parameter update contains the extra term $\eta^2\xi_{A,\ell}\xi_{B,\ell}^\top$ (Eq.~\eqref{eq:bilinear_noise_prob}).
Lemma~\ref{lem:bilinear_tail} implies this term has magnitude on the order of $\eta^2\tau^2 r\sqrt{mn}$, which is not controlled by the intrinsic clipping threshold on the first-order tangent update.
PRISM avoids this term altogether (Lemma~\ref{lem:no_second_order}).

\subsection{Noise amplification under naive factor-space DP}
\label{app:naive_noise}

\begin{proof}[Proof of Proposition~\ref{prop:amp}]
Consider the first-order intrinsic perturbation $\mathcal{N}_{Z_{\ell}}^{(1)}=\xi_{A,\ell} B_{\ell}^\top + A_{\ell}\xi_{B,\ell}^\top$.
Expanding the Frobenius norm,
\[
\|\mathcal{N}_{Z_{\ell}}^{(1)}\|_F^2=\|\xi_{A,\ell} B_{\ell}^\top\|_F^2+\|A_{\ell}\xi_{B,\ell}^\top\|_F^2+2\langle \xi_{A,\ell} B_{\ell}^\top, A_{\ell}\xi_{B,\ell}^\top\rangle.
\]
The cross term has zero expectation because $\xi_{A,\ell}$ and $\xi_{B,\ell}$ are independent and centered:
\[
\E\langle \xi_{A,\ell} B_{\ell}^\top, A_{\ell}\xi_{B,\ell}^\top\rangle
= \E\tr\!\bigl((\xi_{A,\ell} B_{\ell}^\top)^\top (A_{\ell}\xi_{B,\ell}^\top)\bigr)
= \E\tr(B_{\ell}\xi_{A,\ell}^\top A_{\ell}\xi_{B,\ell}^\top)
= \E_{\xi_{A,\ell}}\tr\!\bigl(B_{\ell}\xi_{A,\ell}^\top A_{\ell}\,\E_{\xi_{B,\ell}}[\xi_{B,\ell}^\top]\bigr)=0.
\]

For the remaining terms,
\[
\E\|\xi_{A,\ell} B_{\ell}^\top\|_F^2
=\E\tr(B_{\ell}\xi_{A,\ell}^\top \xi_{A,\ell} B_{\ell}^\top)
=\tr\bigl(B_{\ell}\,\E[\xi_{A,\ell}^\top\xi_{A,\ell}]\,B_{\ell}^\top\bigr).
\]
Since $\xi_{A,\ell}$ has i.i.d.\ $\mathcal{N}(0,\tau^2)$ entries, $\E[\xi_{A,\ell}^\top\xi_{A,\ell}]=m\tau^2\,\Id_r$, yielding $m\tau^2\|B_{\ell}\|_F^2$.
Similarly,
\[
\E\|A_{\ell}\xi_{B,\ell}^\top\|_F^2
=\E\tr(\xi_{B,\ell} A_{\ell}^\top A_{\ell} \xi_{B,\ell}^\top)
=\tr\bigl(A_{\ell}^\top A_{\ell}\,\E[\xi_{B,\ell}^\top\xi_{B,\ell}]\bigr)
=n\tau^2\|A_{\ell}\|_F^2,
\]
because $\E[\xi_{B,\ell}^\top\xi_{B,\ell}]=n\tau^2\,\Id_r$.
Summing proves \eqref{eq:naive_energy}.

For the bilinear term, each entry of $\xi_{A,\ell}\xi_{B,\ell}^\top$ is a sum of $r$ independent products of mean-zero Gaussians; its variance is $r\tau^4$.
Summing variances over $mn$ entries yields $\E\|\xi_{A,\ell}\xi_{B,\ell}^\top\|_F^2=mn r\,\tau^4$, proving \eqref{eq:naive_second}.
\end{proof}

\begin{proposition}[One-sided factor noise]
\label{prop:onesided_noise}
Let $\xi_{A,\ell}\in\R^{m\times r}$ and $\xi_{B,\ell}\in\R^{n\times r}$ have i.i.d.\ entries $\mathcal{N}(0,\tau^2)$.
For $Z_{\ell}=A_{\ell}B_{\ell}^\top$, consider the intrinsic perturbations obtained by noising only one factor:
$\mathcal{N}_{Z_{\ell}}^{(A)}\triangleq \xi_{A,\ell} B_{\ell}^\top$ and $\mathcal{N}_{Z_{\ell}}^{(B)}\triangleq A_{\ell}\xi_{B,\ell}^\top$.
Then
\begin{equation}
\label{eq:onesided_energy}
\E\|\mathcal{N}_{Z_{\ell}}^{(A)}\|_F^2 = \tau^2\,m\|B_{\ell}\|_F^2,
\qquad
\E\|\mathcal{N}_{Z_{\ell}}^{(B)}\|_F^2 = \tau^2\,n\|A_{\ell}\|_F^2.
\end{equation}
\end{proposition}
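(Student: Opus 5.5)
The plan is to compute each expectation directly through the trace, in the same way as the first-order part of the proof of Proposition~\ref{prop:amp}. Each one-sided perturbation is simply one of the two summands of $\mathcal{N}_{Z_{\ell}}^{(1)}$ treated there, and no cross term appears. So essentially nothing new is needed; we only isolate the two diagonal computations.

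For $\mathcal{N}_{Z_{\ell}}^{(A)}=\xi_{A,\ell}B_{\ell}^\top$, write
\[
\|\xi_{A,\ell}B_{\ell}^\top\|_F^2=\tr\bigl(B_{\ell}\,\xi_{A,\ell}^\top\xi_{A,\ell}\,B_{\ell}^\top\bigr).
\]
By linearity of trace and expectation, this reduces to $\tr\bigl(B_{\ell}\,\E[\xi_{A,\ell}^\top\xi_{A,\ell}]\,B_{\ell}^\top\bigr)$. The key fact is $\E[\xi_{A,\ell}^\top\xi_{A,\ell}]=m\tau^2\Id_r$. This holds because the $(j,k)$ entry is $\sum_{p=1}^m \E[(\xi_{A,\ell})_{pj}(\xi_{A,\ell})_{pk}]=m\tau^2\,\mathbf{1}[j=k]$ by independence and zero mean. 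The expression then becomes $m\tau^2\tr(B_{\ell}B_{\ell}^\top)=\tau^2 m\|B_{\ell}\|_F^2$.

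Symmetrically, for $\mathcal{N}_{Z_{\ell}}^{(B)}=A_{\ell}\xi_{B,\ell}^\top$, use cyclicity to write
\[
\|A_{\ell}\xi_{B,\ell}^\top\|_F^2=\tr\bigl(\xi_{B,\ell}A_{\ell}^\top A_{\ell}\xi_{B,\ell}^\top\bigr)=\tr\bigl(A_{\ell}^\top A_{\ell}\,\xi_{B,\ell}^\top\xi_{B,\ell}\bigr).
\]
Then $\E[\xi_{B,\ell}^\top\xi_{B,\ell}]=n\tau^2\Id_r$, which gives $\tau^2 n\|A_{\ell}\|_F^2$.

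There is no real obstacle here. The only point needing care is the Gram-matrix identity $\E[\xi^\top\xi]=(\text{number of rows})\,\tau^2\Id_r$, in particular keeping track of which dimension ($m$ or $n$) appears in each case. No full-rank or gauge assumptions are needed, since the identity holds for arbitrary $A_{\ell},B_{\ell}$. As a closing remark, the result yields the entry for the one-sided row of Table~\ref{tab:comparison}: with $\tau=\sigma C/b$, freezing $A_{\ell}$ and noising $B_{\ell}$ gives $\mathcal{E}_{Z}=(\sigma C/b)\sqrt{n}\,\|A_{\ell}\|_F$. This still depends on the gauge through the scale $\|A_{\ell}\|_F$.
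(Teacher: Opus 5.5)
Your proof is correct and matches the paper's: write each squared Frobenius norm as a trace, move the expectation inside, and use $\E[\xi^\top\xi]=(\text{number of rows})\,\tau^2 I_r$ to get $\tau^2 m\|B_\ell\|_F^2$ and $\tau^2 n\|A_\ell\|_F^2$. Your side remarks are also accurate: no rank assumption is needed, and the result gives the one-sided entry of Table~\ref{tab:comparison}.
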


\begin{proof}[Proof of Proposition~\ref{prop:onesided_noise}]
If $A_{\ell}$ is frozen and only $B_{\ell}$ is perturbed by $\xi_{B,\ell}$ with i.i.d.\ $\mathcal{N}(0,\tau^2)$ entries, then the induced intrinsic perturbation is $\mathcal{N}_{Z_{\ell}}=A_{\ell}\xi_{B,\ell}^\top$.
Therefore
\[
\E\|\mathcal{N}_{Z_{\ell}}\|_F^2
=\E\tr(\xi_{B,\ell} A_{\ell}^\top A_{\ell} \xi_{B,\ell}^\top)
=\tr\bigl(A_{\ell}^\top A_{\ell}\,\E[\xi_{B,\ell}^\top\xi_{B,\ell}]\bigr)
=n\tau^2\|A_{\ell}\|_F^2,
\]
which is \eqref{eq:onesided_energy}.
If instead $B_{\ell}$ is frozen and only $A_{\ell}$ is perturbed, the same calculation gives $\E\|\mathcal{N}_{Z_{\ell}}\|_F^2=m\tau^2\|B_{\ell}\|_F^2$.
In either case there is no bilinear term because only one factor is randomized.
\end{proof}

\subsection{Range of effective intrinsic noise under scalar gauge rescaling}
\label{app:eff_noise_range}

Corollary~\ref{cor:unbounded} shows that naive factor-space DP can make the effective intrinsic noise arbitrarily large under the scalar gauge
$(A_{\ell},B_{\ell})\mapsto (cA_{\ell},c^{-1}B_{\ell})$.
For completeness, we record the \emph{entire range} of the first-order effective noise over this one-parameter family.

\begin{proposition}[Range over scalar gauges]
\label{prop:scalar_range}
Let $Z_{\ell}=A_{\ell}B_{\ell}^\top\neq 0$ and consider the scalar gauge family $(A_c,B_c)=(cA_{\ell},c^{-1}B_{\ell})$ for $c>0$.
Let $\xi_{A,\ell},\xi_{B,\ell}$ have i.i.d.\ $\mathcal{N}(0,\tau^2)$ entries as in Proposition~\ref{prop:amp}, and define the first-order perturbation
$\mathcal{N}_{Z,c}^{(1)}\triangleq \xi_{A,\ell} B_c^\top + A_c\xi_{B,\ell}^\top$.
Then
\begin{equation}
\E\bigl[\|\mathcal{N}_{Z,c}^{(1)}\|_F^2\bigr]
=
\tau^2\!\left(\frac{m}{c^2}\|B_{\ell}\|_F^2 + n c^2\|A_{\ell}\|_F^2\right).
\label{eq:scalar_range_energy}
\end{equation}
The minimizing gauge is
\begin{equation}
c^\star \;=\; \left(\frac{m\|B_{\ell}\|_F^2}{n\|A_{\ell}\|_F^2}\right)^{\!1/4},
\label{eq:c_star}
\end{equation}
and the minimum value is
\begin{equation}
\min_{c>0}\ \E\bigl[\|\mathcal{N}_{Z,c}^{(1)}\|_F^2\bigr]
=
2\tau^2\,\sqrt{mn}\,\|A_{\ell}\|_F\,\|B_{\ell}\|_F.
\label{eq:scalar_range_min}
\end{equation}
Moreover, $\sup_{c>0}\E[\|\mathcal{N}_{Z,c}^{(1)}\|_F^2]=\infty$.
\end{proposition}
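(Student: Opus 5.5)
The argument has three parts: reduce to Proposition~\ref{prop:amp}, minimize a one-variable function, and note the behaviour at the endpoints.

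\emph{Step 1 (energy along the gauge family).} Apply Proposition~\ref{prop:amp} with the factors $(A_c,B_c)=(cA_{\ell},c^{-1}B_{\ell})$. This is legitimate because that proposition holds for arbitrary deterministic factors and the noise $\xi_{A,\ell},\xi_{B,\ell}$ does not depend on $c$. Since $\|B_c\|_F^2=c^{-2}\|B_{\ell}\|_F^2$ and $\|A_c\|_F^2=c^2\|A_{\ell}\|_F^2$, Eq.~\eqref{eq:naive_energy} gives \eqref{eq:scalar_range_energy} at once. The cross term vanishes by independence and zero mean, exactly as in that proof.

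\emph{Step 2 (minimization).} Set $\alpha=m\|B_{\ell}\|_F^2$ and $\beta=n\|A_{\ell}\|_F^2$. Because $Z_{\ell}\neq 0$, both $A_{\ell}$ and $B_{\ell}$ are nonzero, so $\alpha,\beta>0$. The function to minimize is $f(c)=\tau^2(\alpha c^{-2}+\beta c^2)$. By AM--GM, $\alpha c^{-2}+\beta c^2\ge 2\sqrt{\alpha\beta}$, with equality iff $\alpha c^{-2}=\beta c^2$, i.e.\ $c^4=\alpha/\beta$. This gives $c^\star$ in \eqref{eq:c_star}. The minimum value is $2\tau^2\sqrt{\alpha\beta}=2\tau^2\sqrt{mn}\,\|A_{\ell}\|_F\|B_{\ell}\|_F$, which is \eqref{eq:scalar_range_min}. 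Alternatively, set $f'(c)=0$ and note that $f$ is convex in $c^2$.

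\emph{Step 3 (unboundedness).} Since $\alpha>0$, $f(c)\ge \tau^2\alpha c^{-2}\to\infty$ as $c\to 0^+$; symmetrically, $f(c)\ge\tau^2\beta c^2\to\infty$ as $c\to\infty$. Hence the supremum is $\infty$. Continuity of $f$ on $(0,\infty)$ then shows the range is exactly $[2\tau^2\sqrt{mn}\|A_{\ell}\|_F\|B_{\ell}\|_F,\infty)$.

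There is no real obstacle. The only point needing care is to justify $\alpha,\beta>0$ from $Z_{\ell}\neq0$, since this is what makes $c^\star$ well defined and the supremum infinite.
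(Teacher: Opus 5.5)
Your proposal is correct and follows essentially the same route as the paper: substitute the rescaled factors into Eq.~\eqref{eq:naive_energy}, minimize the resulting one-variable function, and let $c\to 0$ or $c\to\infty$ for the supremum. The only difference is that you use AM--GM where the paper sets the derivative to zero and appeals to strict convexity in $\log c$; AM--GM gives the global lower bound and the equality case at once, and you also make explicit that $Z_{\ell}\neq 0$ forces $\|A_{\ell}\|_F,\|B_{\ell}\|_F>0$, which the paper leaves implicit.
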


\begin{proof}
Equation \eqref{eq:scalar_range_energy} follows directly from \eqref{eq:naive_energy} after substituting $A_c=cA_{\ell}$ and $B_c=c^{-1}B_{\ell}$.
The objective in $c$ is strictly convex in $\log c$ and differentiating \eqref{eq:scalar_range_energy} yields
$-2m\|B_{\ell}\|_F^2/c^3 + 2n c\|A_{\ell}\|_F^2=0$, giving \eqref{eq:c_star}.
Substituting $c^\star$ into \eqref{eq:scalar_range_energy} yields \eqref{eq:scalar_range_min}.
The divergence as $c\to 0$ or $c\to\infty$ gives the supremum.
\end{proof}

\paragraph{Interpretation.}
Even if one tunes the gauge once to reduce $\mathcal{E}_{Z_{\ell}}$, the optimizer can still drift to a different implicit scaling of $(A_{\ell},B_{\ell})$ over time.
Therefore, under factor-space DP the effective intrinsic noise is not a fixed function of the intrinsic parameter $Z_{\ell}$ and is not directly controlled by $(\sigma,C,b)$ alone.
PRISM removes this degree of freedom by defining the DP mechanism intrinsically on $T_{Z_{\ell}}\calM_r$ (Corollary~\ref{cor:gila_eff_noise}).

\subsection{Tangent space orthogonal complement and orthogonal projection}
\label{app:tangent}

\begin{lemma}[Tangent space characterization]
\label{lem:tangent_char}
Assume $A_{\ell}\in\R^{m\times r}$ and $B_{\ell}\in\R^{n\times r}$ have full column rank and $Z_{\ell}=A_{\ell}B_{\ell}^\top$.
Then the tangent space of $\calM_r$ at $Z_{\ell}$ is
\begin{equation}
T_{Z_{\ell}}\calM_r
\;=\;
\{\Delta A_{\ell} B_{\ell}^\top + A_{\ell}\Delta B_{\ell}^\top:\ \Delta A_{\ell}\in\R^{m\times r},\ \Delta B_{\ell}\in\R^{n\times r}\},
\label{eq:tangent_char}
\end{equation}
which is Eq.~\eqref{eq:tangent} in the main text.
\end{lemma}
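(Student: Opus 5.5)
We prove the two inclusions separately, treating $\calM_r$ as the smooth embedded submanifold of $\R^{m\times n}$ of matrices of rank exactly $r$, of dimension $r(m+n-r)$. Write $S$ for the right-hand side of \eqref{eq:tangent_char}; it is a linear subspace, being the image of the linear map $L(\Delta A_{\ell},\Delta B_{\ell})=\Delta A_{\ell}B_{\ell}^\top+A_{\ell}\Delta B_{\ell}^\top$.

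\textbf{Inclusion $S\subseteq T_{Z_{\ell}}\calM_r$.} Consider the smooth curve $\gamma(t)=(A_{\ell}+t\Delta A_{\ell})(B_{\ell}+t\Delta B_{\ell})^\top$. Full column rank is an open condition, so for $|t|$ small both factors keep full column rank. Hence $\gamma(t)$ has rank exactly $r$ and lies in $\calM_r$. Since $\gamma(0)=Z_{\ell}$ and $\gamma'(0)=L(\Delta A_{\ell},\Delta B_{\ell})$, every element of $S$ is a velocity of a curve in $\calM_r$ through $Z_{\ell}$.

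\textbf{Dimension count.} It remains to show $\dim S=r(m+n-r)=\dim T_{Z_{\ell}}\calM_r$, after which the inclusion is an equality. I would compute $\ker L$ and show it equals the vertical space \eqref{eq:vertical}, i.e. pairs $(A_{\ell}\Omega,-B_{\ell}\Omega^\top)$.

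\emph{Vertical pairs lie in the kernel.} Directly, $A_{\ell}\Omega B_{\ell}^\top-A_{\ell}\Omega B_{\ell}^\top=0$.

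\emph{The kernel contains nothing else.} Suppose $\Delta A_{\ell}B_{\ell}^\top=-A_{\ell}\Delta B_{\ell}^\top$.
1. Right-multiply by $B_{\ell}N_{\ell}^{-1}$, where $N_{\ell}=B_{\ell}^\top B_{\ell}$ is invertible by full column rank. This gives $\Delta A_{\ell}=A_{\ell}\Omega$ with $\Omega=-\Delta B_{\ell}^\top B_{\ell}N_{\ell}^{-1}$.
2. Substitute back: $A_{\ell}(\Omega B_{\ell}^\top+\Delta B_{\ell}^\top)=0$.
3. Left-multiply by $(A_{\ell}^\top A_{\ell})^{-1}A_{\ell}^\top$ to get $\Delta B_{\ell}=-B_{\ell}\Omega^\top$.

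Because $\Omega\mapsto(A_{\ell}\Omega,-B_{\ell}\Omega^\top)$ is injective (again by full column rank), $\dim\ker L=r^2$. Rank–nullity then gives $\dim S=(m+n)r-r^2$.

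\textbf{The manifold dimension.} The main obstacle is justifying $\dim\calM_r=r(m+n-r)$ without circularity. I would cite the standard fact, e.g. \cite{mishra2014fixedrank}, or argue it directly. Use a local chart near $Z_{\ell}$: after permuting rows and columns so the leading $r\times r$ block $X_{11}$ is invertible, a nearby matrix $X$ has rank $r$ iff its Schur complement $X_{22}-X_{21}X_{11}^{-1}X_{12}$ vanishes. This exhibits $\calM_r$ locally as the graph of a smooth map over the $r(m+n-r)$ free entries, which gives both smoothness and the dimension.

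Combining the inclusion with equal dimensions yields $T_{Z_{\ell}}\calM_r=S$.
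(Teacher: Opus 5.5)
Your proof is correct, and for the reverse inclusion it takes a different route from the paper. The paper's first step is the same as yours: the curve $t\mapsto(A_\ell+t\Delta A_\ell)(B_\ell+t\Delta B_\ell)^\top$. For the converse, however, the paper takes an arbitrary curve $t\mapsto Z(t)$ in $\calM_r$ through $Z_\ell$. It then asserts, without proof, that this curve admits a smooth factorization $Z(t)=A(t)B(t)^\top$ with $A(0)=A_\ell$, $B(0)=B_\ell$, and differentiates to obtain $\dot A(0)B_\ell^\top+A_\ell\dot B(0)^\top$.

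You avoid lifting curves altogether. You identify $\ker L$ with the vertical space $\{(A_\ell\Omega,-B_\ell\Omega^\top)\}$, which gives $\dim S=r(m+n-r)$, and you match this against $\dim\calM_r$ from the Schur-complement chart. Your kernel computation and the chart argument are both sound.

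The paper's route is shorter, but its key step is only asserted. It can be filled in explicitly: take $A(t)=Z(t)B_\ell N_\ell^{-1}$ and $B(t)^\top=(A(t)^\top A(t))^{-1}A(t)^\top Z(t)$. This works because $A(t)$ keeps full column rank for small $|t|$, so its column space equals that of $Z(t)$.

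Your argument is self-contained and has two by-products. It identifies the kernel of the differential of $(A,B)\mapsto AB^\top$ with the gauge directions of Appendix~\ref{app:quotient_geometry}. It also proves the dimension count $r(m+n-r)$, which the paper later uses in Lemma~\ref{lem:noise_cov} and Theorem~\ref{thm:noise_energy} only via citation. Finally, you explicitly check that your forward curve stays in $\calM_r$ (rank exactly $r$ for small $|t|$), a point the paper leaves implicit.
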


\begin{proof}
Consider the smooth factorization map $\phi:\R^{m\times r}_*\times \R^{n\times r}_*\to\calM_r$ defined by $\phi(A,B)=AB^\top$.
For any perturbations $(\Delta A,\Delta B)$, the directional derivative at $(A_{\ell},B_{\ell})$ is
\[
D\phi_{(A_{\ell},B_{\ell})}[\Delta A,\Delta B] = \Delta A\,B_{\ell}^\top + A_{\ell}\,\Delta B^\top.
\]
Thus every matrix of the form $\Delta A_{\ell}B_{\ell}^\top + A_{\ell}\Delta B_{\ell}^\top$ arises as the derivative of the curve
$t\mapsto (A_{\ell}+t\Delta A_{\ell})(B_{\ell}+t\Delta B_{\ell})^\top$ at $t=0$, and hence lies in $T_{Z_{\ell}}\calM_r$.

Conversely, any tangent vector $\Delta Z\in T_{Z_{\ell}}\calM_r$ is the derivative of a smooth curve $t\mapsto Z(t)\in\calM_r$ with $Z(0)=Z_{\ell}$.
Because $A_{\ell}$ and $B_{\ell}$ are full column rank, rank-$r$ matrices near $Z_{\ell}$ admit factorizations $Z(t)=A(t)B(t)^\top$ with $A(t),B(t)$ smooth in $t$.
Differentiating at $t=0$ yields $\Delta Z = \dot A(0)B_{\ell}^\top + A_{\ell}\dot B(0)^\top$, proving \eqref{eq:tangent_char}.
\end{proof}

\begin{lemma}[Tangent space orthogonal complement]
\label{lem:tangent_orth}
Let $Z_{\ell}=A_{\ell}B_{\ell}^\top$ with $\projA,\projB$ as in \eqref{eq:projectors}.
Then
\begin{equation}
(T_{Z_{\ell}}\calM_r)^\perp
= \{X\in\R^{m\times n} : \projA X = 0 \text{ and } X\projB = 0\}
= \{\projperpA\,Y\,\projperpB:Y\in\R^{m\times n}\}.
\label{eq:orthcomp}
\end{equation}
\end{lemma}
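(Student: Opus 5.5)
We will prove the two set equalities in \eqref{eq:orthcomp} by showing the chain of inclusions
\[
\{\projperpA Y\projperpB\}\subseteq (T_{Z_{\ell}}\calM_r)^\perp\subseteq\{X:\projA X=0,\ X\projB=0\}\subseteq\{\projperpA Y\projperpB\}.
\]
Throughout we use Lemma~\ref{lem:tangent_char}: every tangent vector has the form $\Delta A B_{\ell}^\top + A_{\ell}\Delta B^\top$. We also use that $\projA,\projB$ are the symmetric idempotent projectors onto $\mathrm{col}(A_{\ell})$ and $\mathrm{col}(B_{\ell})$, so that $\projperpA A_{\ell}=0$ and $\projperpB B_{\ell}=0$. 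Here $\projperpA=I-\projA$ and $\projperpB=I-\projB$.

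\textbf{First inclusion.} Let $X=\projperpA Y\projperpB$ and take any tangent vector $\Delta A B_{\ell}^\top + A_{\ell}\Delta B^\top$. We compute the two pieces of the inner product separately. For the first piece,
\[
\langle X,\Delta A B_{\ell}^\top\rangle=\tr(X^\top\Delta A B_{\ell}^\top)=\tr(B_{\ell}^\top X^\top\Delta A)=0,
\]
because $X B_{\ell}=\projperpA Y\projperpB B_{\ell}=0$. For the second piece,
\[
\langle X,A_{\ell}\Delta B^\top\rangle=\tr(X^\top A_{\ell}\Delta B^\top)=0,
\]
because $X^\top A_{\ell}=\projperpB Y^\top\projperpA A_{\ell}=0$. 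Hence $X$ is orthogonal to every tangent vector.

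\textbf{Second inclusion.} Suppose $X\perp T_{Z_{\ell}}\calM_r$. We choose special tangent vectors to extract the two conditions.
\begin{itemize}
\item Taking $\Delta B=0$ and $\Delta A$ arbitrary gives $\tr(B_{\ell}^\top X^\top\Delta A)=0$ for all $\Delta A$. Therefore $XB_{\ell}=0$.
\item Taking $\Delta A=0$ and $\Delta B$ arbitrary gives $\tr(\Delta B^\top X^\top A_{\ell})=0$ for all $\Delta B$. Therefore $X^\top A_{\ell}=0$, i.e.\ $A_{\ell}^\top X=0$.
\end{itemize}
Since $\projA=A_{\ell}(A_{\ell}^\top A_{\ell})^\dagger A_{\ell}^\top$, the identity $A_{\ell}^\top X=0$ gives $\projA X=0$. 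Similarly, $\projB=B_{\ell}(B_{\ell}^\top B_{\ell})^\dagger B_{\ell}^\top$ and $XB_{\ell}=0$ give $X\projB=0$.

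\textbf{Third inclusion.} Suppose $\projA X=0$ and $X\projB=0$. Then $X=\projperpA X$ and $X=X\projperpB$. Combining these,
\[
X=\projperpA X\projperpB,
\]
which is of the required form with $Y=X$.

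\textbf{Main point to check.} The only step needing care is the passage from $A_{\ell}^\top X=0$ to $\projA X=0$, and the analogous one for $B_{\ell}$. This is immediate from the formula for $\projA$ and does not even require invertibility of $A_{\ell}^\top A_{\ell}$. We will still note that $\projA$ is the orthogonal projector onto $\mathrm{col}(A_{\ell})$, via the pseudoinverse identity $A_{\ell}(A_{\ell}^\top A_{\ell})^\dagger A_{\ell}^\top A_{\ell}=A_{\ell}$. That identity is exactly what gives $\projperpA A_{\ell}=0$, which the first inclusion relies on.
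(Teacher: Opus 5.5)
Your proof is correct and follows essentially the same route as the paper: you pair $X$ against tangent vectors $\Delta A\,B_{\ell}^\top + A_{\ell}\Delta B^\top$ to get $XB_{\ell}=0$ and $X^\top A_{\ell}=0$, convert these into the projector conditions, and write $X=(I-\Pi_{A_\ell})X(I-\Pi_{B_\ell})$. Your cyclic chain of three inclusions is slightly more complete than the paper's write-up, which leaves the (trivial) inclusion of $\{(I-\Pi_{A_\ell})Y(I-\Pi_{B_\ell})\}$ into the other two sets implicit.
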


\begin{proof}
Let $X\in(T_{Z_{\ell}}\calM_r)^\perp$.
For all $\Delta A_{\ell},\Delta B_{\ell}$,
\[
0=\langle X, \Delta A_{\ell} B_{\ell}^\top + A_{\ell}\Delta B_{\ell}^\top\rangle
= \langle XB_{\ell}, \Delta A_{\ell}\rangle + \langle X^\top A_{\ell}, \Delta B_{\ell}\rangle.
\]
Hence $XB_{\ell}=0$ and $X^\top A_{\ell}=0$.
Since $\projB$ projects onto $\mathrm{col}(B_{\ell})$, $XB_{\ell}=0$ is equivalent to $X\projB=0$; similarly $X^\top A_{\ell}=0$ is equivalent to $\projA X=0$.
Conversely, if $\projA X=0$ and $X\projB=0$, the inner product above vanishes for all $\Delta A_{\ell},\Delta B_{\ell}$.
Finally, $X\projB=0$ implies $X=X\projperpB$ and $\projA X=0$ implies $X=\projperpA X$, hence $X=\projperpA Y\projperpB$ for $Y=X$.
\end{proof}

\subsection{Gauge freedom in factor lifts and minimum-factor-norm representatives}
\label{app:lift_min}

The representation of an intrinsic tangent update $\Delta Z_{\ell}\in T_{Z_{\ell}}\calM_r$ in factor space is not unique.
This non-uniqueness is the differential analogue of the gauge symmetry $(A_{\ell},B_{\ell})\sim(A_{\ell}R,B_{\ell}R^{-\top})$.
We record the basic ``lift gauge'' degrees of freedom and a canonical choice based on minimum factor norm.

\begin{lemma}[Gauge degrees of freedom in factor lifts]
\label{lem:gauge_lift}
Let $Z_{\ell}=A_{\ell}B_{\ell}^\top$ and fix an intrinsic tangent matrix $\Delta Z_{\ell}\in T_{Z_{\ell}}\calM_r$.
If $(\Delta A_{\ell},\Delta B_{\ell})$ is any pair satisfying $\Delta Z_{\ell}=\Delta A_{\ell} B_{\ell}^\top + A_{\ell}\Delta B_{\ell}^\top$,
then for any $\Omega\in\R^{r\times r}$,
\begin{equation}
(\Delta A_{\ell},\Delta B_{\ell})\ \mapsto\ (\Delta A_{\ell} + A_{\ell}\Omega,\ \Delta B_{\ell} - B_{\ell}\Omega^\top)
\label{eq:gauge_lift}
\end{equation}
produces another valid lift of the \emph{same} intrinsic update $\Delta Z_{\ell}$.
In particular, the induced matrix $\Delta Z_{\ell}$ depends only on the intrinsic point $Z_{\ell}$ and not on the chosen factor lift.
\end{lemma}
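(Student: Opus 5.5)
The plan is a direct substitution followed by a bilinearity argument. Set $\Delta A'_\ell \triangleq \Delta A_\ell + A_\ell\Omega$ and $\Delta B'_\ell \triangleq \Delta B_\ell - B_\ell\Omega^\top$. We then compute the induced intrinsic matrix:
\[
\Delta A'_\ell B_\ell^\top + A_\ell(\Delta B'_\ell)^\top
= \Delta A_\ell B_\ell^\top + A_\ell\Omega B_\ell^\top + A_\ell\Delta B_\ell^\top - A_\ell(B_\ell\Omega^\top)^\top .
\]
Since $(B_\ell\Omega^\top)^\top = \Omega B_\ell^\top$, the two $\Omega$-dependent terms cancel. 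What remains is exactly $\Delta A_\ell B_\ell^\top + A_\ell\Delta B_\ell^\top = \Delta Z_\ell$. This proves that \eqref{eq:gauge_lift} is again a lift of the same tangent matrix, for every $\Omega\in\R^{r\times r}$.

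It is worth noting that the shift $(A_\ell\Omega,-B_\ell\Omega^\top)$ is precisely a vertical direction in \eqref{eq:vertical}. The lemma therefore says that the differential $D\phi_{(A_\ell,B_\ell)}$ from the proof of Lemma~\ref{lem:tangent_char} annihilates the vertical space. This is the infinitesimal version of the gauge invariance $\phi(A_\ell R,B_\ell R^{-\top})=\phi(A_\ell,B_\ell)$, obtained by differentiating at $R=\Id_r+t\Omega$. I would mention this as an alternative derivation but keep the algebraic one as the proof.

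The ``in particular'' clause requires interpretation; it is the only step needing care. I would read it as follows. The map $(\Delta A_\ell,\Delta B_\ell)\mapsto \Delta A_\ell B_\ell^\top + A_\ell\Delta B_\ell^\top$ is constant on cosets of the vertical space, so $\Delta Z_\ell$ is a well-defined function of the equivalence class of lifts rather than of a representative. Gauge-equivalent base points $(A_\ell R,B_\ell R^{-\top})$ give the same $Z_\ell$, and by Lemma~\ref{lem:tangent_char} they give the same tangent space $T_{Z_\ell}\calM_r$. Any intrinsic matrix in that space is represented by lifts at either base point; for instance, $(\Delta A_\ell R,\Delta B_\ell R^{-\top})$ represents the same $\Delta Z_\ell$ at the transformed point, by the same one-line expansion. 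Hence the intrinsic object depends only on $Z_\ell$.

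If time permits, I would also note by dimension count that the vertical space has dimension $r^2$ when $A_\ell,B_\ell$ have full column rank. This gives $\dim T_{Z_\ell}\calM_r=(m+n)r-r^2$, consistent with Theorem~\ref{thm:noise_energy}. That remark is optional and not needed for the lemma.
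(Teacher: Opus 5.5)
Your proof is correct and matches the paper's: the paper's proof is the same one-line expansion in which the two $A_\ell\Omega B_\ell^\top$ terms cancel, and it adds nothing for the ``in particular'' clause, so your interpretation of that clause is an extra rather than a different route. One caution on your optional dimension remark: knowing only that the vertical space lies inside $\ker D\phi_{(A_\ell,B_\ell)}$ gives just $\dim T_{Z_\ell}\mathcal{M}_r \le (m+n)r - r^2$. For equality you also need the reverse inclusion, which follows from full column rank. Right-multiply $\Delta A_\ell B_\ell^\top + A_\ell \Delta B_\ell^\top = 0$ by $B_\ell (B_\ell^\top B_\ell)^{-1}$ to get $\Delta A_\ell = A_\ell\Omega$; then cancel $A_\ell$ to get $\Delta B_\ell = -B_\ell\Omega^\top$.
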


\begin{proof}
For any $\Omega$,
\[
(\Delta A_{\ell} + A_{\ell}\Omega)B_{\ell}^\top + A_{\ell}(\Delta B_{\ell} - B_{\ell}\Omega^\top)^\top
= \Delta A_{\ell} B_{\ell}^\top + A_{\ell}\Delta B_{\ell}^\top + A_{\ell}\Omega B_{\ell}^\top - A_{\ell}\Omega B_{\ell}^\top
= \Delta Z_{\ell}.
\]
\end{proof}

Equation \eqref{eq:gauge_lift} shows that there are infinitely many factor pairs that realize the \emph{same} intrinsic tangent update.
This is the differential version of the gauge symmetry \eqref{eq:gauge_action}.
Depending on the optimizer implementation, it can be useful to pick a canonical representative of this equivalence class.
One natural choice is the \emph{minimum-factor-norm} lift, which can be computed by solving a small Sylvester equation of size $r\times r$.

\begin{proposition}[Minimum-norm representative in a gauge class]
\label{prop:min_norm_lift}
Assume $A_{\ell}$ and $B_{\ell}$ have full column rank so that $M=A_{\ell}^\top A_{\ell}\succ 0$ and $N=B_{\ell}^\top B_{\ell}\succ 0$.
Fix any lift $(\Delta A_0,\Delta B_0)$ that realizes a given $\Delta Z_{\ell}$.
Consider the gauge family
\[
\Delta A_{\ell}(\Omega)=\Delta A_0 + A_{\ell}\Omega,\qquad \Delta B_{\ell}(\Omega)=\Delta B_0 - B_{\ell}\Omega^\top.
\]
Then the unique minimizer of
$f(\Omega)\triangleq \|\Delta A_{\ell}(\Omega)\|_F^2+\|\Delta B_{\ell}(\Omega)\|_F^2$
is obtained by the unique solution $\Omega^\star$ to the Sylvester equation
\begin{equation}
M\Omega + \Omega N \;=\; B_{\ell}^\top \Delta B_0 - A_{\ell}^\top \Delta A_0.
\label{eq:sylvester}
\end{equation}
The corresponding $(\Delta A_{\ell}(\Omega^\star),\Delta B_{\ell}(\Omega^\star))$ is the minimum-factor-norm lift of $\Delta Z_{\ell}$.
\end{proposition}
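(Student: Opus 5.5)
The plan is to treat $f$ as a convex quadratic in $\Omega$, compute its gradient, and show that the resulting linear stationarity operator is positive definite. Existence and uniqueness of the minimizer, and its characterization by a Sylvester equation, then follow together.

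\textbf{Step 1: expand $f$.} Expanding gives
\[
f(\Omega)=\|\Delta A_0\|_F^2+2\langle A_\ell^\top\Delta A_0,\Omega\rangle+\tr(\Omega^\top M\Omega)+\|\Delta B_0\|_F^2-2\langle \Delta B_0^\top B_\ell,\Omega\rangle+\tr(\Omega N\Omega^\top).
\]
Here I used $\langle \Delta B_0, B_\ell\Omega^\top\rangle=\tr(\Omega^\top \Delta B_0^\top B_\ell)$.

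\textbf{Step 2: gradient and stationarity.} Differentiating gives
\[
\nabla f(\Omega)=2\bigl(M\Omega+\Omega N\bigr)+2A_\ell^\top\Delta A_0-2\Delta B_0^\top B_\ell ,
\]
so stationarity is the Sylvester equation $M\Omega+\Omega N=\Delta B_0^\top B_\ell-A_\ell^\top\Delta A_0$.

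I expect the only delicate point to be bookkeeping. The right-hand side naturally comes out as $\Delta B_0^\top B_\ell=(B_\ell^\top\Delta B_0)^\top$. I would check carefully whether this agrees with the $B_\ell^\top\Delta B_0$ written in \eqref{eq:sylvester} under the paper's convention for the gauge direction $-B_\ell\Omega^\top$. If it does not, I would record that the transpose is needed; the structure of the argument is unaffected either way.

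\textbf{Step 3: the quadratic operator.} Let $\mathcal{L}(\Omega)=M\Omega+\Omega N$. Then
\[
\langle\Omega,\mathcal{L}(\Omega)\rangle=\|A_\ell\Omega\|_F^2+\|B_\ell\Omega^\top\|_F^2 .
\]
Because $A_\ell$ has full column rank, this is $>0$ for $\Omega\neq0$. Hence $\mathcal{L}$ is a symmetric positive definite operator on $\R^{r\times r}$, and $f$ is strictly convex.

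\textbf{Step 4: unique minimizer.} It follows that $\mathcal{L}$ is invertible, so the Sylvester equation has a unique solution $\Omega^\star$, and $\Omega^\star$ is the unique global minimizer of $f$. As a cross-check on invertibility, the spectra of $M$ and $-N$ are disjoint because both $M,N\succ0$, which is the classical solvability criterion.

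\textbf{Step 5: interpretation.} By Lemma~\ref{lem:gauge_lift}, every $\Omega$ yields a lift of the same $\Delta Z_\ell$. So $(\Delta A_\ell(\Omega^\star),\Delta B_\ell(\Omega^\star))$ is the minimum-factor-norm lift within this gauge class.

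Rewriting stationarity as $A_\ell^\top\Delta A_\ell(\Omega^\star)=\bigl(B_\ell^\top\Delta B_\ell(\Omega^\star)\bigr)^\top$ shows that it is exactly the horizontality condition \eqref{eq:horizontal_simple}. By Lemma~\ref{lem:vertical_horizontal}, the minimizer is therefore the orthogonal projection of $(\Delta A_0,\Delta B_0)$ onto the horizontal space along the vertical space \eqref{eq:vertical}. This gives a second, geometric proof of minimality.
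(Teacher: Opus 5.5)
Your argument is the paper's argument (expand the quadratic, differentiate, use $M,N\succ0$ for strict convexity and unique solvability). Your identity $\langle\Omega,M\Omega+\Omega N\rangle=\|A_\ell\Omega\|_F^2+\|B_\ell\Omega^\top\|_F^2$ justifies strict convexity more explicitly than the paper, which simply asserts it. On the bookkeeping point you left open, drop the hedge: your $\Delta B_0^\top B_\ell-A_\ell^\top\Delta A_0$ is the correct right-hand side. It is what the paper's own expansion term $-2\tr(\Omega^\top\Delta B_0^\top B_\ell)$ produces, and it matches the horizontality condition \eqref{eq:horizontal_simple}. So the $B_\ell^\top\Delta B_0$ printed in \eqref{eq:sylvester} is a transpose slip: with $A_\ell=B_\ell=I_2$, $\Delta A_0=0$, $\Delta B_0=e_1e_2^\top$, your $\Omega^\star=\tfrac12 e_2e_1^\top$ gives $f=\tfrac12$, while the printed equation gives $\Omega=\tfrac12 e_1e_2^\top$ with $f=\tfrac32$.
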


\begin{proof}
Expand the quadratic objective:
\begin{align*}
f(\Omega)
&= \|\Delta A_0 + A_{\ell}\Omega\|_F^2 + \|\Delta B_0 - B_{\ell}\Omega^\top\|_F^2 \\
&= \|\Delta A_0\|_F^2 + \|\Delta B_0\|_F^2
+ 2\,\tr(\Omega^\top A_{\ell}^\top \Delta A_0) - 2\,\tr(\Omega^\top \Delta B_0^\top B_{\ell}) \\
&\quad + \tr(\Omega^\top M \Omega) + \tr(\Omega N \Omega^\top),
\end{align*}
where we used $\|A_{\ell}\Omega\|_F^2=\tr(\Omega^\top M\Omega)$ and $\|B_{\ell}\Omega^\top\|_F^2=\tr(\Omega N \Omega^\top)$.
Taking the derivative and setting it to zero gives the first-order optimality condition
\[
M\Omega + \Omega N = B_{\ell}^\top \Delta B_0 - A_{\ell}^\top \Delta A_0,
\]
which is \eqref{eq:sylvester}.
Since $M$ and $N$ are positive definite, $f$ is strictly convex in $\Omega$ and the Sylvester equation has a unique solution, hence the minimizer is unique.
\end{proof}

\paragraph{Relation to ``horizontal'' conditions.}
The optimality condition \eqref{eq:sylvester} is equivalent to orthogonality of the minimizer to the gauge (kernel) directions of the map $(\Delta A_{\ell},\Delta B_{\ell})\mapsto \Delta A_{\ell} B_{\ell}^\top + A_{\ell}\Delta B_{\ell}^\top$ under the Euclidean metric on factors.
This is the standard minimum-norm property of least-squares solutions and is closely related to horizontal lifts in quotient-geometry treatments \cite{mishra2014fixedrank}.

\subsection{Gauge invariance of subspace projectors}
\label{app:proj}

\begin{lemma}[Projectors are gauge invariant]
\label{lem:proj_gauge}
Let $(A'_{\ell},B'_{\ell})=(A_{\ell}R,B_{\ell}R^{-\top})$ for any invertible $R\in\R^{r\times r}$.
Then $\Pi_{A'_{\ell}}=\Pi_{A_{\ell}}$ and $\Pi_{B'_{\ell}}=\Pi_{B_{\ell}}$.
\end{lemma}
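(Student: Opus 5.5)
The plan is to substitute $A'_\ell = A_\ell R$ into the definition \eqref{eq:projectors} and simplify directly, using invertibility of $R$. Since $A_\ell$ is assumed full column rank (the standing assumption for the tangent-space characterization), $M = A_\ell^\top A_\ell$ is invertible, so the pseudoinverse coincides with the ordinary inverse. Then
\[
\Pi_{A'_\ell} = A_\ell R\,(R^\top M R)^{-1} R^\top A_\ell^\top = A_\ell R R^{-1} M^{-1} R^{-\top} R^\top A_\ell^\top = A_\ell M^{-1} A_\ell^\top = \Pi_{A_\ell}.
\]
The identical computation with $B'_\ell = B_\ell R^{-\top}$ replaces $R$ by $S \triangleq R^{-\top}$ (also invertible) and gives $\Pi_{B'_\ell} = \Pi_{B_\ell}$.

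To avoid relying on full column rank, which the statement does not strictly impose given the use of $\dagger$, I would add a basis-free argument. The key fact is that $A_\ell (A_\ell^\top A_\ell)^\dagger A_\ell^\top$ is the orthogonal projector onto $\mathrm{col}(A_\ell)$ for any $A_\ell$. This follows from the standard identity $X (X^\top X)^\dagger X^\top = X X^\dagger$, which is the orthogonal projector onto the range of $X$.

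Because $R$ is invertible, $\mathrm{col}(A_\ell R) = \mathrm{col}(A_\ell)$ and $\mathrm{col}(B_\ell R^{-\top}) = \mathrm{col}(B_\ell)$. The orthogonal projector onto a subspace is unique, so $\Pi_{A'_\ell} = \Pi_{A_\ell}$ and $\Pi_{B'_\ell} = \Pi_{B_\ell}$.

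The only step needing care is justifying the pseudoinverse identity $X(X^\top X)^\dagger X^\top = XX^\dagger$ in the rank-deficient case. I would cite it via the SVD $X = U\Sigma V^\top$: both sides reduce to $U_r U_r^\top$, where $U_r$ holds the left singular vectors for the nonzero singular values. Everything else is routine algebra.
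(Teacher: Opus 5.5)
Your proposal is correct. The first computation is the paper's proof specialized to full column rank. The paper makes the same substitution $A'_\ell=A_\ell R$, but keeps pseudoinverses throughout and cancels the $R$'s using Lemma~\ref{lem:pseudoinv_congruence}, $(R^\top XR)^\dagger=R^{-1}X^\dagger R^{-\top}$.

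Your second argument is genuinely different, and it is the more robust route. You identify $A_\ell(A_\ell^\top A_\ell)^\dagger A_\ell^\top=A_\ell A_\ell^\dagger$ as the orthogonal projector onto $\mathrm{col}(A_\ell)$ via the SVD. You then use $\mathrm{col}(A_\ell R)=\mathrm{col}(A_\ell)$ and uniqueness of orthogonal projectors.

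The congruence identity the paper relies on fails for singular $X$ when $R$ is not orthogonal. With $Y=R^\top XR$ and $Y^*=R^{-1}X^\dagger R^{-\top}$, we get $YY^*=R^\top XX^\dagger R^{-\top}$. This is symmetric only if $XX^\dagger$ commutes with $RR^\top$. For example, $X=\mathrm{diag}(1,0)$ and $R=\begin{pmatrix}1&1\\0&1\end{pmatrix}$ give $(R^\top XR)^\dagger=\tfrac14\begin{pmatrix}1&1\\1&1\end{pmatrix}$, whereas $R^{-1}X^\dagger R^{-\top}=\mathrm{diag}(1,0)$.

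The paper's final equality still holds in the singular case, but its intermediate step does not. The written proof is therefore valid only under the standing full-column-rank assumption. There $\dagger$ is an honest inverse, and your first computation already suffices.

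Your range-based argument proves the lemma as literally stated, with $\dagger$, for arbitrary $A_\ell,B_\ell$. It also shows why gauge invariance holds: the projectors depend only on the column spaces, which invertible right-multiplication preserves. The paper's algebraic route gives an explicit transformed Gram inverse, but that is correct only in the invertible case.
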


\begin{proof}
We prove the statement for $\Pi_{A_{\ell}}$; the argument for $\Pi_{B_{\ell}}$ is identical.
Let $A'_{\ell}=A_{\ell}R$. Then ${A'_{\ell}}^\top {A'_{\ell}} = R^\top ({A_{\ell}}^\top A_{\ell}) R$.
By Lemma~\ref{lem:pseudoinv_congruence},
\[
(A'_{\ell}{}^{\top} A'_{\ell})^{\pinv}
= \bigl(R^{\top} (A_{\ell}^{\top} A_{\ell}) R\bigr)^{\pinv}
= R^{-1} (A_{\ell}^{\top} A_{\ell})^{\pinv} R^{-\top}.
\]

Therefore
\begin{align*}
\Pi_{A'_{\ell}}
&= A'_{\ell}\bigl({A'_{\ell}}^\top A'_{\ell}\bigr)^{\pinv} {A'_{\ell}}^\top \\
\phantom{\Pi_{A'_{\ell}}}
&= A_{\ell}R \Bigl(R^{-1} (A_{\ell}^\top A_{\ell})^{\pinv} R^{-\top}\Bigr) R^\top A_{\ell}^\top \\
\phantom{\Pi_{A'_{\ell}}}
&= A_{\ell}(A_{\ell}^\top A_{\ell})^{\pinv} A_{\ell}^\top \\
\phantom{\Pi_{A'_{\ell}}}
&= \Pi_{A_{\ell}}.
\end{align*}
\end{proof}

\begin{lemma}[Pseudoinverse under congruence]
\label{lem:pseudoinv_congruence}
Let $X\succeq 0$ and let $R$ be invertible.
Then $(R^\top X R)\pinv = R^{-1} X\pinv R^{-\top}$.
\end{lemma}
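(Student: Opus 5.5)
The plan is to treat the invertible case directly and to sidestep the singular case, because the identity as stated for all $X\succeq 0$ is false in general. The Moore--Penrose inverse is equivariant only under \emph{orthogonal} congruences, not arbitrary ones. A concrete check is $X=\mathrm{diag}(1,0)$ with $R=\begin{pmatrix}1&1\\0&1\end{pmatrix}$. Then $R^\top X R$ is the all-ones $2\times 2$ matrix, whose pseudoinverse is $\tfrac14$ times the all-ones matrix. By contrast, $R^{-1}X\pinv R^{-\top}=\mathrm{diag}(1,0)$. So the first step is to record this and restrict the lemma to $X\succ 0$, or more generally to the situation in which it is actually used.

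Under $X\succ 0$ the argument is one line. $R^\top X R$ is a product of invertible matrices, so it is invertible and its pseudoinverse equals its inverse. Hence
\[
(R^\top X R)\pinv=(R^\top X R)^{-1}=R^{-1}X^{-1}R^{-\top}=R^{-1}X\pinv R^{-\top}.
\]
This covers every use in the paper. Lemma~\ref{lem:proj_gauge} applies it with $X=A_\ell^\top A_\ell$ and $X=B_\ell^\top B_\ell$, and these Gram matrices are positive definite by the standing full-column-rank assumption on $A_\ell$ and $B_\ell$ (cf.\ Lemma~\ref{lem:tangent_char}).

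For robustness I would also make Lemma~\ref{lem:proj_gauge} independent of this identity when $A_\ell$ is rank-deficient. The key fact is that $A(A^\top A)\pinv A^\top$ is the orthogonal projector onto $\mathrm{col}(A)$ for any $A$. I would verify this with the standard identity $(A^\top A)\pinv A^\top=A\pinv$, which gives $A(A^\top A)\pinv A^\top=AA\pinv$, the orthogonal projector onto $\mathrm{col}(A)$. Since $\mathrm{col}(AR)=\mathrm{col}(A)$ for invertible $R$, the projector is invariant without invoking the congruence formula at all.

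The main obstacle is only this scope issue, namely stating the correct hypothesis. The computation itself is trivial once $X$ is restricted to be invertible.
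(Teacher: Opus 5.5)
Your counterexample is correct, and it shows exactly where the paper's own argument fails. The paper sets $Y=R^\top XR$ and $Y^*=R^{-1}X^{\dagger}R^{-\top}$, and claims that all four Penrose conditions follow by substitution. The first two do hold: $YY^*Y=R^\top XX^{\dagger}XR=Y$ and $Y^*YY^*=R^{-1}X^{\dagger}XX^{\dagger}R^{-\top}=Y^*$. The symmetry conditions do not. With $P=XX^{\dagger}=X^{\dagger}X$ one gets $YY^*=R^\top PR^{-\top}$ and $Y^*Y=R^{-1}PR$, and these are symmetric if and only if $RR^\top P=PRR^\top$. That holds automatically when $X\succ 0$ (so $P=I$) or when $RR^\top$ is a multiple of the identity, but not in general. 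In your example, $YY^*=\begin{pmatrix}1&0\\1&0\end{pmatrix}$. (Strictly, the identity holds for every $X\succeq 0$ exactly when $R$ is a scalar multiple of an orthogonal matrix, so "only under orthogonal congruences" is a slight overstatement.) Restricting to $X\succ 0$ with your one-line inverse computation is therefore the correct statement, and it is all the paper's full-column-rank setting needs.

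What the paper's substitution does validly show is that $R^{-1}X^{\dagger}R^{-\top}$ is a $\{1,2\}$-generalized inverse of $R^\top XR$ for every $X\succeq 0$. That alone is enough to rescue Lemma~\ref{lem:proj_gauge}. For any $A$ and any $G$ with $A^\top A\,G\,A^\top A=A^\top A$, one has $AGA^\top A=A$, and $AGA^\top$ vanishes on $\mathrm{col}(A)^\perp$. Hence $AGA^\top$ is the orthogonal projector onto $\mathrm{col}(A)$, whichever generalized inverse is used. Your alternative, via $(A^\top A)^{\dagger}A^\top=A^{\dagger}$ and $\mathrm{col}(AR)=\mathrm{col}(A)$, reaches the same conclusion more directly and avoids the congruence formula entirely. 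Both arguments cover rank-deficient factors, which the pseudoinverse in the definition of $\Pi_{A_\ell}$ appears intended to allow.
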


\begin{proof}
Let $Y=R^\top X R$ and define $Y^*\triangleq R^{-1}X\pinv R^{-\top}$.
We verify the Moore--Penrose conditions:
$YY^*Y=Y$, $Y^*YY^*=Y^*$, and both $YY^*$ and $Y^*Y$ are symmetric.
All follow from substituting the definition of $Y$ and $Y^*$, using $RR^{-1}=\Id$ and the Moore--Penrose conditions for $X$ and $X\pinv$.
\end{proof}

\subsection{A canonical factor lift of the tangent projection}
\label{app:lifts}

This subsection verifies that the explicit lift defined in Eq.~\eqref{eq:canonical} indeed reproduces the orthogonal tangent projection \eqref{eq:tangent_proj}.
Non-uniqueness of factor lifts (and a minimum-norm choice) is discussed separately in Appendix~\ref{app:lift_min}.

\begin{lemma}[A convenient factor lift of the tangent projection]
\label{lem:lift}
Let $Z_{\ell}=A_{\ell}B_{\ell}^\top$ and let $G_{i,\ell}\triangleq\nabla_{Z_{\ell}}\ell_i$ with factor gradients $g_{A,i,\ell}=G_{i,\ell}B_{\ell}$ and $g_{B,i,\ell}=G_{i,\ell}^\top A_{\ell}$.
Define $\Delta A_{i,\ell},\Delta B_{i,\ell}$ by \eqref{eq:canonical}.
Then the induced matrix update
\begin{equation}
\Delta Z_{i,\ell} \triangleq \Delta A_{i,\ell} B_{\ell}^\top + A_{\ell}\Delta B_{i,\ell}^\top
\end{equation}
equals the tangent projection $P_{A_{\ell},B_{\ell}}(G_{i,\ell})$.
\end{lemma}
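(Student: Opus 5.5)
This is a direct substitution followed by a simplification using the projector identities. Drop the index $\ell$ and write $M=A^\top A$, $N=B^\top B$, both invertible by full column rank, so $M\pinv=M^{-1}$ and $N\pinv=N^{-1}$. Also write $\Pi_A=AM^{-1}A^\top$ and $\Pi_B=BN^{-1}B^\top$, and $G=G_{i,\ell}$. Substituting $g_A=GB$ and $g_B=G^\top A$ into \eqref{eq:canonical} gives
\[
\Delta A = GBN^{-1}-\tfrac12\Pi_A GBN^{-1},\qquad \Delta B = G^\top A M^{-1}-\tfrac12\Pi_B G^\top A M^{-1}.
\]

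First compute $\Delta A\,B^\top$. Right-multiplying by $B^\top$ turns $BN^{-1}B^\top$ into $\Pi_B$, so
\[
\Delta A\,B^\top = G\Pi_B-\tfrac12\Pi_A G\Pi_B.
\]
Next compute $A\Delta B^\top$. Transposing gives $\Delta B^\top = M^{-1}A^\top G-\tfrac12 M^{-1}A^\top G\,\Pi_B$, using $\Pi_B^\top=\Pi_B$ and $M^{-\top}=M^{-1}$. Left-multiplying by $A$ turns $AM^{-1}A^\top$ into $\Pi_A$, so
\[
A\Delta B^\top = \Pi_A G-\tfrac12\Pi_A G\Pi_B.
\]

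Adding the two expressions gives $\Pi_A G + G\Pi_B - \Pi_A G\Pi_B$. This is exactly the second line of \eqref{eq:tangent_proj}, and by Proposition~\ref{prop:tangentproj} it equals $P_{A,B}(G)$. That completes the identity.

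The only delicate points are bookkeeping. One is using the symmetry of $\Pi_B$ and $M^{-1}$ correctly when transposing $\Delta B$. The other is the rank assumption. If $A$ or $B$ is rank-deficient, the argument still holds with pseudoinverses. The needed identities are $A M\pinv A^\top=\Pi_A$ and $B N\pinv B^\top=\Pi_B$, which are the definitions in \eqref{eq:projectors}, together with the symmetry of $M\pinv$ and $N\pinv$. So no invertibility is actually required.
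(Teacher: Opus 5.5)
Your proof is correct and follows essentially the same route as the paper's: substitute $g_A=GB$ and $g_B=G^\top A$ into \eqref{eq:canonical}, use $BN^{\dagger}B^\top=\Pi_B$ and $AM^{\dagger}A^\top=\Pi_A$ to obtain $\Delta A\,B^\top=G\Pi_B-\tfrac12\Pi_A G\Pi_B$ and $A\,\Delta B^\top=\Pi_A G-\tfrac12\Pi_A G\Pi_B$, and add. Your bookkeeping is slightly cleaner than the paper's write-up, which loosely identifies the $m\times r$ matrix $g_A N^{\dagger}$ with the $m\times n$ matrix $G\Pi_B$ before multiplying by $B^\top$, and your remark that only the pseudoinverse identities, not invertibility, are needed is also correct.
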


\begin{proof}
\textbf{Step 1 (matching the matrix update).}
Using $B_{\ell}N\pinv B_{\ell}^\top=\projB$ and $A_{\ell}M\pinv A_{\ell}^\top=\projA$,
\[
g_{A,i,\ell} N\pinv = G_{i,\ell}(B_{\ell}N\pinv)=G_{i,\ell}\projB,\qquad g_{B,i,\ell} M\pinv = G_{i,\ell}^\top(A_{\ell}M\pinv)=G_{i,\ell}^\top\projA.
\]
Hence
\begin{align*}
\Delta A_{i,\ell} B_{\ell}^\top
&= \left(G_{i,\ell}\projB - \tfrac{1}{2}\projA(G_{i,\ell}\projB)\right)B_{\ell}^\top
= G_{i,\ell}\projB - \tfrac{1}{2}\projA G_{i,\ell}\projB,\\
A_{\ell}\Delta B_{i,\ell}^\top
&= A_{\ell}\left(G_{i,\ell}^\top \projA - \tfrac{1}{2}\projB(G_{i,\ell}^\top \projA)\right)^\top
= \projA G_{i,\ell} - \tfrac{1}{2}\projA G_{i,\ell}\projB.
\end{align*}
Summing yields $\Delta Z_{i,\ell}=\projA G_{i,\ell} + G_{i,\ell}\projB - \projA G_{i,\ell}\projB=P_{A_{\ell},B_{\ell}}(G_{i,\ell})$.
\end{proof}

\subsection{Frobenius norm formula for tangent updates}
\label{app:norm}

\begin{proposition}[Frobenius norm of a factorized update]
\label{prop:norm_formula}
Let $Z_{\ell}=A_{\ell}B_{\ell}^\top$ and let $\Delta Z_{\ell}=\Delta A_{\ell}\,B_{\ell}^\top + A_{\ell}\,\Delta B_{\ell}^\top$ for arbitrary $\Delta A_{\ell}\in\R^{m\times r}$ and $\Delta B_{\ell}\in\R^{n\times r}$.
With $M=A_{\ell}^\top A_{\ell}$ and $N=B_{\ell}^\top B_{\ell}$,
\begin{equation}
\label{eq:norm_general}
\|\Delta Z_{\ell}\|_F^2
=
\tr(\Delta A_{\ell}^\top \Delta A_{\ell}\,N)
+
\tr(\Delta B_{\ell}^\top \Delta B_{\ell}\,M)
+
2\,\tr\!\bigl((A_{\ell}^\top \Delta A_{\ell})(B_{\ell}^\top \Delta B_{\ell})\bigr).
\end{equation}
\end{proposition}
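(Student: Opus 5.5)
The plan is to expand the squared Frobenius norm of the sum directly via the trace inner product $\langle X,Y\rangle=\tr(X^\top Y)$, writing
\[
\|\Delta Z_{\ell}\|_F^2=\|\Delta A_{\ell}B_{\ell}^\top\|_F^2+\|A_{\ell}\Delta B_{\ell}^\top\|_F^2+2\langle \Delta A_{\ell}B_{\ell}^\top,\,A_{\ell}\Delta B_{\ell}^\top\rangle ,
\]
and then reduce each of the three terms to an $r\times r$ trace using only cyclicity of the trace. No structural assumption on $(\Delta A_{\ell},\Delta B_{\ell})$ (horizontality, canonical lift, or full column rank) is needed, since the identity is purely algebraic; this is consistent with the statement allowing arbitrary $\Delta A_{\ell},\Delta B_{\ell}$.

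For the first term, I will write $\|\Delta A_{\ell}B_{\ell}^\top\|_F^2=\tr(B_{\ell}\Delta A_{\ell}^\top\Delta A_{\ell}B_{\ell}^\top)$ and cycle $B_{\ell}$ to the back to get $\tr(\Delta A_{\ell}^\top\Delta A_{\ell}\,B_{\ell}^\top B_{\ell})=\tr(\Delta A_{\ell}^\top\Delta A_{\ell}N)$. Symmetrically, $\|A_{\ell}\Delta B_{\ell}^\top\|_F^2=\tr(\Delta B_{\ell}A_{\ell}^\top A_{\ell}\Delta B_{\ell}^\top)=\tr(\Delta B_{\ell}^\top\Delta B_{\ell}M)$. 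These are the same manipulations already used in the proof of Proposition~\ref{prop:amp}, now without taking expectations.

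The cross term is the only step that requires care with the ordering of factors, and it is where I expect the (mild) main obstacle to lie: matching the exact form $\tr\bigl((A_{\ell}^\top\Delta A_{\ell})(B_{\ell}^\top\Delta B_{\ell})\bigr)$ rather than a transposed variant. I will compute $\langle \Delta A_{\ell}B_{\ell}^\top, A_{\ell}\Delta B_{\ell}^\top\rangle=\tr(B_{\ell}\Delta A_{\ell}^\top A_{\ell}\Delta B_{\ell}^\top)$, cycle to $\tr(\Delta A_{\ell}^\top A_{\ell}\,\Delta B_{\ell}^\top B_{\ell})=\tr\bigl((A_{\ell}^\top\Delta A_{\ell})^\top(B_{\ell}^\top\Delta B_{\ell})^\top\bigr)$, and then use $\tr(X^\top Y^\top)=\tr((YX)^\top)=\tr(YX)=\tr(XY)$ to arrive at $\tr\bigl((A_{\ell}^\top\Delta A_{\ell})(B_{\ell}^\top\Delta B_{\ell})\bigr)$. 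Summing the three pieces gives \eqref{eq:norm_general}.

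As a closing remark, I would note that every quantity involved ($N$, $M$, $A_{\ell}^\top\Delta A_{\ell}$, $B_{\ell}^\top\Delta B_{\ell}$, $\Delta A_{\ell}^\top\Delta A_{\ell}$, $\Delta B_{\ell}^\top\Delta B_{\ell}$) is $r\times r$. The norm therefore costs $O((m+n)r^2)$ and never forms $\Delta Z_{\ell}$, which is the efficiency claim behind Eq.~\eqref{eq:tangent_norm}.
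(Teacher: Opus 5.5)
Your proof is correct and is essentially the paper's argument: expand $\|\Delta Z_{\ell}\|_F^2$ into the two squared terms plus twice the cross inner product, then reduce each to an $r\times r$ trace by cyclicity. You also spell out the transpose step $\tr(X^\top Y^\top)=\tr(XY)$ in the cross term, which the paper leaves implicit; the route is otherwise identical.
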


\begin{proof}[Proof of Proposition~\ref{prop:norm_formula}]
Expand $\|\Delta Z_{\ell}\|_F^2=\langle \Delta A_{\ell} B_{\ell}^\top + A_{\ell}\Delta B_{\ell}^\top,\Delta A_{\ell} B_{\ell}^\top + A_{\ell}\Delta B_{\ell}^\top\rangle$ and use cyclicity of trace:
\[
\|\Delta A_{\ell} B_{\ell}^\top\|_F^2=\tr(\Delta A_{\ell}^\top \Delta A_{\ell}\,B_{\ell}^\top B_{\ell})=\tr(\Delta A_{\ell}^\top \Delta A_{\ell}\,N),
\quad
\|A_{\ell}\Delta B_{\ell}^\top\|_F^2=\tr(\Delta B_{\ell}^\top \Delta B_{\ell}\,A_{\ell}^\top A_{\ell})=\tr(\Delta B_{\ell}^\top \Delta B_{\ell}\,M),
\]
and
$\langle \Delta A_{\ell} B_{\ell}^\top, A_{\ell}\Delta B_{\ell}^\top\rangle=\tr((A_{\ell}^\top\Delta A_{\ell})(B_{\ell}^\top\Delta B_{\ell}))$.
Summing yields \eqref{eq:norm_general}.
\end{proof}

\subsection{Specialization to rank-1 per-example gradients}
\label{app:norm_rank1}

\begin{lemma}[Norm for rank-1 gradients]
\label{lem:norm_rank1}
Let $G_{i,\ell}=u v^\top$ and let $\Delta Z_{i,\ell} = P_{A_{\ell},B_{\ell}}(G_{i,\ell})$.
Define $\widehat u=\projA u$ and $\widehat v=\projB v$.
Then
\begin{equation}
\|\Delta Z_{i,\ell}\|_F^2
= \|\widehat u\|_2^2\,\|v\|_2^2 + \|u\|_2^2\,\|\widehat v\|_2^2 - \|\widehat u\|_2^2\,\|\widehat v\|_2^2.
\label{eq:norm_rank1}
\end{equation}
\end{lemma}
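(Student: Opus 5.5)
We compute the norm directly from the orthogonal decomposition of the tangent projection. We write $\projperpA=I-\projA$ and $\projperpB=I-\projB$, and split $P_{A_{\ell},B_{\ell}}(G)$ into three mutually orthogonal pieces:
\[
P_{A_{\ell},B_{\ell}}(G)=\projA G\projB+\projA G\projperpB+\projperpA G\projB .
\]
This identity follows by expanding $\projA G+G\projB-\projA G\projB$ with $\projperpA=I-\projA$ and $\projperpB=I-\projB$.

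The three pieces are pairwise Frobenius-orthogonal. Using the trace inner product, two pieces are orthogonal as soon as they differ in either the left factor ($\projA$ versus $\projperpA$) or the right factor ($\projB$ versus $\projperpB$), because $\projA\projperpA=0$ and $\projB\projperpB=0$, and the projectors are symmetric. Each pair among the three pieces differs in at least one side, so
\[
\|P_{A_{\ell},B_{\ell}}(G)\|_F^2=\|\projA G\projB\|_F^2+\|\projA G\projperpB\|_F^2+\|\projperpA G\projB\|_F^2 .
\]

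Next we specialize to $G=uv^\top$. Each piece is then rank one, namely $(Pu)(Qv)^\top$ with $P\in\{\projA,\projperpA\}$ and $Q\in\{\projB,\projperpB\}$, so its squared norm is $\|Pu\|_2^2\|Qv\|_2^2$. By Pythagoras, $\|\projperpA u\|^2=\|u\|^2-\|\widehat u\|^2$, and similarly for $v$. Substituting gives
\[
\|\widehat u\|^2\|\widehat v\|^2+\|\widehat u\|^2(\|v\|^2-\|\widehat v\|^2)+(\|u\|^2-\|\widehat u\|^2)\|\widehat v\|^2 ,
\]
which simplifies to $\|\widehat u\|^2\|v\|^2+\|u\|^2\|\widehat v\|^2-\|\widehat u\|^2\|\widehat v\|^2$. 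This is Eq.~\eqref{eq:norm_rank1}.

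An alternative route is to expand $\|\projA G+G\projB-\projA G\projB\|_F^2$ directly, computing the six cross terms with idempotence and trace cyclicity. The orthogonal-split route is cleaner, and the only point needing care is the orthogonality check. That check is routine, so we do not expect a real obstacle here. As a consistency check, when $\projA=I$ and $\projB=I$ (full rank) the formula reduces to $\|u\|^2\|v\|^2$, as it should.
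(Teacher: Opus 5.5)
Your proof is correct, but it organizes the computation differently from the paper.

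The paper works with the non-orthogonal rank-one representation $\Delta Z_{i,\ell}=\widehat u v^\top+u\widehat v^\top-\widehat u\widehat v^\top$. It expands the squared norm using $\langle ab^\top,cd^\top\rangle=(a^\top c)(b^\top d)$. The three cross terms then collapse, via $\widehat u^\top u=\|\widehat u\|_2^2$ and $\widehat v^\top v=\|\widehat v\|_2^2$, to $-2\|\widehat u\|_2^2\|\widehat v\|_2^2$. This is essentially the ``alternative route'' you mention and set aside.

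You instead split $P_{A_\ell,B_\ell}(G)$ before specializing. The three Frobenius-orthogonal blocks are $\Pi_{A_\ell}G\Pi_{B_\ell}$, $\Pi_{A_\ell}G(I-\Pi_{B_\ell})$ and $(I-\Pi_{A_\ell})G\Pi_{B_\ell}$. As a result, no signed cross terms appear and every summand is visibly nonnegative.

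The paper's computation is shorter and tailored to rank one. Yours costs an orthogonality check, but the three-block identity holds for arbitrary $G$. Adding the fourth block $(I-\Pi_{A_\ell})G(I-\Pi_{B_\ell})$ gives $\|P_{A_\ell,B_\ell}(G)\|_F^2=\|G\|_F^2-\|(I-\Pi_{A_\ell})G(I-\Pi_{B_\ell})\|_F^2$, in line with Lemma~\ref{lem:tangent_orth}. In the rank-one case this rearranges to $\|u\|_2^2\|v\|_2^2-\|(I-\Pi_{A_\ell})u\|_2^2\,\|(I-\Pi_{B_\ell})v\|_2^2$, which makes $\|\Delta Z_{i,\ell}\|_F\le\|G_{i,\ell}\|_F$ immediate.
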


\begin{proof}
Use $\Delta Z_{i,\ell}=\widehat u v^\top + u\widehat v^\top - \widehat u\widehat v^\top$ and the identities
$\|ab^\top\|_F^2=\|a\|_2^2\|b\|_2^2$ and $\langle ab^\top,cd^\top\rangle=(a^\top c)(b^\top d)$.
Expanding and simplifying yields \eqref{eq:norm_rank1}.
\end{proof}

\subsection{Factorized tangent noise equals a projected dense Gaussian}
\label{app:lowrank}

For convenience, define the \emph{whitened} factors
\begin{equation}
\label{eq:whiten}
\widehat A_{\ell} \triangleq A_{\ell} M^{-1/2},\qquad
\widehat B_{\ell} \triangleq B_{\ell} N^{-1/2},
\end{equation}
so that $\widehat A_{\ell}^\top \widehat A_{\ell}=\widehat B_{\ell}^\top \widehat B_{\ell}=I_r$ and $\projA=\widehat A_{\ell}\widehat A_{\ell}^\top$, $\projB=\widehat B_{\ell}\widehat B_{\ell}^\top$ when $A_{\ell},B_{\ell}$ have full column rank.

\begin{lemma}[Distributional equivalence of the sampler]
\label{lem:lowrank_equiv}
Let $U\in\R^{m\times r}$ and $V\in\R^{n\times r}$ have i.i.d.\ standard normal entries and define $\widehat A_{\ell},\widehat B_{\ell}$ as in \eqref{eq:whiten}.
Then
\[
(\Id-\projA)\,U\,\widehat B_{\ell}^\top + \widehat A_{\ell}\,V^\top
\ \overset{d}{=}\ P_{A_{\ell},B_{\ell}}(\Xi_{\ell}),
\]
where $\Xi_{\ell}$ has i.i.d.\ $\mathcal{N}(0,1)$ entries.
\end{lemma}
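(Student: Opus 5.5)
The plan is to decompose the projected Gaussian into orthogonal blocks using the whitened bases, and then match each block with one term of the sampler. Complete $\widehat A_\ell$ to an orthonormal basis $[\widehat A_\ell,\widehat A_\perp]$ of $\R^m$, and $\widehat B_\ell$ to $[\widehat B_\ell,\widehat B_\perp]$ of $\R^n$. Then $\Id-\projA=\widehat A_\perp\widehat A_\perp^\top$ and $\Id-\projB=\widehat B_\perp\widehat B_\perp^\top$. By Proposition~\ref{prop:tangentproj} we have $P_{A_\ell,B_\ell}(\Xi_\ell)=\Xi_\ell-\projperpA\Xi_\ell\projperpB$, which expands as
\[
P_{A_\ell,B_\ell}(\Xi_\ell)=\widehat A_\ell(\widehat A_\ell^\top\Xi_\ell\widehat B_\ell)\widehat B_\ell^\top+\widehat A_\ell(\widehat A_\ell^\top\Xi_\ell\widehat B_\perp)\widehat B_\perp^\top+\widehat A_\perp(\widehat A_\perp^\top\Xi_\ell\widehat B_\ell)\widehat B_\ell^\top .
\]
The three coefficient blocks $\widehat A_\ell^\top\Xi_\ell\widehat B_\ell$, $\widehat A_\ell^\top\Xi_\ell\widehat B_\perp$ and $\widehat A_\perp^\top\Xi_\ell\widehat B_\ell$ are disjoint sub-blocks of $[\widehat A_\ell,\widehat A_\perp]^\top\Xi_\ell[\widehat B_\ell,\widehat B_\perp]$. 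By rotation invariance of the standard Gaussian matrix, this rotated matrix again has i.i.d.\ $\mathcal N(0,1)$ entries, so the three blocks are independent standard Gaussian matrices of sizes $r\times r$, $r\times(n-r)$ and $(m-r)\times r$.

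Next I would regroup the first two terms as $\widehat A_\ell\bigl(\widehat A_\ell^\top\Xi_\ell[\widehat B_\ell,\widehat B_\perp]\bigr)[\widehat B_\ell,\widehat B_\perp]^\top=\widehat A_\ell W^\top$. Here $W^\top:=\widehat A_\ell^\top\Xi_\ell$ is an $r\times n$ matrix with i.i.d.\ $\mathcal N(0,1)$ entries, since it is a rotation of a standard Gaussian row block. The third term has the form $\widehat A_\perp Y\widehat B_\ell^\top$, where $Y=\widehat A_\perp^\top\Xi_\ell\widehat B_\ell$ is independent of $W$. The reason is that $W$ depends only on $\widehat A_\ell^\top\Xi_\ell$, while $Y$ depends only on $\widehat A_\perp^\top\Xi_\ell$, and these are independent Gaussian row blocks.

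On the sampler side, I would write $(\Id-\projA)U=\widehat A_\perp(\widehat A_\perp^\top U)$. The matrix $\widehat A_\perp^\top U$ is an $(m-r)\times r$ standard Gaussian, and it is independent of $V$. So the sampler equals $\widehat A_\perp Y'\widehat B_\ell^\top+\widehat A_\ell V^\top$, with $Y'\overset{d}{=}Y$, $V\overset{d}{=}W$, and $Y'$ independent of $V$. The two sides are the same deterministic linear map applied to independent pairs with identical joint laws, so they are equal in distribution.

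The only step that needs care is the independence and standardness of the Gaussian blocks after the orthogonal change of basis. I would justify it by the standard fact that $Q_1^\top\Xi Q_2$ has i.i.d.\ $\mathcal N(0,1)$ entries for any orthogonal $Q_1$ and $Q_2$. Everything else is bookkeeping with the projector identities $\projA=\widehat A_\ell\widehat A_\ell^\top$ and $\projB=\widehat B_\ell\widehat B_\ell^\top$.
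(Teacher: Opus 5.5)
Your argument is correct, but it takes a different route from the paper's. The paper vectorizes the sampler and computes its covariance $(\widehat B_\ell\widehat B_\ell^\top\otimes(\Id-\Pi_{A_\ell}))+(\Id\otimes\widehat A_\ell\widehat A_\ell^\top)$. It then checks that this equals the Kronecker covariance $(\Id\otimes\Pi_{A_\ell})+(\Pi_{B_\ell}\otimes\Id)-(\Pi_{B_\ell}\otimes\Pi_{A_\ell})$ of the projected Gaussian (Lemma~\ref{lem:noise_cov}). It concludes because a centered Gaussian is determined by its covariance.

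You instead complete the whitened factors to orthonormal bases. You then write both sides as the same linear map $(Y,W)\mapsto \widehat A_\perp Y\widehat B_\ell^\top+\widehat A_\ell W^\top$ applied to independent standard Gaussian blocks. This identifies $\widehat A_\ell V^\top$ with $\Pi_{A_\ell}\Xi_\ell$, and $(\Id-\Pi_{A_\ell})U\widehat B_\ell^\top$ with $\Pi_{A_\ell}^{\perp}\Xi_\ell\Pi_{B_\ell}$.

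The trade-off is as follows. The covariance check is shorter once the Kronecker formula is in hand, and it is a mechanical test that works for any linear Gaussian sampler. Your decomposition avoids the Kronecker bookkeeping and is more informative. It shows that the sampler is an isometric parametrization of $T_{Z_\ell}\mathcal{M}_r$ by exactly $(m-r)r+nr=r(m+n-r)$ independent standard normals, namely the entries of $\widehat A_\perp^\top U$ and $V$. From this, the dimension count, the energy $r(m+n-r)$, and the $\chi^2_{r(m+n-r)}$ law of Lemma~\ref{lem:chisq} follow at once.

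You also handle the one delicate point correctly. Only $\widehat A_\perp^\top U$ enters the sampler, so its independence from $V$ exactly mirrors the independence of the row blocks $\widehat A_\perp^\top\Xi_\ell$ and $\widehat A_\ell^\top\Xi_\ell$.
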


\begin{proof}
We show that $\vecop(\Delta Z_{\mathrm{noise}})$ is a zero-mean Gaussian with covariance equal to \eqref{eq:cov_kronecker}.
Using the identity $\vecop(XY^\top)=(Y\otimes \Id)\vecop(X)$, we have
\begin{align*}
\vecop\bigl((\Id-\projA)U\widehat B_{\ell}^\top\bigr)
&= (\widehat B_{\ell}\otimes (\Id-\projA))\,\vecop(U),\\
\vecop(\widehat A_{\ell} V^\top)
&= (\Id\otimes \widehat A_{\ell})\,\vecop(V).
\end{align*}
Since $\vecop(U)$ and $\vecop(V)$ are independent standard Gaussians, their images under fixed linear maps are independent Gaussians and the covariances add:
\begin{align*}
\mathrm{Cov}\bigl[\vecop(\Delta Z_{\mathrm{noise}})\bigr]
&= (\widehat B_{\ell}\widehat B_{\ell}^\top \otimes (\Id-\projA))\;+\;(\Id\otimes \widehat A_{\ell}\widehat A_{\ell}^\top).
\end{align*}
Substituting $\widehat B_{\ell}\widehat B_{\ell}^\top=\projB$ and $\widehat A_{\ell}\widehat A_{\ell}^\top=\projA$ yields
\[
\mathrm{Cov}\bigl[\vecop(\Delta Z_{\mathrm{noise}})\bigr]
= (\projB\otimes \Id) - (\projB\otimes \projA) + (\Id\otimes \projA),
\]
which matches \eqref{eq:cov_kronecker}.
Therefore $\Delta Z_{\mathrm{noise}} \overset{d}{=} P_{A_{\ell},B_{\ell}}(\Xi_{\ell})$.
\end{proof}

\subsection{Low-dimensional noise sampler}
\label{app:noise_sampler}

The intrinsic PRISM noise for module $\ell$ is $N_{Z_{\ell}}=\tau\,\mathcal{P}_{A_{\ell},B_{\ell}}(\Xi_{\ell})$ with $\Xi_{\ell}\sim\mathcal{N}(0,I_{m\times n})$ and $\tau=\sigma C/b$ (Theorem~\ref{thm:noise_energy}).
Directly sampling the dense matrix $\Xi_{\ell}$ is unnecessary.

Let $U\in\R^{m\times r}$ and $V\in\R^{n\times r}$ have i.i.d.\ $\mathcal{N}(0,1)$ entries, and define
\[
\Xi_{A,\ell}=(I-\projA)\,U\,N^{-1/2},\qquad \Xi_{B,\ell}=V\,M^{-1/2},
\]
where $M=A_{\ell}^\top A_{\ell}$ and $N=B_{\ell}^\top B_{\ell}$.
Then the intrinsic perturbation induced by these factor noises is
\[
\Xi_{A,\ell}B_{\ell}^\top + A_{\ell}\Xi_{B,\ell}^\top,
\]
which matches the low-dimensional sampler in Eq.~\eqref{eq:noise_form} (with $(U,V)$ corresponding to $(\Omega_{A,\ell},\Omega_{B,\ell})$).
By Lemma~\ref{lem:lowrank_equiv} (Appendix~\ref{app:lowrank}), this intrinsic perturbation is distributed exactly as $\mathcal{P}_{A_{\ell},B_{\ell}}(\Xi_{\ell})$.

Computationally, this reduces random number generation from $O(mn)$ to $O((m+n)r)$ per module.
Stable computation of $\projA,\projB$ and the inverse square roots $M^{-1/2},N^{-1/2}$ is discussed in Appendix~\ref{app:conditioning}.
The resulting intrinsic noise distribution is also invariant to the choice of factorization, as formalized in Appendix~\ref{app:factor_noise_gauge}.

\subsection{Stable computation of projectors and orthonormal bases}
\label{app:conditioning}

For numerical stability, it is often preferable to compute the projectors $\projA,\projB$ and orthonormal bases of $\mathrm{col}(A_{\ell})$ and $\mathrm{col}(B_{\ell})$ without forming Gram-matrix inverses explicitly.
We record simple equivalences.

\begin{lemma}[Projector from a thin QR factorization]
\label{lem:qr_projector}
Assume $A_{\ell}\in\R^{m\times r}$ has full column rank and let $A_{\ell}=Q_AR_A$ be a thin QR factorization with $Q_A^\top Q_A=\Id_r$ and $R_A$ invertible.
Then the orthogonal projector onto $\mathrm{col}(A_{\ell})$ satisfies $\projA=Q_AQ_A^\top$.
An analogous statement holds for $B_{\ell}$.
\end{lemma}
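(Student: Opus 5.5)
The plan is to identify $\mathrm{col}(A_{\ell})$ with $\mathrm{col}(Q_A)$ and then use the elementary fact that $Q_AQ_A^\top$ is the orthogonal projector onto the span of its orthonormal columns. The alternative is to substitute $A_{\ell}=Q_AR_A$ directly into the definition $\projA=A_{\ell}(A_{\ell}^\top A_{\ell})\pinv A_{\ell}^\top$ from \eqref{eq:projectors}; I will take the direct substitution route, since it mirrors the argument of Lemma~\ref{lem:proj_gauge} and makes the link to the paper's definition explicit.

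First, compute the Gram matrix: $A_{\ell}^\top A_{\ell}=R_A^\top Q_A^\top Q_A R_A=R_A^\top R_A$, using $Q_A^\top Q_A=\Id_r$. Since $R_A$ is invertible, $R_A^\top R_A\succ 0$, so the pseudoinverse is the ordinary inverse, $(R_A^\top R_A)^{-1}=R_A^{-1}R_A^{-\top}$. Equivalently, apply Lemma~\ref{lem:pseudoinv_congruence} with $X=\Id_r$ and $R=R_A$.

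Next, substitute into the definition:
\[
\projA=Q_AR_A\,R_A^{-1}R_A^{-\top}\,R_A^\top Q_A^\top=Q_AQ_A^\top .
\]
As a consistency check, $Q_AQ_A^\top$ is symmetric and idempotent because $Q_A^\top Q_A=\Id_r$. Its range is $\mathrm{col}(Q_A)=\mathrm{col}(A_{\ell}R_A^{-1})=\mathrm{col}(A_{\ell})$. Hence it is indeed the orthogonal projector onto $\mathrm{col}(A_{\ell})$, in agreement with the definition.

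The statement for $B_{\ell}$ follows by the identical argument with $B_{\ell}=Q_BR_B$ and $N$ in place of $M$. There is no real obstacle here. The only point requiring care is justifying that the pseudoinverse reduces to the inverse, which uses full column rank through the invertibility of $R_A$.
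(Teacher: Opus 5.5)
Your proposal is correct and follows essentially the same route as the paper: substitute $A_{\ell}=Q_AR_A$ into the definition, use $A_{\ell}^\top A_{\ell}=R_A^\top R_A$ (invertible, so the pseudoinverse is the ordinary inverse), and cancel $R_A(R_A^\top R_A)^{-1}R_A^\top=\Id_r$ to obtain $Q_AQ_A^\top$. Your extra check that $Q_AQ_A^\top$ is symmetric and idempotent with range $\mathrm{col}(A_{\ell})$ is a harmless addition the paper omits.
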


\begin{proof}
Since $A_{\ell}^\top A_{\ell} = R_A^\top R_A$, we have
\[
\projA = A_{\ell}(A_{\ell}^\top A_{\ell})^{-1}A_{\ell}^\top
= Q_AR_A(R_A^\top R_A)^{-1}R_A^\top Q_A^\top
= Q_AQ_A^\top,
\]
because $R_A(R_A^\top R_A)^{-1}R_A^\top=\Id_r$ when $R_A$ is invertible.
\end{proof}

\begin{lemma}[Noise sampling is invariant to the choice of orthonormal bases]
\label{lem:basis_invariant_noise}
Let $Q_A,Q'_A\in\R^{m\times r}$ be orthonormal bases of the same subspace $\mathrm{col}(A_{\ell})$ and let $Q_B,Q'_B\in\R^{n\times r}$ be orthonormal bases of $\mathrm{col}(B_{\ell})$.
Define $\projA=Q_AQ_A^\top=Q'_AQ_A'^\top$ and $\projB=Q_BQ_B^\top=Q'_BQ_B'^\top$.
If $U\in\R^{m\times r}$ and $V\in\R^{n\times r}$ have i.i.d.\ $\mathcal{N}(0,1)$ entries, then
\[
(\Id-\projA)\,U\,Q_B^\top + Q_A V^\top
\ \overset{d}{=}\ 
(\Id-\projA)\,U\,Q_B'^\top + Q'_A V^\top.
\]
\end{lemma}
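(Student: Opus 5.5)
Both sides are linear images of the independent standard Gaussian pair $(\vecop(U),\vecop(V))$, hence zero-mean Gaussian matrices. A zero-mean Gaussian law is determined by its covariance, so the plan is to show that the two sides have the same covariance of $\vecop(\cdot)$. We follow the computation in the proof of Lemma~\ref{lem:lowrank_equiv}.

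Using $\vecop(XY^\top)=(Y\otimes\Id)\vecop(X)$, the left-hand side has vectorization
\[
(Q_B\otimes(\Id-\projA))\vecop(U)+(\Id\otimes Q_A)\vecop(V).
\]
By independence of $U$ and $V$ the covariances add. Applying the mixed-product rule $(X\otimes Y)(X'\otimes Y')^\top=XX'^\top\otimes YY'^\top$ together with idempotence and symmetry of $\Id-\projA$, the covariance is
\[
(Q_BQ_B^\top\otimes(\Id-\projA))+(\Id\otimes Q_AQ_A^\top)=(\projB\otimes(\Id-\projA))+(\Id\otimes\projA).
\]
The identical computation for the primed bases gives $(Q_B'Q_B'^\top\otimes(\Id-\projA))+(\Id\otimes Q_A'Q_A'^\top)$. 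Since $Q_BQ_B^\top=Q_B'Q_B'^\top=\projB$ and $Q_AQ_A^\top=Q_A'Q_A'^\top=\projA$ by hypothesis, the two covariances coincide, and so do the laws.

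A more direct alternative avoids Kronecker products altogether. Because $Q_A,Q_A'$ are orthonormal bases of the same subspace, $Q_A'=Q_AO_A$ with $O_A=Q_A^\top Q_A'$ orthogonal, and likewise $Q_B'=Q_BO_B$. Then
\[
UQ_B'^\top=(UO_B^\top)Q_B^\top,\qquad Q_A'V^\top=Q_A(VO_A^\top)^\top,
\]
and by rotation invariance of the standard Gaussian, $(UO_B^\top,VO_A^\top)\overset{d}{=}(U,V)$ jointly, since the two pieces are independent. Substituting shows the two sides are equal in distribution.

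The only mild subtlety is verifying that $O_A$ is orthogonal. This follows because $Q_AQ_A^\top Q_A'=\projA Q_A'=Q_A'$, so $O_A^\top O_A=Q_A'^\top\projA Q_A'=\Id_r$. We expect nothing in the argument to be a genuine obstacle.
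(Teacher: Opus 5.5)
Your proposal is correct. Your second, ``more direct'' argument is exactly the paper's proof: write $Q_A'=Q_AO_A$ and $Q_B'=Q_BO_B$ with $O_A,O_B$ orthogonal, then invoke rotation invariance of the Gaussian factors.

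On that argument you are slightly more careful than the paper in three places:
\begin{itemize}
\item The paper writes $O_AV\overset{d}{=}V$, which is not well-typed since $V$ is $n\times r$; your $VO_A^\top$ is the right object.
\item You make explicit that $(UO_B^\top,VO_A^\top)\overset{d}{=}(U,V)$ holds jointly by independence, which the paper leaves implicit.
\item You verify that $O_A=Q_A^\top Q_A'$ is actually orthogonal.
\end{itemize}

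Your primary route, covariance matching via Kronecker products, is a genuinely different proof of this lemma. It is, however, the same computation the paper uses for Lemma~\ref{lem:lowrank_equiv}. What it buys is an explicit identification of the common law. The covariance $\Pi_B\otimes(\Id-\Pi_A)+\Id\otimes\Pi_A$ is that of the projected Gaussian $\mathcal{P}_{A_\ell,B_\ell}(\Xi_\ell)$, so basis invariance becomes a corollary of the sampler-equivalence lemma. The rotation argument, by contrast, is more elementary and never needs to know what the law is.

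There is one cosmetic slip in the covariance route. The identity $\vecop(XY^\top)=(Y\otimes\Id)\vecop(X)$ does not produce the second term. The correct vectorization is $\vecop(Q_AV^\top)=(\Id\otimes Q_A)\vecop(V^\top)$. This is harmless: $\vecop(V^\top)$ is a coordinate permutation of $\vecop(V)$, so it is still standard Gaussian and independent of $\vecop(U)$, and the covariance is unchanged. The paper's proof of Lemma~\ref{lem:lowrank_equiv} contains the same slip.
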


\begin{proof}
Because $Q_A$ and $Q'_A$ are orthonormal bases of the same subspace, there exists an orthogonal matrix $O_A\in\R^{r\times r}$ such that $Q'_A=Q_AO_A$.
Similarly, $Q'_B=Q_BO_B$ for some orthogonal $O_B$.
Then
\[
(\Id-\projA)UQ_B'^\top + Q'_A V^\top
= (\Id-\projA)UO_B^\top Q_B^\top + Q_AO_A V^\top.
\]
Since $U$ and $V$ are i.i.d.\ standard Gaussian matrices, $UO_B^\top \overset{d}{=} U$ and $O_A V \overset{d}{=} V$.
The claim follows.
\end{proof}

\paragraph{Regularization.}
When Gram matrices are nearly singular, one can obtain stable orthonormal bases via QR/SVD and use Lemma~\ref{lem:qr_projector} to form $\projA,\projB$.
If one instead regularizes Gram inverses directly (e.g., spectral truncation or damping), the resulting operators are still deterministic functions of $(A_{\ell},B_{\ell})$ and therefore DP-safe by post-processing, but may not preserve \emph{exact} gauge invariance unless the regularization is itself defined in a gauge-consistent way.

\subsection{Gauge invariance of the factorized noise sampler}
\label{app:factor_noise_gauge}

We recall the factorized sampler used to generate the tangent noise lifts in Eq.~\eqref{eq:dpnoise}:
\begin{equation}
\label{eq:factor_noise}
\Delta A_{\mathrm{noise}}=(I-\projA)\,U\,N^{-1/2},
\qquad
\Delta B_{\mathrm{noise}}=V\,M^{-1/2},
\end{equation}
where $U\in\R^{m\times r}$ and $V\in\R^{n\times r}$ have i.i.d.\ $\mathcal{N}(0,1)$ entries, and $M=A_{\ell}^\top A_{\ell}$, $N=B_{\ell}^\top B_{\ell}$.
The induced intrinsic perturbation is $\Delta Z_{\mathrm{noise}}=\Delta A_{\mathrm{noise}}B_{\ell}^\top + A_{\ell}\Delta B_{\mathrm{noise}}^\top$.
\label{app:sampler_gauge}

The intrinsic mechanism \eqref{eq:dpnoise} is gauge invariant by construction.
Here we make explicit that the \emph{implementation-level} sampler \eqref{eq:factor_noise} inherits the same invariance: regardless of which factorization of $Z_{\ell}$ is used internally, the induced distribution on the intrinsic noise $\Delta Z_{\mathrm{noise}}$ is unchanged.

\begin{proposition}[Sampler invariance under gauge transforms]
\label{prop:sampler_gauge}
Let $(A'_{\ell},B'_{\ell})=(A_{\ell}R,B_{\ell}R^{-\top})$ for some $R\in\gl(r)$.
Construct $\Delta A_{\mathrm{noise}}$ and $\Delta B_{\mathrm{noise}}$ from $(A_{\ell},B_{\ell})$ via \eqref{eq:factor_noise}, and construct $\Delta A'_{\ell,\mathrm{noise}}$ and $\Delta B'_{\ell,\mathrm{noise}}$ from $(A'_{\ell},B'_{\ell})$ via the same formula (with the corresponding projectors and Gram matrices).
Then the induced intrinsic noises
\[
\Delta Z_{\mathrm{noise}}
= \Delta A_{\mathrm{noise}} B_{\ell}^\top + A_{\ell} (\Delta B_{\mathrm{noise}})^\top,
\qquad
\Delta Z'_{\mathrm{noise}}
= \Delta A'_{\ell,\mathrm{noise}} (B'_{\ell})^\top + A'_{\ell} (\Delta B'_{\ell,\mathrm{noise}})^\top.
\]
have the same distribution.
\end{proposition}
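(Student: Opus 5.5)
The most direct route goes through Lemma~\ref{lem:lowrank_equiv}, which identifies both intrinsic noises with the same target law $P_{A_{\ell},B_{\ell}}(\Xi_{\ell})$. First I will rewrite the sampler \eqref{eq:factor_noise} in whitened form. Since $N^{-1/2}B_{\ell}^\top=\widehat B_{\ell}^\top$ and $A_{\ell}M^{-1/2}=\widehat A_{\ell}$ (using that $M^{-1/2}$ is symmetric), we get
\[
\Delta Z_{\mathrm{noise}}=(\Id-\projA)U\widehat B_{\ell}^\top+\widehat A_{\ell}V^\top .
\]
Lemma~\ref{lem:lowrank_equiv} then gives $\Delta Z_{\mathrm{noise}}\overset{d}{=}P_{A_{\ell},B_{\ell}}(\Xi_{\ell})$. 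The same computation for $(A'_{\ell},B'_{\ell})$, with $M'=R^\top MR$ and $N'=R^{-1}NR^{-\top}$, gives $\Delta Z'_{\mathrm{noise}}\overset{d}{=}P_{A'_{\ell},B'_{\ell}}(\Xi_{\ell})$.

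Second, I will show that the two target laws coincide. By \eqref{eq:tangent_proj}, $P_{A,B}$ depends on $(A,B)$ only through $\Pi_A$ and $\Pi_B$. Lemma~\ref{lem:proj_gauge} gives $\Pi_{A'_{\ell}}=\projA$ and $\Pi_{B'_{\ell}}=\projB$, so $P_{A'_{\ell},B'_{\ell}}=P_{A_{\ell},B_{\ell}}$ as linear maps. The two noises are therefore images of the same Gaussian $\Xi_{\ell}$ under the same map, which proves the claim.

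As a cross-check that avoids Lemma~\ref{lem:lowrank_equiv}, I would compare the whitened factors directly. The matrix $\widehat A'_{\ell}=A_{\ell}R(R^\top MR)^{-1/2}$ has orthonormal columns and spans $\mathrm{col}(A_{\ell})$, so $\widehat A'_{\ell}=\widehat A_{\ell}O_A$ with $O_A=M^{1/2}R(R^\top MR)^{-1/2}$. One checks $O_A^\top O_A=\Id$, so $O_A$ is orthogonal; similarly $\widehat B'_{\ell}=\widehat B_{\ell}O_B$. Lemma~\ref{lem:basis_invariant_noise} then applies verbatim, using rotation invariance of $U$ and $V$.

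The main obstacle is the whitening step. It needs $A_{\ell}$ and $B_{\ell}$ to have full column rank, so that $M^{-1/2}$ and $N^{-1/2}$ exist, and it needs the symmetric square root so that $\widehat A_{\ell}^\top\widehat A_{\ell}=\Id$. Both properties are preserved under $R\in\gl(r)$, since $R^\top MR\succ0$. Once those hold, everything else is bookkeeping.
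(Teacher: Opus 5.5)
Your proof is correct and matches the paper's argument: Lemma~\ref{lem:lowrank_equiv} identifies each sampler's output in law with the projected dense Gaussian, and Lemma~\ref{lem:proj_gauge} gives $P_{A'_{\ell},B'_{\ell}}=P_{A_{\ell},B_{\ell}}$. Your explicit whitening step (which the paper leaves implicit) and your orthogonal change-of-basis cross-check via Lemma~\ref{lem:basis_invariant_noise} are both valid.
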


\begin{proof}
By Lemma~\ref{lem:lowrank_equiv}, both $\Delta Z_{\mathrm{noise}}$ and $\Delta Z'_{\mathrm{noise}}$ are distributed as $P_{A_{\ell},B_{\ell}}(\Xi_{\ell})$ and $P_{A',B'}(\Xi_{\ell})$, respectively, for a dense standard Gaussian $\Xi_{\ell}$.
By Lemma~\ref{lem:proj_gauge}, the subspace projectors are gauge invariant and thus $P_{A',B'}=P_{A_{\ell},B_{\ell}}$.
Therefore $P_{A',B'}(\Xi_{\ell})$ and $P_{A_{\ell},B_{\ell}}(\Xi_{\ell})$ have identical distributions.
\end{proof}

\paragraph{Contrast with naive factor noise.}
If one instead adds i.i.d.\ Gaussian noise directly to $\Delta A_{\ell}$ and $\Delta B_{\ell}$ without the whitening and projection factors in \eqref{eq:factor_noise}, the induced intrinsic perturbation depends on the chosen gauge through $\|A_{\ell}\|_F$ and $\|B_{\ell}\|_F$ (Proposition~\ref{prop:amp}).
The role of $M^{-1/2}$ and $N^{-1/2}$ in \eqref{eq:factor_noise} is precisely to compensate for this coordinate dependence and yield an isotropic Gaussian in the intrinsic tangent space.

\subsection{Proof of Theorem~\ref{thm:noise_energy}}
\label{app:proof_noise_energy}

\begin{proof}[Proof of Theorem~\ref{thm:noise_energy}]
Let $\Xi_{\ell}\sim\mathcal{N}(0,I_{m\times n})$ be a dense standard Gaussian and write
$N_{Z_{\ell}}=\tau\,\mathcal{P}_{A_{\ell},B_{\ell}}(\Xi_{\ell})$ with $\tau=\sigma C/b$.

\textbf{Gaussianity and support.}
Vectorizing gives $\vecop(\Xi_{\ell})\sim\mathcal{N}(0,I_{mn})$ and
$\vecop(\mathcal{P}_{A_{\ell},B_{\ell}}(\Xi_{\ell}))=\mathbf{P}\,\vecop(\Xi_{\ell})$,
where $\mathbf{P}$ is the matrix representation of the orthogonal projector $\mathcal{P}_{A_{\ell},B_{\ell}}$ under $\vecop(\cdot)$.
Since $\mathbf{P}$ is linear, symmetric, and idempotent, $\mathbf{P}\,\vecop(\Xi_{\ell})$ is Gaussian with covariance $\mathbf{P}$ and is supported on $\mathrm{range}(\mathbf{P})$,
which corresponds to the tangent subspace $T_{Z_{\ell}}\calM_r$ (Eq.~\eqref{eq:tangent_proj}).

\textbf{Isotropy on the tangent space.}
Because the covariance equals the orthogonal projector onto $T_{Z_{\ell}}\calM_r$, the distribution is isotropic within that subspace; a self-contained verification is given in Lemma~\ref{lem:isotropy} (Appendix~\ref{app:isotropy}).

\textbf{Expected energy.}
Using $\|X\|_F^2=\|\vecop(X)\|_2^2$ and $\mathbb{E}\|G\|_2^2=\mathrm{tr}(\mathrm{Cov}[G])$ for a zero-mean Gaussian vector $G$, we obtain
\[
\mathbb{E}\bigl\|\mathcal{P}_{A_{\ell},B_{\ell}}(\Xi_{\ell})\bigr\|_F^2
=\mathbb{E}\bigl\|\mathbf{P}\vecop(\Xi_{\ell})\bigr\|_2^2
=\mathrm{tr}(\mathbf{P})
=\rank(\mathbf{P})
=\dim(T_{Z_{\ell}}\calM_r)
=r(m+n-r),
\]
which is Eq.~\eqref{eq:energy_gila}.
Multiplying by $\tau^2$ yields $\mathbb{E}\|N_{Z_{\ell}}\|_F^2=\tau^2 r(m+n-r)$ and thus Eq.~\eqref{eq:eff_prism}.

\textbf{Gauge invariance.}
Finally, $\mathcal{P}_{A_{\ell},B_{\ell}}$ depends only on $\projA$ and $\projB$ (Eq.~\eqref{eq:tangent_proj}), and these projectors are invariant under the gauge transform $(A_{\ell},B_{\ell})\mapsto(A_{\ell}R,B_{\ell}R^{-\top})$ by Lemma~\ref{lem:proj_gauge} (Appendix~\ref{app:proj}).
\end{proof}

\subsection{Isotropy of the projected Gaussian on the tangent space}
\label{app:isotropy}

\begin{lemma}[Isotropy within $T_{Z_{\ell}}\calM_r$]
\label{lem:isotropy}
Let $\Xi_{\ell}\in\R^{m\times n}$ have i.i.d.\ $\mathcal{N}(0,1)$ entries and let $P_{A_{\ell},B_{\ell}}$ be the orthogonal projector onto $T_{Z_{\ell}}\calM_r$.
For any $U,V\in T_{Z_{\ell}}\calM_r$,
\begin{equation}
\E\bigl[\langle U, P_{A_{\ell},B_{\ell}}(\Xi_{\ell})\rangle \,\langle V, P_{A_{\ell},B_{\ell}}(\Xi_{\ell})\rangle\bigr] \;=\; \langle U,V\rangle.
\label{eq:isotropy}
\end{equation}
Equivalently, $P_{A_{\ell},B_{\ell}}(\Xi_{\ell})$ is an isotropic Gaussian in the tangent space under the Frobenius inner product.
\end{lemma}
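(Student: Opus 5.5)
The plan is to reduce the identity to the defining properties of the orthogonal projector established in Proposition~\ref{prop:tangentproj}, namely that $P_{A_{\ell},B_{\ell}}$ is symmetric (self-adjoint under the Frobenius inner product) and idempotent. These properties let us move the projector off the Gaussian and onto the fixed test matrices $U,V$, where it acts as the identity.

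\textbf{Step 1 (self-adjointness).} For $U\in T_{Z_{\ell}}\calM_r$ we have $P_{A_{\ell},B_{\ell}}(U)=U$ by idempotence and the fact that the range of $P_{A_{\ell},B_{\ell}}$ is the tangent space. Symmetry then gives
\[
\langle U,P_{A_{\ell},B_{\ell}}(\Xi_{\ell})\rangle=\langle P_{A_{\ell},B_{\ell}}(U),\Xi_{\ell}\rangle=\langle U,\Xi_{\ell}\rangle ,
\]
and similarly $\langle V,P_{A_{\ell},B_{\ell}}(\Xi_{\ell})\rangle=\langle V,\Xi_{\ell}\rangle$.

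\textbf{Step 2 (dense Gaussian covariance).} Write $\langle U,\Xi_{\ell}\rangle=\sum_{jk}U_{jk}\Xi_{jk}$. Since the entries of $\Xi_{\ell}$ are i.i.d.\ standard normal, $\E[\Xi_{jk}\Xi_{j'k'}]=\mathbf{1}[(j,k)=(j',k')]$. Hence
\[
\E\bigl[\langle U,\Xi_{\ell}\rangle\langle V,\Xi_{\ell}\rangle\bigr]=\sum_{jk}U_{jk}V_{jk}=\langle U,V\rangle ,
\]
which yields \eqref{eq:isotropy}.

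\textbf{Step 3 (the ``equivalently'' clause).} $P_{A_{\ell},B_{\ell}}(\Xi_{\ell})$ is a linear image of a Gaussian, so it is a centered Gaussian supported on $T_{Z_{\ell}}\calM_r$. Identity \eqref{eq:isotropy} states that its covariance form restricted to the tangent space is the Frobenius inner product. Equivalently, in any orthonormal basis of $T_{Z_{\ell}}\calM_r$ the coordinates are i.i.d.\ $\mathcal{N}(0,1)$, i.e.\ the distribution is isotropic.

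The only point requiring care is Step~1. It relies on the symmetry of $P_{A_{\ell},B_{\ell}}$. If I did not want to cite Proposition~\ref{prop:tangentproj}, I would verify symmetry directly from \eqref{eq:tangent_proj}: $\langle X,\Pi_A Y\rangle=\langle \Pi_A X,Y\rangle$ and $\langle X,Y\Pi_B\rangle=\langle X\Pi_B,Y\rangle$, since $\Pi_A$ and $\Pi_B$ are symmetric.
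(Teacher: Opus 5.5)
Your proof is correct and matches the paper's argument: you use self-adjointness of the orthogonal projector together with $P_{A_\ell,B_\ell}(U)=U$ on the tangent space to reduce the claim to $\E[\langle U,\Xi_\ell\rangle\langle V,\Xi_\ell\rangle]=\langle U,V\rangle$ for the dense Gaussian. Your Step~3 goes a little further than the paper's proof of this lemma by spelling out the ``equivalently'' clause (i.i.d.\ standard normal coordinates in any orthonormal basis of $T_{Z_\ell}\calM_r$); the paper instead uses that fact later, in its chi-square lemma.
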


\begin{proof}
Because $P_{A_{\ell},B_{\ell}}$ is an orthogonal projector, it is self-adjoint: $\langle U,P_{A_{\ell},B_{\ell}}(X)\rangle=\langle P_{A_{\ell},B_{\ell}}(U),X\rangle$ for all $U,X$.
For $U\in T_{Z_{\ell}}\calM_r$, $P_{A_{\ell},B_{\ell}}(U)=U$.
Therefore $\langle U,P_{A_{\ell},B_{\ell}}(\Xi_{\ell})\rangle=\langle U,\Xi_{\ell}\rangle$ and similarly for $V$.
Since $\Xi_{\ell}$ has i.i.d.\ standard normal entries, $\langle U,\Xi_{\ell}\rangle$ is a centered Gaussian with variance $\|U\|_F^2$, and
\[
\E[\langle U,\Xi_{\ell}\rangle\langle V,\Xi_{\ell}\rangle]=\langle U,V\rangle.
\]
\end{proof}

\subsection{Projected Gaussian covariance and intrinsic dimension}
\label{app:noise_cov}

\begin{lemma}[Covariance of a projected dense Gaussian]
\label{lem:noise_cov}
Let $\Xi_{\ell}\in\R^{m\times n}$ have i.i.d.\ $\mathcal{N}(0,1)$ entries and let $Z_{\ell}=A_{\ell}B_{\ell}^\top\in\mathcal{M}_r$.
Then $P_{A_{\ell},B_{\ell}}(\Xi_{\ell})$ is a centered Gaussian supported on $T_{Z_{\ell}}\mathcal{M}_r$ with vectorized covariance
\begin{equation}
\label{eq:cov_kronecker}
\mathrm{Cov}\!\left[\vecop\!\bigl(P_{A_{\ell},B_{\ell}}(\Xi_{\ell})\bigr)\right]
=
(\Id_n\otimes\projA)+(\projB\otimes\Id_m)-(\projB\otimes\projA).
\end{equation}
If $A_{\ell}$ and $B_{\ell}$ have full column rank, the covariance operator in Eq.~\eqref{eq:cov_kronecker} is an orthogonal projector of rank $r(m+n-r)$, which implies $\E\|P_{A_{\ell},B_{\ell}}(\Xi_{\ell})\|_F^2=r(m+n-r)$ (Eq.~\eqref{eq:energy_gila}).
\end{lemma}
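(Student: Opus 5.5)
The plan is to vectorize the projector and read off its matrix directly. Using $\vecop(XGY)=(Y^\top\otimes X)\vecop(G)$ together with the symmetry of $\projA$ and $\projB$, the three terms of Eq.~\eqref{eq:tangent_proj} map as follows:
\[
\vecop(\projA G)=(\Id_n\otimes\projA)\vecop(G),\qquad
\vecop(G\projB)=(\projB\otimes\Id_m)\vecop(G),\qquad
\vecop(\projA G\projB)=(\projB\otimes\projA)\vecop(G).
\]
So $\vecop(P_{A_{\ell},B_{\ell}}(\Xi_{\ell}))=\mathbf{P}\vecop(\Xi_{\ell})$ with
\[
\mathbf{P}=(\Id_n\otimes\projA)+(\projB\otimes\Id_m)-(\projB\otimes\projA).
\]
Since $\vecop(\Xi_{\ell})\sim\mathcal{N}(0,I_{mn})$, the image is a centered Gaussian with covariance $\mathbf{P}\mathbf{P}^\top$. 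It remains to show that $\mathbf{P}$ is symmetric and idempotent, which gives covariance exactly $\mathbf{P}$ and hence Eq.~\eqref{eq:cov_kronecker}.

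\textbf{Symmetry.} Each Kronecker factor is symmetric, so $\mathbf{P}$ is symmetric.

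\textbf{Idempotence.} Cleanest is the complementary form $\mathbf{P}=\Id_{mn}-(\projperpB\otimes\projperpA)$, obtained by expanding $(\Id-\projB)\otimes(\Id-\projA)$. The Kronecker product of two orthogonal projectors is an orthogonal projector, by the mixed-product rule, so its complement is one too. This is just the vectorized form of Proposition~\ref{prop:tangentproj}. Support on $T_{Z_{\ell}}\calM_r$ then follows because $\mathrm{range}(\mathbf{P})$ is the vectorized tangent space, by Lemma~\ref{lem:tangent_orth}.

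\textbf{Rank and energy.} For an orthogonal projector, rank equals trace, and $\tr(X\otimes Y)=\tr X\,\tr Y$. Under full column rank, $\tr\projA=\tr\projB=r$, so
\[
\tr\mathbf{P}=nr+rm-r^2=r(m+n-r).
\]
Then $\E\|P_{A_{\ell},B_{\ell}}(\Xi_{\ell})\|_F^2=\tr(\mathrm{Cov})=r(m+n-r)$.

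The only step needing care is the vec–Kronecker ordering convention. With column-stacking, right multiplication by $\projB$ gives $\projB^\top\otimes\Id_m$, and symmetry of $\projB$ removes the transpose. Apart from that, everything is routine bookkeeping.
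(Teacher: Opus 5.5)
Your proof is correct and follows the same route as the paper: you vectorize the three terms of the projector with the column-stacking Kronecker identities, read off $\mathbf{P}=(\Id_n\otimes\projA)+(\projB\otimes\Id_m)-(\projB\otimes\projA)$, and take the trace to get the energy. You are slightly more complete than the paper, which silently identifies the covariance $\mathbf{P}\mathbf{P}^\top$ with $\mathbf{P}$ and cites Edelman et al.\ for the projector and rank claims. Your complement form $\mathbf{P}=\Id_{mn}-(\projperpB\otimes\projperpA)$ and the computation $\tr\mathbf{P}=nr+mr-r^2$ verify both claims directly.
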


\begin{proof}[Proof of Lemma~\ref{lem:noise_cov}]
Write $P(\Xi_{\ell})=\projA\Xi_{\ell}+\Xi_{\ell}\projB-\projA\Xi_{\ell}\projB$ and apply vectorization:
$\vecop(\projA\Xi_{\ell})=(\Id\otimes \projA)\vecop(\Xi_{\ell})$,
$\vecop(\Xi_{\ell}\projB)=(\projB\otimes \Id)\vecop(\Xi_{\ell})$,
$\vecop(\projA\Xi_{\ell}\projB)=(\projB\otimes \projA)\vecop(\Xi_{\ell})$.
Thus
\[
\vecop(P(\Xi_{\ell}))=
\bigl(\Id\otimes\projA + \projB\otimes\Id - \projB\otimes\projA\bigr)\vecop(\Xi_{\ell}).
\]
Since $\vecop(\Xi_{\ell})\sim \mathcal{N}(0,\Id)$, the covariance is \eqref{eq:cov_kronecker}.
When $A_{\ell},B_{\ell}$ are full column rank, this covariance is an orthogonal projector with rank $r(m+n-r)$ \cite{edelman1998geometry}.
The expected squared norm equals the trace, giving \eqref{eq:energy_gila}.
\end{proof}

\subsection{Concentration of the effective intrinsic noise in PRISM}
\label{app:noise_concentration}

Because $P_{A_{\ell},B_{\ell}}(\Xi_{\ell})$ is an isotropic Gaussian in the tangent space (Lemma~\ref{lem:isotropy}), its Frobenius norm concentrates sharply.
This provides high-probability control beyond the expectation in \eqref{eq:energy_gila}.

\begin{lemma}[Chi-square form]
\label{lem:chisq}
Assume $A_{\ell}$ and $B_{\ell}$ are full column rank and let $d\triangleq \dim(T_{Z_{\ell}}\calM_r)=r(m+n-r)$.
Let $\Xi_{\ell}\in\R^{m\times n}$ have i.i.d.\ $\mathcal{N}(0,1)$ entries and set $G=P_{A_{\ell},B_{\ell}}(\Xi_{\ell})$.
Then $\|G\|_F^2$ has a chi-square distribution with $d$ degrees of freedom:
\begin{equation}
\|G\|_F^2 \ \sim\ \chi^2_d.
\label{eq:chisq}
\end{equation}
\end{lemma}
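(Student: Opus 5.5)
The plan is to reduce the statement to the elementary fact that an orthogonal projection of a standard Gaussian vector onto a $d$-dimensional subspace has squared norm distributed as $\chi^2_d$.

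First, vectorize. As in the proof of Theorem~\ref{thm:noise_energy}, we have $\vecop(G)=\mathbf{P}\,\vecop(\Xi_{\ell})$, where $\mathbf{P}=(\Id_n\otimes\projA)+(\projB\otimes\Id_m)-(\projB\otimes\projA)$. By Proposition~\ref{prop:tangentproj} and Lemma~\ref{lem:noise_cov}, $\mathbf{P}$ is symmetric and idempotent. Its range is $\vecop(T_{Z_{\ell}}\calM_r)$, and its rank is $d=r(m+n-r)$.

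Next, diagonalize the projector. Choose an orthogonal matrix $Q\in\R^{mn\times mn}$ whose first $d$ columns form an orthonormal basis of $\mathrm{range}(\mathbf{P})$, so that $\mathbf{P}=Q\,\mathrm{diag}(I_d,0)\,Q^\top$. Set $w=Q^\top\vecop(\Xi_{\ell})$. By rotational invariance of the standard Gaussian, $w\sim\mathcal{N}(0,I_{mn})$. Then
\[
\|G\|_F^2=\|\mathbf{P}\vecop(\Xi_{\ell})\|_2^2=\|\mathrm{diag}(I_d,0)\,w\|_2^2=\sum_{k=1}^d w_k^2,
\]
which is a sum of $d$ independent squared standard normals, i.e.\ $\chi^2_d$.

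The only step that is not routine is the rank identity $\rank(\mathbf{P})=r(m+n-r)$. The trace is the cleanest way to establish it. We have $\tr(\Id_n\otimes\projA)=nr$ and $\tr(\projB\otimes\Id_m)=rm$, using $\tr\projA=\tr\projB=r$ under full column rank. We also have $\tr(\projB\otimes\projA)=r^2$. Because $\mathbf{P}$ is a projector, its rank equals its trace, giving $nr+mr-r^2$. This is consistent with the dimension count in Lemma~\ref{lem:tangent_char}.
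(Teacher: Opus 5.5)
Your proof is correct and is essentially the paper's argument. The paper expands $G=\sum_{k=1}^d\langle \Xi_{\ell},E_k\rangle E_k$ in an orthonormal basis $\{E_k\}$ of $T_{Z_{\ell}}\calM_r$ and notes that the coefficients are i.i.d.\ $\mathcal{N}(0,1)$; completing that basis to an orthogonal $Q$ and invoking rotational invariance, as you do, delivers exactly this. Your trace computation $\rank(\mathbf{P})=nr+mr-r^2$ is a self-contained addition: the paper simply takes $d=\dim(T_{Z_{\ell}}\calM_r)=r(m+n-r)$ as given in the hypothesis and cites the rank elsewhere.
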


\begin{proof}
Let $\{E_1,\ldots,E_d\}$ be any orthonormal basis of $T_{Z_{\ell}}\calM_r$ under $\langle\cdot,\cdot\rangle$.
Since $P_{A_{\ell},B_{\ell}}$ is the orthogonal projector onto $T_{Z_{\ell}}\calM_r$, we may write
$G=\sum_{k=1}^d \langle \Xi_{\ell},E_k\rangle E_k$.
By orthonormality and independence of Gaussian linear functionals,
the coefficients $\{\langle \Xi_{\ell},E_k\rangle\}_{k=1}^d$ are i.i.d.\ $\mathcal{N}(0,1)$.
Therefore $\|G\|_F^2=\sum_{k=1}^d \langle \Xi_{\ell},E_k\rangle^2$ is chi-square with $d$ degrees of freedom.
\end{proof}

\begin{proposition}[High-probability bound for PRISM noise]
\label{prop:noise_tail}
Let $\mathcal{N}_{Z_{\ell}}=\frac{\sigma C}{b}P_{A_{\ell},B_{\ell}}(\Xi_{\ell})$ be the intrinsic Gaussian perturbation in DP-PRISM.
Let $d=r(m+n-r)$.
Then for any $\delta\in(0,1)$,
\begin{equation}
\Pr\!\left(
\|\mathcal{N}_{Z_{\ell}}\|_F
\;\le\;
\frac{\sigma C}{b}\sqrt{d + 2\sqrt{d\log(1/\delta)} + 2\log(1/\delta)}
\right)
\ \ge\ 1-\delta.
\label{eq:tail}
\end{equation}
\end{proposition}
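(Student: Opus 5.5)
The plan is to reduce the claim to a standard chi-square upper tail bound via Lemma~\ref{lem:chisq}. Set $\tau=\sigma C/b$, so that $\|\mathcal{N}_{Z_{\ell}}\|_F=\tau\|G\|_F$ with $G=P_{A_{\ell},B_{\ell}}(\Xi_{\ell})$. By Lemma~\ref{lem:chisq}, $\|G\|_F^2\sim\chi^2_d$ with $d=r(m+n-r)$; this uses the full-column-rank assumption, which makes the tangent space have exactly this dimension. Taking square roots (both sides nonnegative) and multiplying by $\tau>0$, the event in \eqref{eq:tail} is the same as
\[
\|G\|_F^2\le d+2\sqrt{d\,x}+2x,\qquad x\triangleq\log(1/\delta).
\]
So it suffices to show that this event has probability at least $1-\delta$.

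For this I would invoke the Laurent--Massart inequality. For $Q\sim\chi^2_d$ and any $x>0$,
\[
\Pr\bigl(Q-d\ge 2\sqrt{dx}+2x\bigr)\le e^{-x}.
\]
With $x=\log(1/\delta)$, which is positive for $\delta\in(0,1)$, the right-hand side is exactly $\delta$, and taking complements gives the claim. If a self-contained argument is wanted, I would reproduce the Chernoff derivation instead. Write $\log\E e^{\lambda(Q-d)}=-\tfrac d2\log(1-2\lambda)-\lambda d$ for $\lambda<1/2$. Bound this by $d\lambda^2/(1-2\lambda)$ using $-\log(1-u)-u\le u^2/(2(1-u))$. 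Then optimize over $\lambda$: the sub-gamma tail with variance factor $2d$ and scale $2$ yields the threshold $2\sqrt{dx}+2x$.

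The only potentially delicate step is matching the constants in the chi-square tail so that the deviation term is exactly $2\sqrt{d\log(1/\delta)}+2\log(1/\delta)$. Citing Laurent--Massart handles this directly, so I expect no real obstacle. The same inequality is what Lemma~\ref{lem:bilinear_tail} refers to as ``the chi-square tail bound used in Proposition~\ref{prop:noise_tail}'', which keeps the two results consistent.
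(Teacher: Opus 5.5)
Your proposal is correct and follows the paper's proof: both use Lemma~\ref{lem:chisq} to write $\|\mathcal{N}_{Z_{\ell}}\|_F^2=(\sigma C/b)^2X$ with $X\sim\chi^2_d$, then apply the Laurent--Massart bound $\Pr(X-d\ge 2\sqrt{dt}+2t)\le e^{-t}$ with $t=\log(1/\delta)$. Your optional Chernoff/sub-gamma derivation has the constants right (variance factor $2d$, scale $2$). You also correctly note that the full-column-rank assumption, inherited from Lemma~\ref{lem:chisq} but not restated in the proposition, is what makes $d=r(m+n-r)$.
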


\begin{proof}
By Lemma~\ref{lem:chisq}, $\|\mathcal{N}_{Z_{\ell}}\|_F^2 = (\sigma C/b)^2 X$ where $X\sim\chi^2_d$.
A standard chi-square concentration inequality gives
$\Pr(X-d \ge 2\sqrt{d t}+2t)\le e^{-t}$ for all $t\ge 0$.
Setting $t=\log(1/\delta)$ yields \eqref{eq:tail}.
\end{proof}

\paragraph{Remark.}
Proposition~\ref{prop:noise_tail} shows that the realized intrinsic noise magnitude in PRISM concentrates around its mean $\mathcal{E}_{Z_{\ell}}$ with relative fluctuations $O(1/\sqrt{d})$.
This is useful when interpreting the privacy--utility trade-off in large layers, where $d=r(m+n-r)$ is large.

\subsection{Retraction and rank-$r$ approximation}
\label{app:retraction}

We justify Proposition~\ref{prop:retract_error} and record standard facts about truncated SVD retractions.

\begin{lemma}[Eckart--Young--Mirsky theorem]
\label{lem:eym}
% Let $X\in\R^{m\times n}$ have singular values $\sigma_1\ge\cdots\ge \sigma_{\min(m,n)}$.
Let $X\in\R^{m\times n}$ have singular values $s_1\ge\cdots\ge s_{\min(m,n)}$.
Let $X_r$ be the truncated SVD keeping the top $r$ singular values.
Then $X_r$ is a best rank-$r$ approximation in Frobenius norm:
% \[
% X_r \in \arg\min_{\rank(Y)\le r}\|X-Y\|_F,
% \qquad
% \|X-X_r\|_F^2 = \sum_{k>r}\sigma_k^2.
% \]
\[
X_r \in \arg\min_{\rank(Y)\le r}\|X-Y\|_F,
\qquad
\|X-X_r\|_F^2 = \sum_{k>r} s_k^2.
\]
\end{lemma}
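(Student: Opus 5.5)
We prove Lemma~\ref{lem:eym} through the classical route: reduce to diagonal matrices by unitary invariance, then control each singular value of the residual separately by Weyl's inequality.

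\emph{Step 1 (reduction to the diagonal case).} Let $X=U\Sigma V^\top$ be a full SVD with $\Sigma$ carrying $s_1\ge\cdots\ge s_{\min(m,n)}$ on its diagonal. The Frobenius norm is invariant under $Y\mapsto U^\top YV$, and this map preserves rank. So it suffices to treat $X=\Sigma$. In that case $X_r$ is the diagonal matrix that keeps $s_1,\dots,s_r$ and zeros the rest. A direct computation then gives the achieved value $\|X-X_r\|_F^2=\sum_{k>r}s_k^2$, which establishes the second claim once optimality is shown.

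\emph{Step 2 (lower bound for an arbitrary competitor).} Fix any $Y$ with $\rank(Y)\le r$. We use Weyl's inequality for singular values, $s_{i+j-1}(P+Q)\le s_i(P)+s_j(Q)$, applied with $P=X-Y$, $Q=Y$ and $j=r+1$. Since $s_{r+1}(Y)=0$, this yields
\[
s_{i+r}(X)\le s_i(X-Y)\quad\text{for all } i\ge 1.
\]
Squaring and summing over $i$ gives
\[
\|X-Y\|_F^2=\sum_i s_i(X-Y)^2\ge\sum_{k>r}s_k(X)^2=\|X-X_r\|_F^2,
\]
so $X_r$ is a minimizer.

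\emph{Step 3 (justifying Weyl's inequality).} This is the only nontrivial step. We derive it from the Courant--Fischer min-max characterization, written as
\[
s_k(M)=\min_{\dim W\le k-1}\ \max_{x\perp W,\ \|x\|=1}\|Mx\|.
\]
Choose $W$ to be the span of the top $i-1$ right singular vectors of $P$ together with the top $j-1$ right singular vectors of $Q$. Then $\dim W\le i+j-2$. For any unit $x\perp W$ we have $\|(P+Q)x\|\le\|Px\|+\|Qx\|\le s_i(P)+s_j(Q)$, which gives the inequality. Alternatively, one could simply cite \cite{eckart1936approximation,mirsky1960symmetric}, since the result is standard. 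I expect this step to be the main point needing care; the rest is bookkeeping.
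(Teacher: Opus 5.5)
Your proof is correct, but it takes a different route from the paper. The paper does not prove the lemma at all: it cites Eckart--Young and Mirsky and uses the result only as a black box in the proof of Proposition~\ref{prop:retract_error}. There, all that is needed is that $\Retr_r(X_\eta)$ does at least as well as the particular rank-$r$ candidate $Y_\eta$.

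Your argument is self-contained, and each step checks out:
\begin{itemize}
\item The Courant--Fischer derivation of $s_{i+j-1}(P+Q)\le s_i(P)+s_j(Q)$ is right. You take $W$ spanned by the leading $i-1$ right singular vectors of $P$ and the leading $j-1$ of $Q$, so $\dim W\le i+j-2$. Allowing $\dim W\le k-1$ instead of $=k-1$ does not change the min, because enlarging $W$ can only shrink the inner max.
\item With $j=r+1$ and $s_{r+1}(Y)=0$, this gives the termwise bound $s_i(X-Y)\ge s_{i+r}(X)$, using the convention $s_k=0$ for $k>\min(m,n)$.
\end{itemize}

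This buys you more than the Frobenius statement the paper needs. Since $s_i(X-X_r)=s_{i+r}(X)\le s_i(X-Y)$ for every $i$, Ky Fan's dominance theorem shows that $X_r$ is optimal in every unitarily invariant norm, including the spectral norm. This is essentially the argument behind Mirsky's extension. The citation, by contrast, is just economical for a classical fact used once.

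One small remark: the reduction to diagonal $X$ in Step~1 is not needed for Step~2, since Weyl's inequality is already stated in terms of singular values. Its only role is computing the achieved value, and that follows directly from $X-X_r=\sum_{k>r}s_k u_k v_k^\top$ and orthonormality of the singular vectors.
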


\begin{proof}
See \cite{eckart1936approximation,mirsky1960symmetric}.
\end{proof}

\begin{proof}[Proof of Proposition~\ref{prop:retract_error}]
Let $X_\eta = Z-\eta\Delta Z$. Define
\[
Y_\eta=(A-\eta\Delta A)(B-\eta\Delta B)^\top .
\]
Then $\operatorname{rank}(Y_\eta)\le r$, and
\[
Y_\eta
=Z-\eta(\Delta A B^\top+A\Delta B^\top)
+\eta^2\Delta A\Delta B^\top
=X_\eta+\eta^2\Delta A\Delta B^\top .
\]
By Lemma~A.25, $\Retr_r(X_\eta)$ is a best rank-$r$
approximation to $X_\eta$. Since $Y_\eta$ is a rank-$r$
candidate,
\[
\|X_\eta-\Retr_r(X_\eta)\|_F
\le
\|X_\eta-Y_\eta\|_F
=
\eta^2\|\Delta A\Delta B^\top\|_F .
\]
This proves Eq.~\eqref{eq:retract_error1}. Finally,
$\|\Delta A\Delta B^\top\|_F
\le \|\Delta A\|_F\|\Delta B\|_F$, giving the stated
second-order distortion bound.
\end{proof}

\subsection{Absence of bilinear second-order DP noise in PRISM}
\label{app:no_second_order}

\begin{lemma}[PRISM noise is additive in the intrinsic parameter]
\label{lem:no_second_order}
Consider one PRISM iteration for a single LoRA module.
Conditioned on the minibatch and on the Gaussian randomness used in \eqref{eq:dpnoise}, the update takes the intrinsic additive form
\[
Z_{\ell}^+ = \Retr_r\!\left(Z_{\ell} - \eta\left(\bar{\Delta Z_{\ell}} + \tfrac{\sigma C}{b}P_{A_{\ell},B_{\ell}}(\Xi_{\ell})\right)\right),
\]
where $\bar{\Delta Z_{\ell}}$ is the clipped mean tangent update.
In particular, the only randomness in the intrinsic update is the \emph{linear} Gaussian term $P_{A_{\ell},B_{\ell}}(\Xi_{\ell})$; there is no bilinear product of independent noises analogous to $\xi_{A,\ell}\xi_{B,\ell}^\top$ in \eqref{eq:bilinear_noise_prob}.
\end{lemma}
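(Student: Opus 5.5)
The plan is to unwind the definitions of the DP tangent update and the retraction in Eq.~\eqref{eq:retraction}, and push everything into intrinsic coordinates using the lift identities already established. For a fixed module $\ell$, start from $(\Delta A_\ell^{\mathrm{dp}},\Delta B_\ell^{\mathrm{dp}})=(\bar\Delta A_\ell,\bar\Delta B_\ell)+\frac{\sigma C}{b}(\Xi_{A,\ell},\Xi_{B,\ell})$ as in Eq.~\eqref{eq:dp_tangent_noise}. The retraction argument is the linear combination $\Delta A_\ell^{\mathrm{dp}}B_\ell^\top + A_\ell(\Delta B_\ell^{\mathrm{dp}})^\top$. This map is linear in the pair $(\Delta A,\Delta B)$, so it splits into a signal part and a noise part. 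No product of two random factor perturbations ever appears, because $A_\ell$ and $B_\ell$ themselves are held fixed (deterministic given the past) when the map is evaluated.

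For the signal part, I would use linearity again together with Lemma~\ref{lem:lift}:
\[
\bar\Delta A_\ell B_\ell^\top + A_\ell\bar\Delta B_\ell^\top=\tfrac1b\sum_i\alpha_i\bigl(\Delta A_{i,\ell}B_\ell^\top + A_\ell\Delta B_{i,\ell}^\top\bigr)=\tfrac1b\sum_i\alpha_i P_{A_\ell,B_\ell}(G_{i,\ell}).
\]
This is $\bar{\Delta Z_\ell}$, a deterministic function of the minibatch and the current state. For the noise part, the factor-noise formula Eq.~\eqref{eq:noise_form} gives $\Xi_{A,\ell}B_\ell^\top + A_\ell\Xi_{B,\ell}^\top = (I-\Pi_{A_\ell})\Omega_{A,\ell}\widehat B_\ell^\top + \widehat A_\ell\Omega_{B,\ell}^\top$, using the whitened factors in Eq.~\eqref{eq:whiten}. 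By Lemma~\ref{lem:lowrank_equiv} this is distributed as $P_{A_\ell,B_\ell}(\Xi_\ell)$.

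Conditioning on the minibatch and coupling the Gaussian randomness so that this noise term equals $P_{A_\ell,B_\ell}(\Xi_\ell)$ yields the displayed formula. The coupling is legitimate because only the law matters, and the statement's "Gaussian randomness used in \eqref{eq:dpnoise}" is exactly $\Xi_\ell$. The resulting intrinsic perturbation is a fixed linear image of a single Gaussian vector. It is therefore linear in the randomness, and in particular contains no term of the form $\xi_{A,\ell}\xi_{B,\ell}^\top$. I would contrast this explicitly with Eq.~\eqref{eq:bilinear_noise_prob}: there, the product $(A+\eta\xi_A)(B+\eta\xi_B)^\top$ multiplies two noised factors, whereas PRISM feeds the additive matrix $Z_\ell-\eta(\cdot)$ to $\mathrm{Retr}_r$.

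The main subtlety, rather than an obstacle, is the adaptive direction $(U_{A,\ell},U_{B,\ell})$ used in Algorithm~\ref{alg:prism}. The lemma as stated concerns the direct DP tangent step in Eq.~\eqref{eq:retraction}, so I would state clearly that the adaptive map is deterministic post-processing. The retraction is then applied to $Z_\ell$ minus a linear tangent combination of the processed factors, so no bilinear noise product is formed there either. I would also note that the only remaining nonlinearity is $\mathrm{Retr}_r$ itself, whose deviation is controlled as second order by Proposition~\ref{prop:retract_error}.
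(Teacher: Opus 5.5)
Your proposal is correct and follows the same route as the paper. The paper simply notes that the claim is immediate from \eqref{eq:dpnoise}--\eqref{eq:retraction}, from linearity, and from the fact that $\Retr_r$ is deterministic post-processing; you additionally make explicit the bridge from the factor-level sampler \eqref{eq:noise_form} to the lift-free form via Lemma~\ref{lem:lift} and Lemma~\ref{lem:lowrank_equiv}, which the paper leaves implicit. Your coupling step is sound (realize $\Xi_\ell$ as the sampler output plus an independent Gaussian on $(T_{Z_\ell}\mathcal{M}_r)^\perp$), but you do not need it for the no-bilinear conclusion, since the noise term is already linear in $(\Omega_{A,\ell},\Omega_{B,\ell})$.
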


\begin{proof}
This is immediate from the definition of PRISM in \eqref{eq:dpnoise}--\eqref{eq:retraction} and the linearity of $P_{A_{\ell},B_{\ell}}$.
Retraction $\Retr_r$ and any subsequent factorization/gauge alignment are deterministic post-processing steps.
\end{proof}

\subsection{Proof of Theorem~\ref{thm:gauge_invariant}}
\label{app:gauge_invariant}

\begin{proof}[Proof of Theorem~\ref{thm:gauge_invariant}]
Part (i) is Lemma~\ref{lem:proj_gauge}.
Part (ii) follows from Proposition~\ref{prop:tangentproj} since $P_{A_{\ell},B_{\ell}}$ depends only on $(\projA,\projB)$.
For (iii), $P_{A_{\ell},B_{\ell}}(\Xi_{\ell})$ is a measurable function of $(\projA,\projB,\Xi_{\ell})$ and $(\projA,\projB)$ are unchanged under gauge transformations, hence the induced distribution is unchanged.
Retraction, refactorization, and gauge alignment are deterministic maps of the intrinsic quantities, so they preserve gauge invariance.
\end{proof}

\subsection{Gaussian mechanism on a linear subspace}
\label{app:subspace_gaussian}

DP analyses are often stated for outputs in $\R^d$ with full-dimensional Gaussian noise.
PRISM adds Gaussian noise supported on the tangent subspace $T_{Z_{\ell}}\calM_r$.
This is still a standard Gaussian mechanism once the output space is identified with the subspace.

\begin{lemma}[Gaussian mechanism restricted to a subspace]
\label{lem:gauss_subspace}
Let $S\subseteq \R^d$ be a linear subspace with orthogonal projector $\Pi_S$.
Let $f:\mathcal{D}\mapsto S$ be a function with $\ell_2$ sensitivity at most $\Delta$:
$\|f(\mathcal{D})-f(\mathcal{D}')\|_2\le \Delta$ for all adjacent $\mathcal{D},\mathcal{D}'$.
Let $g\sim\mathcal{N}(0,\Id_d)$ and define the mechanism
\[
\mathcal{M}(\mathcal{D}) \triangleq f(\mathcal{D}) + \sigma \Delta\, \Pi_S g.
\]
Then $\mathcal{M}$ is $(\varepsilon,\delta)$-DP for the same $(\varepsilon,\delta)$ guarantee as the standard Gaussian mechanism in dimension $\dim(S)$ (with noise multiplier $\sigma$).
\end{lemma}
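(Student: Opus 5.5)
The plan is to reduce the mechanism to a standard Gaussian mechanism in $\R^k$ with $k=\dim(S)$, via an isometric change of coordinates, and then invoke post-processing. Fix an orthonormal basis of $S$ and collect it as the columns of $Q\in\R^{d\times k}$. Then $Q^\top Q=\Id_k$ and $\Pi_S=QQ^\top$. Every point of $S$ is determined by its coordinates through the linear isometry $\phi:S\to\R^k$, $\phi(x)=Q^\top x$, whose inverse is $y\mapsto Qy$.

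The first step is to rewrite the mechanism in these coordinates. Since $f(\mathcal{D})\in S$, we have $f(\mathcal{D})=Q\,\tilde f(\mathcal{D})$ with $\tilde f\triangleq Q^\top f$. Likewise $\Pi_S g = Q(Q^\top g)$, and $h\triangleq Q^\top g\sim\mathcal{N}(0,Q^\top Q)=\mathcal{N}(0,\Id_k)$. Hence
\[
\mathcal{M}(\mathcal{D}) = Q\bigl(\tilde f(\mathcal{D}) + \sigma\Delta\, h\bigr), \qquad h\sim\mathcal{N}(0,\Id_k).
\]
The inner map $\tilde{\mathcal{M}}(\mathcal{D})=\tilde f(\mathcal{D})+\sigma\Delta h$ is exactly the standard Gaussian mechanism in dimension $k$.

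The second step is to check its sensitivity. Because $Q^\top$ restricted to $S$ is an isometry, $\|\tilde f(\mathcal{D})-\tilde f(\mathcal{D}')\|_2=\|f(\mathcal{D})-f(\mathcal{D}')\|_2\le\Delta$. So $\tilde{\mathcal{M}}$ has noise multiplier $\sigma$ relative to its $\ell_2$ sensitivity, and it inherits whatever $(\varepsilon,\delta)$ guarantee the standard Gaussian mechanism provides for that multiplier. Equivalently, the privacy loss random variable depends only on the ratio $\|\tilde f(\mathcal{D})-\tilde f(\mathcal{D}')\|_2/(\sigma\Delta)\le 1/\sigma$, so the guarantee does not depend on $k$.

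The third step applies post-processing: $\mathcal{M}=Q\circ\tilde{\mathcal{M}}$ with $Q$ a fixed deterministic map, so $\mathcal{M}$ is $(\varepsilon,\delta)$-DP. Since $Q$ is injective, the two mechanisms' privacy profiles in fact coincide, although the inequality direction alone suffices here.

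The only point needing care is that $S$, and hence $Q$, must be data-independent. In PRISM, $S=T_{Z_\ell}\calM_r$ depends on the current iterate, which is itself a function of earlier DP outputs. I would treat it as fixed by conditioning on the past iterates; that conditioning is justified by adaptive composition, which is handled in Appendix~\ref{app:dp_proof}. A further subtlety is the requirement $f(\mathcal{D})\in S$. It holds for PRISM because the clipped sum consists of projected tangents, and clipping the global norm $s_i$ bounds the sensitivity by $C$.
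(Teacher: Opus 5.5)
Your proposal is correct and follows essentially the same route as the paper: fix an orthonormal basis $Q$ of $S$, write $\mathcal{M}(\mathcal{D})=Q\bigl(\tilde f(\mathcal{D})+\sigma\Delta h\bigr)$ with $h\sim\mathcal{N}(0,\Id_k)$, and transfer the $k$-dimensional Gaussian mechanism guarantee through the isometry. You also spell out two steps that the paper compresses into the phrase ``$U$ is an isometry on $S$'': that $Q^\top$ preserves the $\ell_2$ sensitivity of $f$, and that passing from $\tilde{\mathcal{M}}$ to $\mathcal{M}$ is post-processing.
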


\begin{proof}
Let $k=\dim(S)$ and let $U\in\R^{d\times k}$ have orthonormal columns spanning $S$ so that $\Pi_S=UU^\top$.
Write $f(\mathcal{D})=U \alpha(\mathcal{D})$ for some $\alpha(\mathcal{D})\in\R^k$.
Then $\Pi_S g = UU^\top g \overset{d}{=} U h$ where $h\sim\mathcal{N}(0,\Id_k)$.
Therefore $\mathcal{M}(\mathcal{D}) \overset{d}{=} U(\alpha(\mathcal{D}) + \sigma\Delta h)$.
Since $U$ is an isometry on $S$, the DP guarantee for $\alpha(\mathcal{D}) + \sigma\Delta h$ (a standard Gaussian mechanism in $\R^k$) transfers directly to $\mathcal{M}$.
\end{proof}

\subsection{Procrustes alignment is a gauge transform}
\label{app:procrustes}

\begin{lemma}[Orthogonal alignment is gauge preserving]
\label{lem:procrustes_gauge}
Let $Z_{\ell}=A_{\ell}B_{\ell}^\top$ with $A_{\ell},B_{\ell}$ full column rank and let $Q$ be orthogonal.
Then $(A'_{\ell},B'_{\ell})=(A_{\ell}Q,B_{\ell}Q)$ satisfies $A'_{\ell}{B'_{\ell}}^\top=Z_{\ell}$ and leaves $\projA,\projB$ unchanged.
\end{lemma}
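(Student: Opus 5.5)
We prove the two claims of Lemma~\ref{lem:procrustes_gauge} separately, by direct computation, treating orthogonal alignment as a special case of the gauge action \eqref{eq:gauge_action}.

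\textbf{Step 1: the product is unchanged.} Since $Q$ is orthogonal, $QQ^\top=\Id_r$. Hence
\[
A'_{\ell}{B'_{\ell}}^\top=A_{\ell}QQ^\top B_{\ell}^\top=A_{\ell}B_{\ell}^\top=Z_{\ell}.
\]

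\textbf{Step 2: orthogonal alignment is a gauge transform.} Take $R=Q\in\gl(r)$ in \eqref{eq:gauge_action}. Orthogonality gives $R^{-\top}=(Q^{-1})^\top=(Q^\top)^\top=Q$. So $(A_{\ell}R,B_{\ell}R^{-\top})=(A_{\ell}Q,B_{\ell}Q)$, which is exactly the aligned pair.

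\textbf{Step 3: the projectors are unchanged.} Lemma~\ref{lem:proj_gauge} applies directly and gives $\Pi_{A'_{\ell}}=\Pi_{A_{\ell}}$ and $\Pi_{B'_{\ell}}=\Pi_{B_{\ell}}$.

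As a self-contained check that avoids pseudoinverses, use full column rank to write $M=A_{\ell}^\top A_{\ell}\succ0$. Then $(Q^\top M Q)^{-1}=Q^\top M^{-1}Q$, and
\[
A_{\ell}Q\,Q^\top M^{-1}Q\,Q^\top A_{\ell}^\top=\Pi_{A_{\ell}}.
\]
The same computation applies to $B_{\ell}$ with $N=B_{\ell}^\top B_{\ell}$.

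\textbf{Consequences and expected difficulty.} Because the projectors are unchanged, \eqref{eq:tangent_proj} shows that $\mathcal{P}_{A_{\ell},B_{\ell}}$ is also unchanged under alignment. This is what licenses treating Procrustes alignment as post-processing in Theorem~\ref{thm:dp}. There is no real obstacle here; the only point needing care is the identity $R^{-\top}=Q$, which is what places orthogonal alignment inside the $\gl(r)$ gauge orbit.
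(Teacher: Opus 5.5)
Your proof is correct and essentially matches the paper's. Step~1 is identical. Your self-contained check is the paper's projector computation, except that the paper keeps the pseudoinverse and invokes $(Q^\top X Q)^{\dagger} = Q^\top X^{\dagger} Q$, whereas you use the ordinary inverse, which is legitimate under the full-column-rank hypothesis. Your main route through Lemma~\ref{lem:proj_gauge}, via the observation $R^{-\top}=Q$, is a clean alternative; it makes explicit why orthogonal alignment lies inside the $\gl(r)$ gauge orbit, which the paper's proof leaves implicit.
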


\begin{proof}
$A'_{\ell}{B'_{\ell}}^\top=(A_{\ell}Q)(B_{\ell}Q)^\top=A_{\ell}QQ^\top B_{\ell}^\top=A_{\ell}B_{\ell}^\top$.
For the projector,
\[
A'_{\ell}\bigl((A'_{\ell})^\top A'_{\ell}\bigr)^{\pinv}(A'_{\ell})^\top
= A_{\ell}Q\bigl(Q^\top A_{\ell}^\top A_{\ell}Q\bigr)^{\pinv}Q^\top A_{\ell}^\top
= A_{\ell}(A_{\ell}^\top A_{\ell})^{\pinv}A_{\ell}^\top
= \projA,
\]
because \(\bigl(Q^\top X Q\bigr)^{\pinv}=Q^\top X^{\pinv}Q\) for orthogonal \(Q\).

\end{proof}

\subsection{DP guarantee details}
\label{app:dp_proof}

We provide a proof of Theorem~\ref{thm:dp}.

\begin{proof}[Proof of Theorem~\ref{thm:dp}]
Write the per-example intrinsic tangent update (concatenated across all LoRA modules) as
$\Delta \mathbf{Z_{\ell}}_i\in \mathbb{T}$, where $\mathbb{T}$ denotes the direct-sum tangent space equipped with the Frobenius inner product.
PRISM applies intrinsic clipping (Eq.~\eqref{eq:global_clip}) to obtain
$\tilde{\Delta \mathbf{Z_{\ell}}}_i=\alpha_i\,\Delta \mathbf{Z_{\ell}}_i$ with $\|\tilde{\Delta \mathbf{Z_{\ell}}}_i\|_F\le C$.
Hence the $\ell_2$ sensitivity of the minibatch average is bounded by
\[
\left\|\frac{1}{b}\sum_{i=1}^b \tilde{\Delta \mathbf{Z_{\ell}}}_i(\mathcal{D})
-
\frac{1}{b}\sum_{i=1}^b \tilde{\Delta \mathbf{Z_{\ell}}}_i(\mathcal{D}')
\right\|_F
\le \frac{C}{b}
\]
for any adjacent datasets $\mathcal{D},\mathcal{D}'$.

Next, PRISM adds Gaussian noise of standard deviation $\sigma C/b$ in $\mathbb{T}$.
Concretely, each module samples a dense Gaussian matrix and applies the orthogonal tangent projector (Eq.~\eqref{eq:tangent_proj}), so the resulting noise is a Gaussian restricted to a linear subspace.
By Lemma~\ref{lem:gauss_subspace}, the released vector
\[
\widehat{\Delta \mathbf{Z_{\ell}}}
=
\frac{1}{b}\sum_{i=1}^b \tilde{\Delta \mathbf{Z_{\ell}}}_i
+
\frac{\sigma C}{b}\,\mathbf{G},
\qquad \mathbf{G}\sim\mathcal{N}(0,\Pi_{\mathbb{T}}),
\]
is an instance of the Gaussian mechanism with sensitivity $C/b$.

Finally, under Poisson subsampling with rate $q=b/N$, each iteration is a \emph{subsampled} Gaussian mechanism.
The overall $(\varepsilon,\delta)$ guarantee after $T$ steps follows from standard privacy-loss composition for subsampled Gaussian mechanisms, and PRISM uses the PRV accountant implemented in Opacus to compute $\varepsilon$ for a target $\delta$ \cite{gopi2021numerical,yousefpour2021opacus,opacus_privacy_engine}.
All subsequent operations (adaptive post-processing, factorization, alignment, and retraction) are deterministic post-processing and therefore do not weaken DP.
\end{proof}

\subsection{Rank-space moments of isotropic tangent noise}
\label{app:noise_moments}
This section derives the rank-space second moments of the isotropic tangent noise used by PRISM (Eq.~\eqref{eq:noise_form}), which motivates the DP-aware floors in Eq.~\eqref{eq:noise_floor}.
Let $U\sim\mathcal{N}(0,I_{m\times r})$ and $V\sim\mathcal{N}(0,I_{n\times r})$, and define
$\Xi_{A,\ell}=(I-\projA)U\,N^{-1/2}$ and $\Xi_{B,\ell}=V\,M^{-1/2}$ with $M=A_{\ell}^\top A_{\ell}$ and $N=B_{\ell}^\top B_{\ell}$.
Then
\begin{equation}
\mathbb{E}[\Xi_{A,\ell}^\top \Xi_{A,\ell}]
= N^{-1/2}\,\mathbb{E}[U^\top (I-\projA)U]\,N^{-1/2},
\qquad
\mathbb{E}[\Xi_{B,\ell}^\top \Xi_{B,\ell}]
= M^{-1/2}\,\mathbb{E}[V^\top V]\,M^{-1/2}.
\label{eq:rank_moments_start}
\end{equation}
Since $U$ has i.i.d.\ standard normal entries and $(I-\projA)$ is an orthogonal projector of rank $\mathrm{tr}(I-\projA)=m-r$, we have
$\mathbb{E}[U^\top (I-\projA)U]=(m-r)I_r$.
Similarly, $\mathbb{E}[V^\top V]=n I_r$.
Substituting into \eqref{eq:rank_moments_start} yields
\begin{equation}
\mathbb{E}\Big[\frac{\Xi_{A,\ell}^\top \Xi_{A,\ell}}{m}\Big]=\frac{m-r}{m}\,N^{-1},
\qquad
\mathbb{E}\Big[\frac{\Xi_{B,\ell}^\top \Xi_{B,\ell}}{n}\Big]=M^{-1}.
\label{eq:rank_moments}
\end{equation}
Thus the typical eigenvalues of the rank-space noise covariance scale with $M^{-1}$ and $N^{-1}$, explaining why inverse-square-root preconditioning can explode when $M$ or $N$ is ill-conditioned.
PRISM's floors in Eq.~\eqref{eq:noise_floor} are gauge invariant because $\mathrm{tr}(M^{-1})$ and $\mathrm{tr}(N^{-1})$ are invariant under $(A_{\ell},B_{\ell})\mapsto(A_{\ell}R,B_{\ell}R^{-\top})$.

\subsection{Adaptive preconditioning and DP noise amplification}
\label{app:precond}

This subsection complements Section~\ref{sec:theory_opt} and proves Theorem~\ref{thm:precond_bound}.
We also record a simple identity showing how rank-space normalization can ``cancel'' the DP noise scale when the second moment is dominated by noise.

% \begin{proof}[Proof of Theorem~\ref{thm:precond_bound}]
% Since $V\succeq 0$, write its eigendecomposition $V=U\Lambda U^\top$ with $\Lambda=\mathrm{diag}(\lambda_1,\dots,\lambda_r)$ and $\lambda_i\ge 0$.
% Then
% $\mathsf{P}=(V+\lambda I)^{-1/2}=U(\Lambda+\lambda I)^{-1/2}U^\top$ and
% \[
% \|\mathsf{P}\|_2=\max_i (\lambda_i+\lambda)^{-1/2}\le \lambda^{-1/2}.
% \]
% For any $X$, submultiplicativity of the Frobenius norm gives
% $\|X\mathsf{P}\|_F\le \|X\|_F\|\mathsf{P}\|_2\le \lambda^{-1/2}\|X\|_F$.
% Squaring yields Eq.~\eqref{eq:precond_bound}.
% \end{proof}

\begin{proof}[Proof of Theorem~\ref{thm:precond_bound}]
Since $V\succeq 0$, write its eigendecomposition $V=U\Lambda U^\top$ with $\Lambda=\mathrm{diag}(\lambda_1,\dots,\lambda_r)$ and $\lambda_i\ge 0$.
Then
$\mathsf{P}=V+\lambda I = U(\Lambda+\lambda I)U^\top$ and thus
$\mathsf{P}^{-1/2}=U(\Lambda+\lambda I)^{-1/2}U^\top$.
\[
\|\mathsf{P}^{-1/2}\|_2=\max_i (\lambda_i+\lambda)^{-1/2}\le \lambda^{-1/2}.
\]
For any $X$, submultiplicativity of the Frobenius norm gives
$\|X\mathsf{P}^{-1/2}\|_F\le \|X\|_F\|\mathsf{P}^{-1/2}\|_2\le \lambda^{-1/2}\|X\|_F$.
Squaring yields Eq.~\eqref{eq:precond_bound}.
\end{proof}

\begin{proposition}[Noise normalization under naive rank-space preconditioning]
\label{prop:noise_normalization}
Let $G\in\R^{m\times r}$ have i.i.d.\ $\mathcal{N}(0,1)$ entries and define the (uncentered) second moment $V\triangleq \frac{1}{m}G^\top G$.
Then the preconditioned matrix $Q\triangleq G V^{-1/2}$ satisfies
\[
Q^\top Q = m I_r
\qquad\text{and hence}\qquad
\|Q\|_F^2 = m r.
\]
Equivalently, if $\widehat m=\tau G$ for any $\tau>0$ and $V=\frac{1}{m}\widehat m^\top\widehat m$, then $\widehat m\,V^{-1/2}$ has Frobenius norm $\sqrt{mr}$ \emph{independent} of $\tau$.
\end{proposition}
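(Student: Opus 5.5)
The plan is a direct algebraic computation. The only subtlety is invertibility of $V$, so I will treat that first.

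Since $G$ has i.i.d.\ $\mathcal{N}(0,1)$ entries, a standard fact about Gaussian matrices gives $\rank(G)=\min(m,r)$ almost surely. In the LoRA regime $r\le m$, this means $G$ has full column rank a.s. Hence $V=\frac1m G^\top G\succ 0$ a.s., so $V^{-1/2}$ is well defined and symmetric positive definite. I expect this to be the only real obstacle, and a mild one. If $r>m$, $V$ is singular. In that case I would either state the result under $r\le m$ (implicit in the setting) or replace $V^{-1/2}$ by the pseudo-inverse square root. With the pseudo-inverse, the identity becomes $Q^\top Q=m\,\Pi_{\mathrm{row}(G)}$ and $\|Q\|_F^2=m\cdot\rank(G)$.

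Next I compute $Q^\top Q$ directly:
\[
Q^\top Q = V^{-1/2}G^\top G V^{-1/2} = V^{-1/2}(mV)V^{-1/2} = m\,V^{-1/2}VV^{-1/2} = m I_r ,
\]
using that $V^{-1/2}$ commutes with $V$; this follows from a common eigendecomposition $V=U\Lambda U^\top$, as in the proof of Theorem~\ref{thm:precond_bound}. Taking traces gives $\|Q\|_F^2=\tr(Q^\top Q)=mr$.

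For the scaled statement, I set $\widehat m=\tau G$. Then $V=\frac1m\widehat m^\top\widehat m=\tau^2\cdot\frac1m G^\top G$, so $V^{-1/2}=\tau^{-1}\bigl(\frac1m G^\top G\bigr)^{-1/2}$. The factors of $\tau$ cancel in $\widehat m V^{-1/2}$. Equivalently, the same computation with $\widehat m$ in place of $G$ gives $(\widehat m V^{-1/2})^\top(\widehat m V^{-1/2})=mI_r$ directly. Either way, $\|\widehat m V^{-1/2}\|_F=\sqrt{mr}$, independent of $\tau$.

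Gaussianity is used only to guarantee full column rank a.s.; the identity itself holds deterministically for any full-column-rank $G$.
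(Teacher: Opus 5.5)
Your proposal is correct and follows the paper's proof essentially verbatim: the identity $Q^\top Q=V^{-1/2}(mV)V^{-1/2}=mI_r$, taking the trace to get $\|Q\|_F^2=mr$, and cancelling $\tau$ via $V=\tau^2\cdot\frac{1}{m}G^\top G$. Your additional remark that $G$ has full column rank almost surely when $r\le m$, so that $V^{-1/2}$ exists, fills in a point the paper leaves implicit.
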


\begin{proof}
By definition,
$Q^\top Q = V^{-1/2}G^\top G V^{-1/2} = V^{-1/2}(mV)V^{-1/2}=mI_r$.
Taking traces gives $\|Q\|_F^2=\tr(Q^\top Q)=mr$.
The final claim follows from $V=\tau^2(\frac{1}{m}G^\top G)$, which implies $\widehat m V^{-1/2}=G(\frac{1}{m}G^\top G)^{-1/2}=Q$.
\end{proof}

Proposition~\ref{prop:noise_normalization} explains the issue in Issue~III:
when an adaptive method forms $V$ directly from a DP-sanitized gradient whose energy is dominated by DP noise, the right-multiplication by $V^{-1/2}$ can make the stochastic component insensitive to the DP noise scale.
PRISM avoids this via (i) DP-aware floors and condition-number clamping (Eq.~\eqref{eq:noise_floor}), which bound $\|V^{-1/2}\|_2$ and prevent ill-conditioned amplification, and (ii) debiasing of the second moment by subtracting the known DP noise covariance in rank space (Appendix~\ref{app:noise_moments}).

\begin{lemma}[Orthogonal gauge equivariance of rank-space preconditioning]
\label{lem:orth_precond}
Let $R\in\mathbb{R}^{r\times r}$ be orthogonal and consider the restricted gauge transform $(A_{\ell},B_{\ell})\mapsto (A_{\ell}R,B_{\ell}R)$.
If $\widehat m_A,\widehat m_B$ transform as $\widehat m_A'=\widehat m_A R$ and $\widehat m_B'=\widehat m_B R$, and the second moments transform as $V_A'=R^\top V_A R$ and $V_B'=R^\top V_B R$, then the preconditioned directions from Eq.~\eqref{eq:precond_dir} satisfy
$U_A'=U_A R$ and $U_B'=U_B R$, and therefore the intrinsic update $U_A B_{\ell}^\top + A_{\ell} U_B^\top$ is invariant.
\end{lemma}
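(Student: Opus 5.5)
The plan is a direct substitution: first show that $U_A$ and $U_B$ transform by right multiplication by $R$, then use $RR^\top=\Id_r$ to cancel the gauge inside the intrinsic update. Throughout, write $\lambda_A,\lambda_B$ for the floors in Eq.~\eqref{eq:noise_floor}. Before substituting, I will check that these floors are unchanged under $(A_\ell,B_\ell)\mapsto(A_\ell R,B_\ell R)$ with $R$ orthogonal. The Gram matrices become $M'=R^\top M R$ and $N'=R^\top N R$. Since $R^{-1}=R^\top$, this gives $\tr(M'^{-1})=\tr(R^\top M^{-1}R)=\tr(M^{-1})$, and likewise for $N$. The floors depend on the iterate only through these traces, so $\lambda'_A=\lambda_A$ and $\lambda'_B=\lambda_B$.

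Next I show how the inverse square root transforms. By hypothesis $V_A'=R^\top V_A R$, so
\[
V_A'+\lambda_A\Id = R^\top(V_A+\lambda_A\Id)R,
\]
because $R^\top R=\Id$. I then use the spectral calculus for orthogonal conjugation. If $V_A+\lambda_A\Id=W\Lambda W^\top$ with $\Lambda\succ 0$, then $R^\top W$ is orthogonal, and therefore
\[
(V_A'+\lambda_A\Id)^{-1/2}=R^\top W\Lambda^{-1/2}W^\top R = R^\top (V_A+\lambda_A\Id)^{-1/2}R .
\]
This identity is the one place where orthogonality of $R$ is really needed. For a general $R\in\gl(r)$, congruence does not commute with taking square roots, and that is exactly why the lemma restricts to the orthogonal gauge.

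Substituting into Eq.~\eqref{eq:precond_dir}, again with $RR^\top=\Id$, gives
\[
U_A'=\widehat m_A R\,R^\top(V_A+\lambda_A\Id)^{-1/2}R = U_A R,
\]
and the same computation gives $U_B'=U_B R$.

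Finally, the intrinsic update at the new factors is
\[
U_A' B_\ell'^\top + A_\ell' U_B'^\top = U_A R R^\top B_\ell^\top + A_\ell R R^\top U_B^\top = U_A B_\ell^\top + A_\ell U_B^\top .
\]
Note that the orthogonal restriction makes $B_\ell R = B_\ell R^{-\top}$, so $(A_\ell R, B_\ell R)$ is a genuine member of the gauge orbit and this invariance is meaningful.

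I expect the only delicate points to be the functional-calculus identity for the inverse square root and the invariance of the floors; all remaining steps are algebraic substitutions.
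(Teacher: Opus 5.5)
Your proposal is correct and follows the paper's proof essentially step for step. Both arguments use orthogonal similarity equivariance of the inverse square root, $(V_A'+\lambda\Id)^{-1/2}=R^\top(V_A+\lambda\Id)^{-1/2}R$, to get $U_A'=U_AR$ and $U_B'=U_BR$, then cancel the gauge in the intrinsic update with $RR^\top=\Id$; your extra check that the floors are unchanged, via $\tr(R^\top M^{-1}R)=\tr(M^{-1})$, fills in a point the paper handles only implicitly by using the same $\lambda$ before and after the transform.
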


\begin{proof}
For orthogonal $R$, similarity equivariance of matrix functions yields
$(V_A'+\lambda I)^{-1/2} = (R^\top(V_A+\lambda I)R)^{-1/2} = R^\top (V_A+\lambda I)^{-1/2} R$.
Thus
$U_A'=\widehat m_A'(V_A'+\lambda I)^{-1/2}=\widehat m_A R \, R^\top (V_A+\lambda I)^{-1/2} R = U_A R$,
and similarly $U_B'=U_B R$.
Finally, with $A'_{\ell}=A_{\ell}R$ and $B'_{\ell}=B_{\ell}R$ we have
\begin{align*}
U_A' (B'_{\ell})^\top + A'_{\ell} (U_B')^\top
&= (U_A R)(B_{\ell}R)^\top + (A_{\ell}R)(U_B R)^\top \\
&= (U_A R)(R^\top B_{\ell}^\top) + (A_{\ell}R)(R^\top U_B^\top) \\
&= U_A (RR^\top) B_{\ell}^\top + A_{\ell}(RR^\top) U_B^\top \\
&= U_A B_{\ell}^\top + A_{\ell} U_B^\top,
\end{align*}
\end{proof}

\section{Experimental Setup}
\label{app:setup}

\subsection{Experimental Details}
\label{app:exp_details}

This section reports dataset split sizes, hyperparameters, and the hardware/software
environment used in our experiments.

\paragraph{Compute.}
All experiments were run on a single NVIDIA \textbf{A100-PCIE-40GB} GPU.
\paragraph{Datasets and splits.}
\Cref{tab:app_datasets} reports the training sizes we used and evaluation split sizes.
GLUE8 is derived from GLUE~\citep{wang2018glue} (excluding WNLI), converted to an
instruction-format JSON dataset, and sub-sampled with a fixed number of training
examples per task. Math-10K is the LLM-Adapters mixture~\citep{hu2023llm_adapters,llm_adapters_code}
and is evaluated on the standard test splits of its component datasets.

\begin{table}[t]
\caption{Dataset splits used in our experiments.}
\label{tab:app_datasets}
\begin{center}
\begin{small}
\begin{sc}
\begin{tabular}{lrr}
\toprule
\textbf{Dataset / Split} & \textbf{Train} & \textbf{Eval} \\
\midrule
\multicolumn{3}{l}{\textbf{GLUE8}}\\
CoLA   & 1{,}250 & 1{,}043 \\
SST-2  & 1{,}250 & 872 \\
MRPC   & 1{,}250 & 408 \\
STS-B  & 1{,}250 & 1{,}500 \\
QQP    & 1{,}250 & 40{,}430 \\
MNLI (matched / mismatched) & 1{,}250 & 9{,}815 / 9{,}832 \\
QNLI   & 1{,}250 & 5{,}463 \\
RTE    & 1{,}250 & 277 \\
\midrule
\textbf{GLUE8 total} & 10{,}000 & -- \\
\midrule
\multicolumn{3}{l}{\textbf{Math-10K}}\\
Math-10K train (mixture) & 9{,}919 & -- \\
GSM8K test  & -- & 1{,}319 \\
AQuA test   & -- & 254 \\
MAWPS test  & -- & 238 \\
SVAMP test  & -- & 1{,}000 \\
\bottomrule
\end{tabular}
\end{sc}
\end{small}
\end{center}
\vskip -0.1in
\end{table}

\paragraph{Common fine-tuning hyperparameters.}
Unless otherwise stated, all methods share the same backbone, LoRA configuration,
and DP settings in \Cref{tab:app_hparams_common}. For DP runs, the noise multiplier
is calibrated with Opacus \texttt{make\_private\_with\_epsilon} using the default PRV
accountant~\citep{gopi2021numerical,opacus_privacy_engine}.

\begin{table}[t]
\caption{Common hyperparameters shared across methods.}
\label{tab:app_hparams_common}
\begin{center}
\begin{small}
\begin{sc}
\begin{tabular}{lcc}
\toprule
\textbf{Setting} & \textbf{GLUE8} & \textbf{Math-10K} \\
\midrule
Backbone model & \texttt{google/\allowbreak gemma-3-4b-pt}~\citep{gemma3} & same \\
LoRA rank $r$ & 16 & 16 \\
% LoRA $\alpha$ & 16 & 16 \\
LoRA scaling $\alpha_{\mathrm{LoRA}}$ & 16 & 16 \\
LoRA dropout & 0.05 & 0.05 \\
Target modules & \{q\_proj,k\_proj,v\_proj,up\_proj,down\_proj\} & same \\
Update steps & 500 & 300 \\
Effective batch size & 64 & 64 \\
Micro-batch size & 4 & 4 \\
Max sequence length & 384 & 256 \\
Train on inputs & False & True \\
Random seed & 42 & 42 \\
DP budgets & $\varepsilon \in \{3,6\}$, $\delta=10^{-5}$ & same \\
Clipping norm $C$ & 1.0 & 1.0 \\
DP grad-sample backend & \texttt{functorch} (fallback to hooks) & same \\
DP accountant & PRV (Opacus default)~\citep{gopi2021numerical,opacus_privacy_engine} & same \\
\bottomrule
\end{tabular}
\end{sc}
\end{small}
\end{center}
\vskip -0.1in
\end{table}

\begin{table}[t]
\caption{Method-specific hyperparameters.}
\label{tab:app_hparams_methods}
\begin{center}
\begin{small}
\begin{sc}

{\setlength{\tabcolsep}{4.5pt}
\begin{tabular}{l p{0.24\textwidth} c c p{0.28\textwidth}}
\toprule
\textbf{Method} & \textbf{Optimizer} & \textbf{LR (GLUE8)} & \textbf{LR (Math)} & \textbf{Supplementary} \\
\midrule
AdamW & AdamW~\citep{kingma2015adam,loshchilov2019adamw}
& $2{\times}10^{-4}$ & $3{\times}10^{-4}$ & -- \\
FFA & AdamW + freeze $A_{\ell}$~\citep{sun2024improvinglora}
& $2{\times}10^{-4}$ & $3{\times}10^{-4}$ & -- \\
RITE & LoRA-RITE~\citep{yen2025rite}
& $2{\times}10^{-4}$ & $3{\times}10^{-4}$ & -- \\
LoRA+ & AdamW  split LRs~\citep{hayou2024loraplus}
& $2{\times}10^{-4}$ & $3{\times}10^{-4}$ & Ratio $\rho=6.0$. \\
LAMB & LAMB~\citep{you2020lamb}
& $5{\times}10^{-3}$ & $5{\times}10^{-3}$ & -- \\
PRISM & \textsc{PRISM} (ours)
& $2{\times}10^{-4}$ & $3{\times}10^{-4}$ & -- \\
\bottomrule
\end{tabular}}
\end{sc}
\end{small}
\end{center}
\vskip -0.1in
\end{table}

\section{Additional Diagnostics and Analysis}
\label{app:analysis}

\subsection{Additional diagnostics for Issue I }
\label{app:pathology1_extra}

\paragraph{Diagnostic protocol and hyperparameters.}
We evaluate gauge sensitivity by running \emph{ DP training} under multiple equivalent LoRA factorizations of the same intrinsic update $Z_{\ell}=A_{\ell}B_{\ell}^\top$.
For each gauge $c$, we apply the reparameterization $(A_{\ell},B_{\ell})\leftarrow(cA_{\ell},c^{-1}B_{\ell})$, which leaves $Z_{\ell}$ unchanged but alters factor-space norms.
We train for $T=300$ update steps on Math-10K and log (i) clipping fraction $\mathrm{dp\_clip\_frac}$, (ii) mean clipping coefficient $\mathrm{dp\_coef\_mean}$, and (iii) realized intrinsic step magnitude $\|\Delta Z_t\|_F$, where $\Delta Z_t \equiv Z_{t+1}-Z_t$ is computed from the \emph{actual parameter update} (not a formula-level proxy). \textbf{Gauges:} $c\in\{0.25,0.5,1.0,2.0,4.0\}$.

\paragraph{Metrics and theoretical link.}
In baseline factor-space DP-SGD, per-example clipping uses the factor norm
$s_i^{\mathrm{fact}}=\sqrt{\|g_{A,i}\|_F^2+\|g_{B,i}\|_F^2}$ and $\alpha_i=\min\{1,C_{\mathrm{fact}}/s_i^{\mathrm{fact}}\}$ (cf.\ \eqref{eq:dpsgd}).
Under gauge rescaling, the norm transforms as \eqref{eq:gauge_norm}, so $\alpha_i$ and the induced intrinsic update distribution depend on $c$ (Issue-I).
PRISM instead forms intrinsic directions via the tangent construction and projectors (e.g., \eqref{eq:tangent_proj}, \eqref{eq:canonical}), clips using the intrinsic norm \eqref{eq:global_clip}, and adds isotropic tangent noise \eqref{eq:dp_tangent_noise}; these operations are designed to depend on $Z_{\ell}$ rather than on a particular factorization, so gauge dependence should be strongly reduced (up to stochastic variability from DP noise).

\begin{figure}[t]
  \centering
  \includegraphics{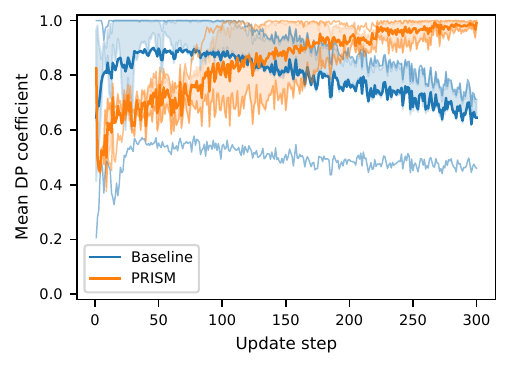}
  \caption{Over-time bands of $\mathrm{dp\_coef\_mean}$ across gauges (mean with IQR band; min/max lines).}
  \label{fig:pathology1_bands_coef}
\end{figure}

\noindent\textbf{Interpretation (\Cref{fig:pathology1_bands_coef}).}
This plot summarizes how the \emph{clipping coefficients} vary across gauges during training.
For baseline, the spread between min/max (and the IQR band) remains wide for most of training, indicating that the same DP configuration produces materially different clipping behavior depending on $(A_{\ell},B_{\ell})$’s gauge.
This matches the mechanism in \eqref{eq:gauge_norm}: changing $c$ reweights $\|g_{A,i}\|_F$ versus $\|g_{B,i}\|_F$, hence changes $s_i^{\mathrm{fact}}$ and pushes different gauges into different clipping regimes (larger/smaller $\alpha_i$).
For PRISM, $\mathrm{dp\_coef\_mean}$ concentrates near $1$ after the transient, and the across-gauge dispersion shrinks, consistent with clipping being controlled by the intrinsic norm $s_i$ in \eqref{eq:global_clip}, which depends on $\Delta Z_{i,\ell}$ rather than on factor scaling.
Importantly, the remaining non-zero dispersion is expected in finite runs because DP noise makes $\alpha_i$ and $\Delta Z_{\ell}$ stochastic, but PRISM’s variability is markedly smaller than baseline’s.

\begin{figure}[t]
  \centering
  \includegraphics{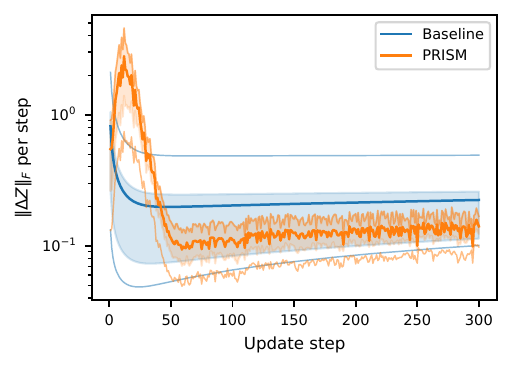}
  \caption{Over-time bands of realized intrinsic step magnitude $\|\Delta Z_t\|_F$ across gauges (mean with IQR band; min/max lines).}
  \label{fig:pathology1_bands_dz}
\end{figure}

\noindent\textbf{Interpretation (\Cref{fig:pathology1_bands_dz}).}
This figure measures the \emph{actual intrinsic update} applied to $Z_{\ell}$ at each step (computed from $Z_{t+1}-Z_t$), thus directly reflecting the DP perturbation that matters in the intrinsic space.
Baseline exhibits a persistent and relatively wide band across gauges: some gauges yield substantially larger $\|\Delta Z_t\|_F$ than others.
This is the operational manifestation of Issue-I: once $\alpha_i$ is gauge-dependent via \eqref{eq:gauge_norm}, the clipped-and-noised factor update implies a gauge-dependent induced update on $Z_{\ell}$, so $\|\Delta Z_t\|_F$ cannot be predicted from $Z_{\ell}$ alone.
PRISM shows a transient early phase (optimizer/moment warm-up plus DP stochasticity) and then stabilizes to a smaller, tighter band; this is consistent with intrinsic clipping \eqref{eq:global_clip} and tangent noise \eqref{eq:dp_tangent_noise} controlling the intrinsic step directly.
The remaining oscillations are natural: even a gauge-invariant \emph{distribution} will yield non-identical single-run trajectories under DP noise, but PRISM suppresses the systematic gauge effect visible in baseline.

\begin{figure}[t]
  \centering
  \includegraphics{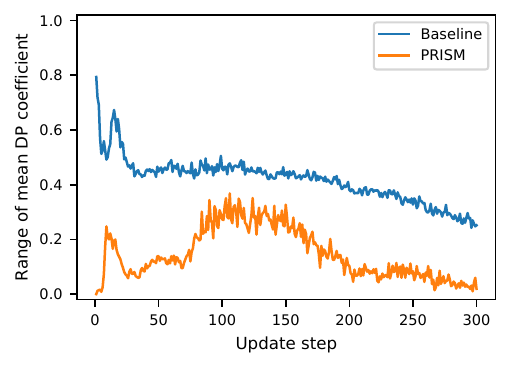}
  \caption{Gauge-sensitivity index for clipping: $\mathrm{range}_c(\mathrm{dp\_coef\_mean})$ over time.}
  \label{fig:pathology1_coef_range}
\end{figure}

\noindent\textbf{Interpretation (\Cref{fig:pathology1_coef_range}).}
We compress the multi-gauge experiment into a single diagnostic:
$\mathrm{range}_c(\mathrm{dp\_coef\_mean})=\max_c \mathrm{dp\_coef\_mean}(c)-\min_c \mathrm{dp\_coef\_mean}(c)$.
A gauge-invariant DP mechanism should make this range small (up to stochastic fluctuations).
Baseline remains high throughout training, directly supporting the theoretical failure mode: factor-space clipping depends on $c$ because of \eqref{eq:gauge_norm}, so the average clipping coefficient changes substantially across equivalent parameterizations.
PRISM yields a much smaller range after the initial transient, consistent with using the intrinsic norm \eqref{eq:global_clip} and tangent construction (\eqref{eq:tangent_proj}, \eqref{eq:canonical}) to decouple DP sensitivity control from gauge.

\begin{figure}[t]
  \centering
  \includegraphics{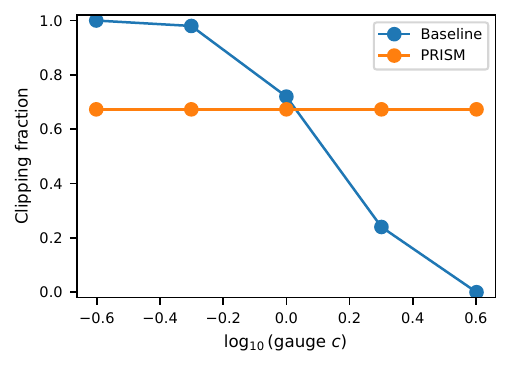}
  \caption{Discrete gauge sweep of $\mathrm{dp\_clip\_frac}$ at step 1.}
  \label{fig:pathology1_sweep1_clip}
\end{figure}

\noindent\textbf{Interpretation (\Cref{fig:pathology1_sweep1_clip}).}
This is the most direct “mechanism check” for Issue-I: the x-axis varies only the gauge $c$, while all intrinsic quantities are initially identical.
At step 1, baseline’s clipping fraction moves from almost fully clipped (small $c$) to almost never clipped (large $c$), i.e., a qualitative regime change triggered purely by reparameterization.
This matches \eqref{eq:gauge_norm}: for small $c$ the $c^{-2}\|g_{A,i}\|_F^2$ term can dominate, inflating norms and forcing $\alpha_i\ll1$; for larger $c$ the norm shrinks and clipping disengages.
PRISM is approximately flat across gauges, aligning with intrinsic clipping \eqref{eq:global_clip}: the clipping decision depends on $\|\Delta Z_{i,\ell}\|_F$ rather than on factor scaling.

\begin{figure}[t]
  \centering
  \includegraphics{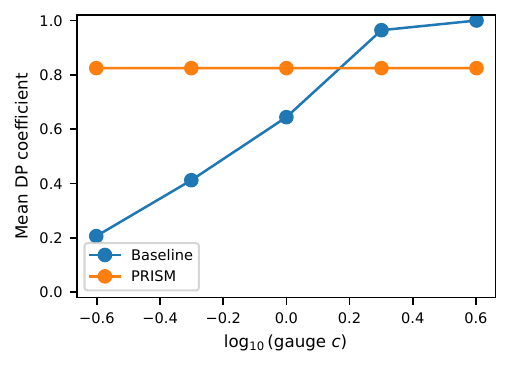}
  \caption{Discrete gauge sweep of $\mathrm{dp\_coef\_mean}$ at step 1.}
  \label{fig:pathology1_sweep1_coef}
\end{figure}

\noindent\textbf{Interpretation (\Cref{fig:pathology1_sweep1_coef}).}
The mean clipping coefficient $\mathrm{dp\_coef\_mean}$ is a smoother counterpart of \Cref{fig:pathology1_sweep1_clip}:
it directly measures the average shrinkage induced by DP clipping, $\alpha_i=\min\{1,C/s_i\}$.
Baseline increases monotonically with $c$ at step 1, showing that the same DP algorithm injects different effective shrinkage (hence different intrinsic update distributions) purely due to gauge, as predicted by \eqref{eq:gauge_norm}.
PRISM remains roughly constant across gauges, consistent with controlling sensitivity in intrinsic space \eqref{eq:global_clip} and therefore avoiding this reparameterization artifact.

\begin{figure}[t]
  \centering
  \includegraphics{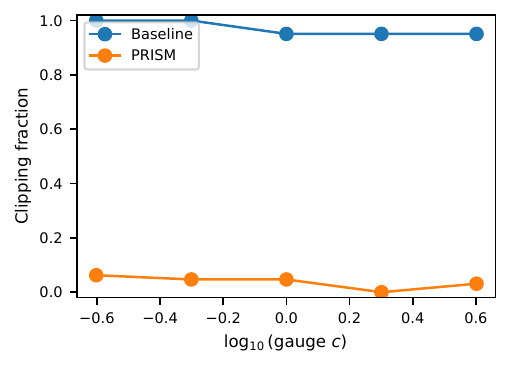}
  \caption{Discrete gauge sweep of $\mathrm{dp\_clip\_frac}$ at step 300.}
  \label{fig:pathology1_sweep300_clip}
\end{figure}

\noindent\textbf{Interpretation (\Cref{fig:pathology1_sweep300_clip}).}
By step 300, baseline’s $\mathrm{dp\_clip\_frac}$ is close to $1$ for all gauges, indicating a \emph{clipping-saturated} regime in factor space: most examples are clipped regardless of $c$.
This supports (but also partially limits) diagnostic interpretability: once clipping saturates, the mechanism becomes less sensitive to further changes in the factor norms, so a flatter curve here does \emph{not} imply gauge invariance.
In contrast, PRISM shows a near-zero clipping fraction at step 300 for all gauges, suggesting that (under the intrinsic threshold $C_{\mathrm{int}}$) optimization has entered a stable region where intrinsic per-example norms mostly lie below the clip bound in \eqref{eq:global_clip}.
Thus, the step-300 sweep is best viewed as confirming that late training can enter a stable/saturated regime, rather than as the primary evidence for Issue-I (which is better captured at step 1 and by \Cref{fig:pathology1_dz_range,fig:pathology1_coef_range}).

\begin{figure}[t]
  \centering
  \includegraphics{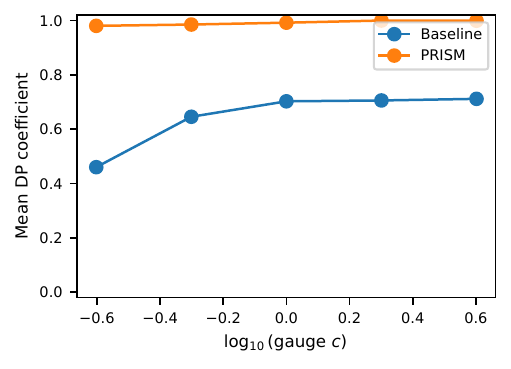}
  \caption{Discrete gauge sweep of $\mathrm{dp\_coef\_mean}$ at step 300.}
  \label{fig:pathology1_sweep300_coef}
\end{figure}

\noindent\textbf{Interpretation (\Cref{fig:pathology1_sweep300_coef}).}
Consistent with \Cref{fig:pathology1_sweep300_clip}, baseline’s $\mathrm{dp\_coef\_mean}$ still varies with gauge at step 300, but the variability is reduced relative to step 1 because clipping is already heavily engaged for all gauges (many $\alpha_i<1$).
This illustrates a subtle but important point: Issue-I is fundamentally about the \emph{mechanism’s dependence on reparameterization} (here seen sharply at step 1), and saturation can mask that dependence by collapsing the algorithm into an always-clipped regime.
PRISM’s $\mathrm{dp\_coef\_mean}$ concentrates near $1$ across gauges at step 300, consistent with intrinsic clipping being mostly inactive and the DP perturbation being governed primarily by tangent noise \eqref{eq:dp_tangent_noise} rather than by gauge-dependent shrinkage.
Together with the intrinsic step sensitivity in \Cref{fig:pathology1_dz_range}, these late-step diagnostics suggest PRISM’s intrinsic control yields a more stable intrinsic update distribution across equivalent factorizations.

\subsection{Additional diagnostics for Issue II }
\label{app:issue2_gauge_sweep}
\paragraph{Gauge sweep protocol.}
We snapshot the LoRA layer with the largest $S_t$ at the end of training and sweep the gauge
$(A_{\ell},B_{\ell})\mapsto(cA_{\ell},c^{-1}B_{\ell})$, which keeps $Z_{\ell}=A_{\ell}B_{\ell}^\top$ fixed.
We evaluate $\log_{10} c \in \mathrm{linspace}(-3,3,61)$
and draw $64$ Monte-Carlo samples per $c$; we plot the median and the 10--90\% band.

\begin{figure}[t]
  \centering
  \includegraphics{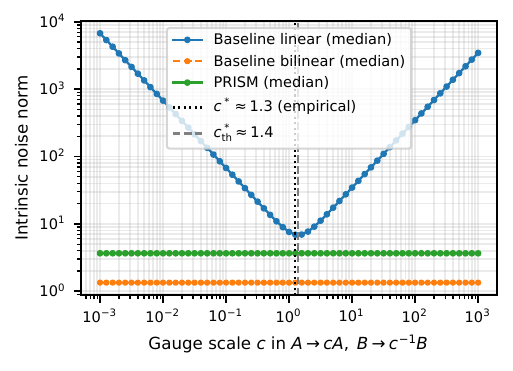}
  \caption{Gauge sweep at fixed $Z_{\ell}$: intrinsic-noise medians with 10--90\% band.}
  \label{fig:issue2_empirical_gauge_sweep_noise}
\end{figure}

\paragraph{What \Cref{fig:issue2_empirical_gauge_sweep_noise} tests.}
Issue~II predicts that factor-space DP can inject a \emph{gauge-dependent} intrinsic noise even when the intrinsic parameter
$Z_{\ell}=A_{\ell}B_{\ell}^\top$ (and thus the model function) is held fixed.
From Eq.~\eqref{eq:naive_energy}, the first-order intrinsic noise satisfies
\[
\E\|\xi_{A,\ell} B_{\ell}^\top + A_{\ell}\xi_{B,\ell}^\top\|_F^2=\tau^2\!\left(m\|B_{\ell}\|_F^2+n\|A_{\ell}\|_F^2\right),
\]
so under $(A_{\ell},B_{\ell})\mapsto(cA_{\ell},c^{-1}B_{\ell})$ the coefficient becomes
$S(c)=m c^{-2}\|B_{\ell}\|_F^2+n c^{2}\|A_{\ell}\|_F^2$, which is minimized at
$c_{\text{th}}=\left(\frac{m\|B_{\ell}\|_F^2}{n\|A_{\ell}\|_F^2}\right)^{1/4}$ and diverges as $c\to 0$ or $c\to\infty$.
PRISM instead samples isotropic tangent noise $P_{A_{\ell},B_{\ell}}(\Xi_{\ell})$, whose distribution depends only on the tangent projector,
and whose energy is controlled by the intrinsic dimension (cf.\ $\mathcal{E}_{Z_{\ell}}=(\sigma C/b)\sqrt{r(m+n-r)}$ in the main text).
Empirically, \Cref{fig:issue2_empirical_gauge_sweep_noise} matches this dichotomy:
the baseline (factor-space DP) curve varies by orders of magnitude across $c$ despite fixed $Z_{\ell}$,
while PRISM remains essentially flat up to Monte-Carlo variability.
The baseline “bilinear” component (from the $\eta\,\xi_{A,\ell}\xi_{B,\ell}^\top$ term in Eq.~\eqref{eq:bilinear_noise_prob})
is comparatively small and gauge-invariant, indicating that the dominant instability here comes from the \emph{linear} term’s gauge dependence.

\begin{figure}[t]
  \centering
  \includegraphics{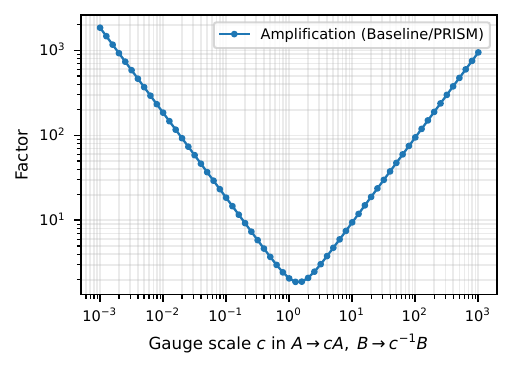}
  \caption{Gauge sweep: amplification factor (median baseline / median PRISM) vs.\ $c$.}
  \label{fig:issue2_gauge_sweep_amplification}
\end{figure}

\paragraph{Amplification factors.}
\Cref{fig:issue2_gauge_sweep_amplification} converts the sweep into an explicit amplification ratio.
Because PRISM’s intrinsic noise is gauge-invariant, the ratio inherits the V-shaped dependence of $S(c)$:
even benign reparameterizations that leave $Z_{\ell}$ unchanged can inflate factor-space intrinsic DP noise by large factors.
This complements the training-time observation in ~\Cref{fig:issue2_amp_vs_step}:
during optimization, the implicit gauge chosen by the optimizer already yields a consistent $>1$ amplification,
and the controlled gauge sweep shows that, in principle, the same model state (same $Z_{\ell}$) admits much larger effective intrinsic noise under factor-space DP.
Together with ~\Cref{fig:issue2_noise_sq_vs_St}, these results empirically validate Issue~II’s core claim:
\emph{factor-space DP produces gauge-dependent, potentially highly amplified intrinsic noise}, while PRISM keeps the intrinsic DP noise scale controlled and gauge-invariant.

\subsection{Additional Issue III diagnostics}\label{app:path3}

\paragraph{Protocols.}
(\textbf{Sigma sweep.})
For \Cref{fig:path3_sigma_precond,fig:path3_sigma_rawnoise,fig:path3_sigma_amp_vs_sigma},
we sweep $\epsilon\in\{1.5,3,6,12\}$ at fixed $(C,\delta)$, run $120$ optimizer steps,
discard the first $10$ steps as burn-in, and report means over the remaining steps.
The x-axis uses the \emph{realized} noise multiplier $\sigma$ returned by the privacy engine.
(\textbf{Step-wise diagnostics.})
For \Cref{fig:path3_precond_max,fig:path3_gram_stress,fig:path3_amp_over_time,fig:path3_scatter_amp_aggr,fig:path3_scatter_amp_gram},
we run $300$ steps at $\epsilon=3$ (hence $\sigma\approx 0.62$ in this setup).

\begin{figure}[t]
  \centering    \includegraphics{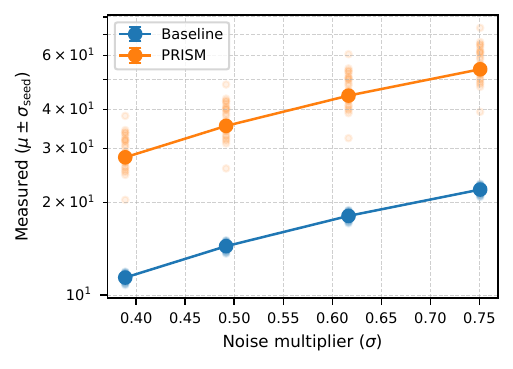}
  \caption{Mean raw intrinsic DP noise $\|\boldsymbol{\xi}_{\mathrm{intr}}\|_F$ vs.\ $\sigma$.}
  \label{fig:path3_sigma_rawnoise}
\end{figure}

\noindent\textbf{Analysis (raw noise scaling).}
For clipped DP-SGD-style noise, the injected intrinsic noise satisfies
$\boldsymbol{\xi}_{\mathrm{intr}}\sim\mathcal{N}\!\bigl(0,(\sigma C/b)^2\mathbf{I}\bigr)$ (up to the intrinsic parameterization),
so $\mathbb{E}\|\boldsymbol{\xi}_{\mathrm{intr}}\|_F$ should grow approximately linearly with $\sigma$ at fixed clipping norm $C$ and batch size $b$.
\Cref{fig:path3_sigma_rawnoise} matches this expectation for both methods, serving as a sanity check that (i) the privacy engine responds correctly to the
$\epsilon$ sweep and (ii) our measurement pipeline is consistent.
Notably, PRISM exhibits a larger \emph{raw} intrinsic noise norm than factor-space DP-AdamW; this is expected because PRISM injects noise directly in the
intrinsic update space, whereas factor-space perturbations are first applied to the LoRA factors and then mapped into the intrinsic update,
which can reduce the resulting $\|\boldsymbol{\xi}_{\mathrm{intr}}\|_F$ via the low-rank geometry.

\begin{figure}[t]
  \centering    \includegraphics{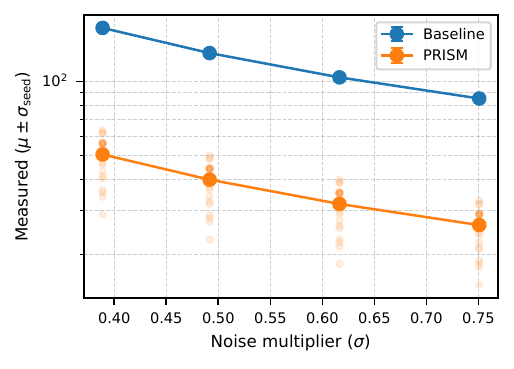}
  \caption{Mean amplification $\|\mathsf{P}^{-1/2}\boldsymbol{\xi}\|_F/\|\boldsymbol{\xi}\|_F$ vs.\ $\sigma$.}
  \label{fig:path3_sigma_amp_vs_sigma}
\end{figure}

\noindent\textbf{Analysis (noise normalization + reduced amplification).}
Define the amplification factor
$a(\sigma)\equiv \mathbb{E}\bigl[\|\mathsf{P}^{-1/2}\boldsymbol{\xi}_{\mathrm{intr}}\|_F/\|\boldsymbol{\xi}_{\mathrm{intr}}\|_F\bigr]$,
which isolates how much the preconditioner scales DP noise in Eq.~\eqref{eq:precond_dp}.
A key prediction of Issue~III is \emph{noise normalization}: when the second-moment estimator is dominated by DP noise,
$\mathbf{V}\propto\sigma^2$ so $(\mathbf{V}+\lambda\mathbf{I})^{-1/2}\propto 1/\sigma$, and the \emph{preconditioned} noise becomes nearly
$\sigma$-invariant (Prop.~\ref{prop:noise_normalization}).
This is exactly what \Cref{fig:path3_sigma_precond} shows; combining it with the near-linear growth of raw noise in \Cref{fig:path3_sigma_rawnoise}
implies $a(\sigma)$ should decrease roughly as $1/\sigma$, which is what \Cref{fig:path3_sigma_amp_vs_sigma} exhibits.
Crucially, PRISM’s amplification is much smaller across $\sigma$ (roughly $18$--$50$ vs.\ $90$--$275$ for DP-AdamW),
supporting the claim that DP-aware floors (Eq.~\eqref{eq:noise_floor}) and the bound in Eq.~\eqref{eq:precond_bound}
control worst-case scaling of DP noise under adaptive preconditioning.

\begin{figure}[t]
  \centering    \includegraphics{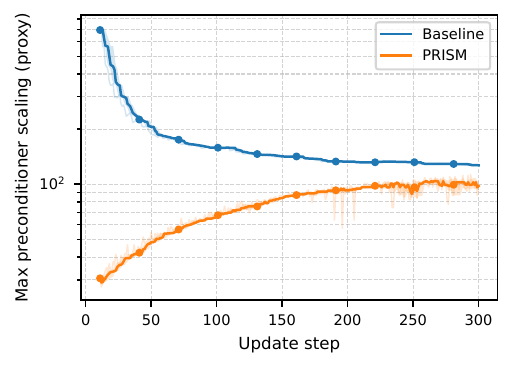}
  \caption{Preconditioner aggressiveness (proxy) over training steps.}
  \label{fig:path3_precond_max}
\end{figure}

\noindent\textbf{Analysis (why amplification can be large).}
\Cref{fig:path3_precond_max} plots a ``max scaling'' proxy for preconditioning strength, roughly corresponding to the largest coordinate-wise scaling
(e.g., $\max_i(\widehat v_i+\epsilon_{\text{adam}})^{-1/2}$ for Adam-like diagonals, and an operator-norm proxy for low-rank preconditioners).
This quantity upper-bounds how much a preconditioner can magnify any input vector, and thus should track amplification of DP noise.
DP-AdamW begins with extremely large aggressiveness (orders of magnitude larger than PRISM) and only gradually decays,
which is consistent with Issue~III: early noisy second-moment estimates can have very small entries/eigenvalues, yielding very large inverse-square-root scaling.
PRISM stays in a much tighter range (tens to $\sim\!100$), consistent with explicitly enforcing a noise-calibrated floor (Eq.~\eqref{eq:noise_floor}),
which prevents the smallest eigenvalues from collapsing and keeps the effective scaling bounded as suggested by Eq.~\eqref{eq:precond_bound}.

\begin{figure}[t]
  \centering
\includegraphics{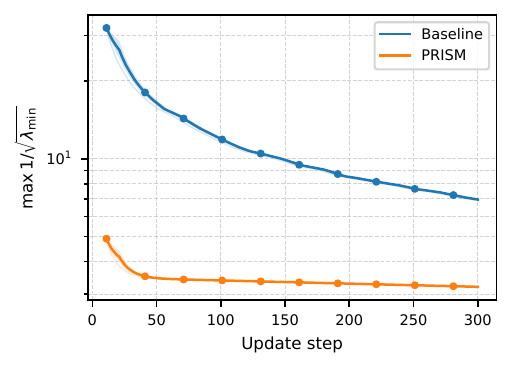}
  \caption{Low-rank numerics stress: $\max\|\mathbf{M}^{-1/2}\|_2$ (Gram proxy).}
  \label{fig:path3_gram_stress}
\end{figure}

\noindent\textbf{Analysis (ill-conditioning in the low-rank core).}
Issue~III also has a \emph{numerical} face: when preconditioning is implemented through low-rank structure, stability is controlled by the smallest eigenvalues
of the relevant Gram/second-moment objects.
Let $\mathbf{M}$ denote the (measured) Gram proxy in the low-rank core; then
$\|\mathbf{M}^{-1/2}\|_2 = 1/\sqrt{\lambda_{\min}(\mathbf{M})}$, so large values indicate severe ill-conditioning and stress both optimization and numerics.
\Cref{fig:path3_gram_stress} shows DP-AdamW exhibits substantially higher stress (large inverse-square-root operator norms) throughout training,
whereas PRISM remains in a low-stress regime.
This supports the theoretical motivation behind DP-aware floors and condition-number control: by preventing near-singular directions in the low-rank core,
PRISM reduces the regimes in which Eq.~\eqref{eq:precond_bound} would otherwise allow very large scaling (small effective $\lambda$).

\begin{figure}[t]
  \centering
\includegraphics{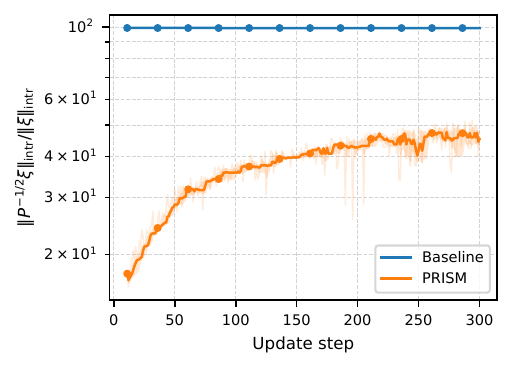}
  \caption{Amplification over training steps: $\|\mathsf{P}^{-1/2}\boldsymbol{\xi}\|/\|\boldsymbol{\xi}\|$.}
  \label{fig:path3_amp_over_time}
\end{figure}

\noindent\textbf{Analysis (direct evidence for Issue~III and PRISM mitigation).}
\Cref{fig:path3_amp_over_time} tracks the realized amplification
$\|\mathsf{P}_t^{-1/2}\boldsymbol{\xi}_{t,\mathrm{intr}}\|_F/\|\boldsymbol{\xi}_{t,\mathrm{intr}}\|_F$ during training.
Under adaptive preconditioning, this factor can be large when (i) the second-moment (or Gram) has tiny eigenvalues and/or (ii) the preconditioner becomes overly aggressive,
precisely the failure mode summarized by Issue~III.
Empirically, DP-AdamW sits near a large constant amplification ($\sim\!10^2$) over the entire run, which explains why its effective update noise
(after preconditioning) can remain large even when the raw intrinsic noise is comparatively small (\Cref{fig:path3_sigma_rawnoise}).
PRISM’s amplification is materially smaller (tens rather than hundreds) and evolves smoothly, consistent with a preconditioner whose smallest eigenvalues are protected by
noise-aware floors (Eq.~\eqref{eq:noise_floor}) and whose worst-case scaling is constrained in the sense of Eq.~\eqref{eq:precond_bound}.
Together with \Cref{fig:path3_precond_max,fig:path3_gram_stress}, this plot provides direct empirical support that PRISM mitigates the amplification aspect of Issue~III.

\begin{figure}[t]
  \centering
  \includegraphics{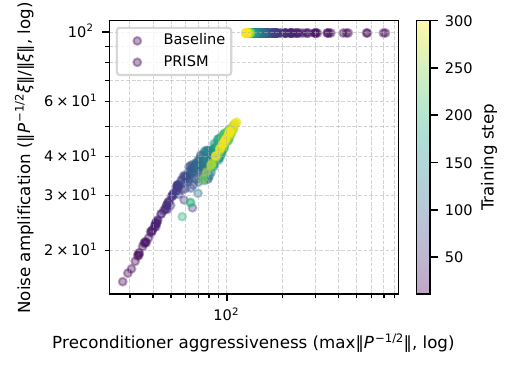}
  \caption{Amplification vs.\ aggressiveness (color = step).}
  \label{fig:path3_scatter_amp_aggr}
\end{figure}

\noindent\textbf{Analysis (mechanism: amplification is controlled by scaling).}
\Cref{fig:path3_scatter_amp_aggr} relates amplification to the max-scaling proxy.
% In general, for any linear preconditioner $\mathbf{A_{\ell}}$,
% $\|\mathbf{A_{\ell}}\boldsymbol{\xi}\|/\|\boldsymbol{\xi}\|$ concentrates between the singular values of $\mathbf{A_{\ell}}$;
In general, for any linear preconditioner $\mathbf{H_{\ell}}$,
$\|\mathbf{H_{\ell}}\boldsymbol{\xi}\|/\|\boldsymbol{\xi}\|$ concentrates between the singular values of $\mathbf{H_{\ell}}$;
Thus a max-scaling (operator-norm) proxy should strongly correlate with realized amplification.
PRISM exhibits this expected monotonic relationship: as the preconditioner becomes more aggressive over training (larger x),
the measured amplification (y) rises accordingly, and the color gradient shows this evolution over steps.
DP-AdamW, in contrast, occupies a regime with much larger aggressiveness yet saturates at a high amplification level,
suggesting the run spends most of its time near a hard constraint (e.g., clipping/conditioning caps) rather than smoothly trading off scaling.
This plot supports the interpretation that PRISM’s improvements are driven by controlling the preconditioner’s effective scaling,
exactly the control knob targeted by Eq.~\eqref{eq:noise_floor} and bounded by Eq.~\eqref{eq:precond_bound}.

\begin{figure}[t]
  \centering
  \includegraphics{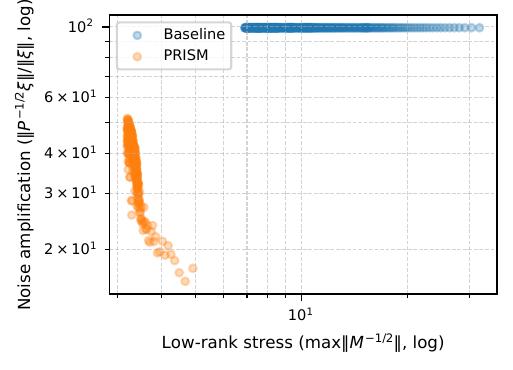}
  \caption{Amplification vs.\ low-rank stress (Gram proxy).}
  \label{fig:path3_scatter_amp_gram}
\end{figure}

\noindent\textbf{Analysis (mechanism: PRISM de-couples amplification from ill-conditioning).}
\Cref{fig:path3_scatter_amp_gram} links amplification to low-rank ill-conditioning via the Gram proxy stress $\|\mathbf{M}^{-1/2}\|_2$.
Without safeguards, increasing stress (smaller $\lambda_{\min}(\mathbf{M})$) would typically increase amplification because inverse-square-root operations
magnify components in near-null directions.
DP-AdamW concentrates in a high-stress regime, consistent with \Cref{fig:path3_gram_stress}, while maintaining a high amplification level.
PRISM stays in a low-stress regime and, importantly, shows that when stress grows, amplification does not explode; rather,
amplification can even decrease as safeguards activate (floors/clamps effectively reduce the preconditioner’s usable gain in ill-conditioned regimes).
This is the intended behavior of Issue~III mitigation: the algorithm should \emph{avoid} coupling worst-case scaling to unstable low-rank directions,
in line with the floor-based control in Eq.~\eqref{eq:noise_floor} and the bound in Eq.~\eqref{eq:precond_bound}.

\end{document}